\newtheoremstyle{mystyle}
  {}
  {}
  {\itshape}
  {}
  {\bfseries}
  {.}
  { }
  {\thmname{#1}\thmnumber{ #2}\thmnote{ (#3)}}
\theoremstyle{mystyle}
\newtheorem{theorem}{Theorem}
\newtheorem{corollary}[theorem]{Corollary}
\newtheorem{definition}[theorem]{Definition}
\newtheorem{proposition}[theorem]{Proposition}
\newtheorem{remark}[theorem]{Remark}
\newcommand{\bdmath}{\begin{dmath}}
\newcommand{\edmath}{\end{dmath}}
\newcommand{\beq}{\begin{equation}}
\newcommand{\eeq}{\end{equation}}
\newcommand{\bdm}{\begin{displaymath}}
\newcommand{\edm}{\end{displaymath}}
\newcommand{\bea}{\begin{eqnarray}}
\newcommand{\eea}{\end{eqnarray}}
\newcommand{\beal}{\beq \begin{array}{ll}}
\newcommand{\eeal}{\end{array} \eeq}
\newcommand{\beas}{\begin{eqnarray*}}
\newcommand{\eeas}{\end{eqnarray*}}
\newcommand{\ba}{\begin{array}}
\newcommand{\ea}{\end{array}}
\newcommand{\bit}{\begin{itemize}}
\newcommand{\eit}{\end{itemize}}
\newcommand{\ben}{\begin{enumerate}}
\newcommand{\een}{\end{enumerate}}
\newcommand{\setA}{\textsf{A}}
\newcommand{\setB}{\textsf{B}}
\newcommand{\setE}{\textsf{E}}
\newcommand{\setF}{\textsf{F}}
\newcommand{\setH}{\textsf{H}}
\newcommand{\setL}{\textsf{L}}
\newcommand{\setS}{\textsf{S}}
\newcommand{\setal}{~\emph{et~al.}\xspace}
\newcommand{\M}[1]{{\bm #1}} 
\renewcommand{\boldsymbol}[1]{{\bm #1}}
\newcommand{\hide}[1]{}
\newcommand{\grayText}[1]{{\color{gray} \text{#1} }}
\newcommand{\hiddenText}{{\color{gray} hidden text.}}
\newcommand{\hideWithText}[1]{\hiddenText}
\newcommand{\subject}{\text{ subject to }}
\newcommand{\E}{{\mathbb{E}}}
\newcommand{\prob}[1]{{\mathbb P}\left(#1\right)}
\newcommand{\tran}{^{\mathsf{T}}}
\newcommand{\e}{{\mathrm e}}
\newcommand{\inv}{^{-1}}
\newcommand{\zero}{{\mathbf 0}}
\newcommand{\eye}{{\mathbf I}}
\newcommand{\vect}[1]{\left[\begin{array}{c}  #1  \end{array}\right]}
\newcommand{\matTwo}[1]{\left[\begin{array}{cc}  #1  \end{array}\right]}
\newcommand{\Real}[1]{ { {\mathbb R}^{#1} } }
\newcommand{\MA}{\M{A}}
\newcommand{\MB}{\M{B}}
\newcommand{\MC}{\M{C}}
\newcommand{\ME}{\M{E}}
\newcommand{\MM}{\M{M}}
\newcommand{\MN}{\M{N}}
\newcommand{\MP}{\M{P}}
\newcommand{\MQ}{\M{Q}}
\newcommand{\MR}{\M{R}}
\newcommand{\MF}{\M{F}}
\newcommand{\MOmega}{\M{\Omega}}
\newcommand{\va}{\boldsymbol{a}} 
\newcommand{\vb}{\boldsymbol{b}}
\newcommand{\vp}{\boldsymbol{p}}
\newcommand{\vu}{\boldsymbol{u}}
\newcommand{\vv}{\boldsymbol{v}}
\newcommand{\vt}{\boldsymbol{t}}
\newcommand{\vxx}{\boldsymbol{x}} 
\newcommand{\vw}{\boldsymbol{w}}
\newcommand{\vnu}{\boldsymbol{\nu}}
\newcommand{\vmu}{\boldsymbol{\mu}}
\newcommand{\scenario}[1]{{\smaller \sf#1}\xspace}
\newcommand{\CVX}{\scenario{CVX}}
\newcommand{\MOSEK}{\scenario{MOSEK}}
\newcommand{\cvx}{{\sf cvx}\xspace}
\newcommand{\blue}[1]{{\color{blue}#1}}
\newcommand{\linkToPdf}[1]{\href{#1}{\blue{(pdf)}}}
\newcommand{\linkToPpt}[1]{\href{#1}{\blue{(ppt)}}}
\newcommand{\linkToCode}[1]{\href{#1}{\blue{(code)}}}
\newcommand{\linkToWeb}[1]{\href{#1}{\blue{(web)}}}
\newcommand{\linkToVideo}[1]{\href{#1}{\blue{(video)}}}
\newcommand{\award}[1]{\xspace} 
\newcommand{\pos}{\boldsymbol{p}}
\newcommand{\meas}{\boldsymbol{z}} 
\newcommand{\world}{\text{\tiny{W}}}
\newcommand{\imu}{\text{\tiny{IMU}}}
\newcommand{\prior}{\text{\tiny{PRIOR}}}
\newcommand{\cam}{{\scriptsize \text{cam}}}
\newcommand{\meters}{\rm{m}}
\renewcommand{\P}{\mathtt{t}}
\newcommand{\V}{\mathtt{v}}
\newcommand{\acc}{\mathbf{a}}
\newcommand{\bias}{\mathbf{b}}
\newcommand{\gravity}{\mathbf{g}}
\newcommand{\noise}{\boldsymbol\eta}
\newcommand{\half}{\frac{1}{2}}
\newcommand{\nrVisible}{n_\ell}
\newcommand{\myZ}{\zero}
\newcommand{\myparagraph}[1]{{\bf #1.}}
\newcommand{\mysubparagraph}[1]{\emph{#1.}}
\newcommand{\journalVersion}[1]{}
\newcommand{\hor}{H}
\newcommand{\feigMin}{f_{\lambda}}
\newcommand{\flogDet}{f_{\det}}
\newcommand{\lambdaMax}{\lambda_{\max}}
\newcommand{\lambdaMin}{\lambda_{\min}}
\newcommand{\logdet}{\log\det}
\renewcommand{\prob}[1]{\text{Prob}(#1)}
\renewcommand{\E}[1]{\mathbb{E}\left[#1\right]}
\newcommand{\barMOmega}{\bar{\MOmega}}
\renewcommand{\P}{\vt}
\renewcommand{\V}{\vv}
\newcommand{\B}{\bias}
\renewcommand{\pos}[1]{\P_{#1}}
\newcommand{\vel}[1]{\V_{#1}}
\newcommand{\biasAcc}[1]{\vb_{#1}}
\newcommand{\Rot}[1]{\MR_{#1}}
\renewcommand{\acc}[1]{\va_{#1}}
\newcommand{\measAcc}[1]{\tilde\va_{#1}}
\newcommand{\noiseAcc}[1]{\noise_{#1}}
\renewcommand{\k}{k}
\renewcommand{\j}{j}
\newcommand{\ind}{i}
\newcommand{\indd}{h}
\newcommand{\Deltak}{\delta}
\newcommand{\Deltaij}{\delta_{\k\j}}
\newcommand{\DeltaijSq}{\hat\delta^2_{\k\j}}
\renewcommand{\c}{\k} 
\renewcommand{\l}{l} 
\newcommand{\p}{\vp} 
\newcommand{\cl}{_{\c\l}} 
\newcommand{\ucl}{\vu_{\c\l}}
\newcommand{\measImu}[1]{\meas^\imu_{#1}}
\newcommand{\noiseImu}[1]{\noise^\imu_{#1}}
\newcommand{\MOmegaImu}[1]{\MOmega^\imu_{#1}}
\newcommand{\sigmaImu}{\sigma_\imu}
\newcommand{\cov}{\text{cov}}
\newcommand{\measU}[1]{\meas^\cam_{#1}}
\newcommand{\noiseu}[1]{\noise^\cam_{#1}}
\newcommand{\setSopt}{\setS^\star}
\newcommand{\setSgreedy}{\setS^\#}
\newcommand{\setScvx}{\setS^\circ}
\newcommand{\extra}[1]{{\color{black}#1}}
\newcommand{\Hz}{\text{Hz}}
\renewcommand{\sec}{\text{s}}
\newcommand{\vmubar}{\bar\vmu}
\newcommand{\vxxest}{\hat{\vxx}}
\newcommand{\edited}[1]{{#1}\xspace} 
\newcommand{\editedTwo}[1]{{#1}\xspace} 
\title{\huge{Attention and Anticipation in Fast Visual-Inertial Navigation}}
\author{Luca Carlone and Sertac Karaman
\thanks{L.\,Carlone and S.\,Karaman are with the Laboratory for 
Information \& Decision Systems (LIDS), Massachusetts Institute of Technology, Cambridge, MA, USA, 
{\sf \{lcarlone,sertac\}@mit.edu}
}}
\begin{document}

\maketitle



\begin{abstract} 
We study a \emph{Visual-Inertial Navigation} (VIN) problem in which a robot needs to estimate
its state using an on-board camera and an inertial sensor, without any prior knowledge of the 
external environment. 
We consider the case in which 
the robot can allocate limited resources to 
VIN, due to tight computational constraints. 
Therefore, we answer the following question: under limited resources, what are the 
most relevant visual cues to maximize the performance of visual-inertial navigation?
Our approach has four key ingredients. First, it is \emph{task-driven}, in that the 
selection of the visual cues is guided by a metric quantifying the VIN performance.
Second, it exploits the notion of \emph{anticipation}, since it uses a simplified model 
for forward-simulation of robot dynamics, predicting the utility of a set of visual cues 
over a future time horizon. 
Third, it is \emph{efficient and easy to implement}, since it leads to a greedy algorithm
for the selection of the most relevant visual cues.
Fourth, it provides \emph{formal performance guarantees}: we leverage submodularity to prove 
that the greedy selection cannot be far from the optimal (combinatorial) selection.
\editedTwo{Simulations and real experiments on agile drones show that our approach 
ensures state-of-the-art VIN performance while maintaining a lean processing time}. 
In the easy scenarios, our 
approach outperforms \editedTwo{appearance-based feature selection} in terms of localization errors. 
In the most challenging scenarios, 
 it enables accurate visual-inertial navigation 
  while \editedTwo{appearance-based feature selection} fails to track robot's motion during aggressive maneuvers. 
\end{abstract}

\begin{tikzpicture}[overlay, remember picture]
\path (current page.north east) ++(-4,-0.2) node[below left] {
This paper is conditionally accepted for publication in the IEEE Transaction on Robotics.
};
\end{tikzpicture}
\begin{tikzpicture}[overlay, remember picture]
\path (current page.north east) ++(-1.4,-0.6) node[below left] {
 Please cite the conference version as:
 L. Carlone and S. Karaman, 
``Attention and anticipation in fast visual-inertial navigation''
};
\end{tikzpicture}
\begin{tikzpicture}[overlay, remember picture]
\path (current page.north east) ++(-4.9,-1) node[below left] {
IEEE Intl. Conf. on Robotics and Automation (ICRA), pp. 3886-3893, 2017.
};
\end{tikzpicture}
\vspace{-5mm}

\section*{Supplementary Material}
\begin{itemize}
  \item Video: {\small \url{https://www.youtube.com/watch?v=uMLXNRiVuyU}}
\end{itemize}


\section{Introduction}
\label{sec:intro}

The human brain 
can extract conceptual information from an image in 
a time lapse as short as $13$ ms~\cite{Potter14app-attention}. 
One has proof of the human's capability to seamlessly process large amount of sensory data in everyday tasks,
including driving a car on a highway, or walking on a crowded street.
In the cognitive science literature, there is agreement on the fact that 
efficiency in processing the large amount of data we are confronted with is due to our ability to prioritize  
some aspects of the visual scene, while ignoring others~\cite{Carrasco11vr-attentionSurvey}.
One can imagine that sensory inputs compete to have access to the limited computational resources of our brain.
These resource constraints are dictated by the fixed amount of energy available to the brain as well as 
time constraints imposed by time-critical tasks.
\emph{Visual attention} is the cognitive process that allows humans to parse a large amount of visual data 
by selecting relevant information and filtering out irrelevant stimuli,
so to maximize performance\footnote{This definition oversimplifies the attention mechanisms in humans. 
While 
the role of attention is to optimally allocate resources to maximize performance, 
it is known that some involuntary attention mechanisms can actually hinder the correct execution of a task~\cite{Carrasco11vr-attentionSurvey}.} under limited resources. 

\mysubparagraph{Robots vs. humans} The astonishing progress in robotics and computer vision over the last three decades 
might induce us to ask: how far is robot perception from human performance? 
Let us approach this question by looking at the 
state of the art in visual processing for different tasks. 
Without any claim to be exhaustive, we 
consider few representative papers (sampled 
over the last \edited{3} years) and we only look at timing performance.
A state-of-the-art approach for object detection~\cite{Redmon16cvpr} detects objects in a scene
 in 22ms on a Titan X GPU. 
A high-performance approach for stereo reconstruction~\cite{Pillai16icra} builds a triangular mesh of a 3D scene in  
10-100ms on a single CPU (at resolution $800\times 600$).
A state-of-the-art vision-based SLAM approach~\cite{Mur-Artal15tro} requires around 400ms for local mapping and motion tracking 
and more than 1s for global map refinement (CPU, multiple cores).
The reader may notice that for each task, in isolation, modern algorithms require more time than 
what a human needs to parse an entire scene. Arguably, while a merit of the robotics 
and computer vision communities has been to push performance in each task, we 
are quite far from a computational model in which all these tasks
 (pose estimation, geometry reconstruction, scene understanding) are concurrently 
 executed in the blink of an eye. 

\mysubparagraph{Efficiency via general-purpose computing} One might argue that catching up with human efficiency is only a matter of time: according to Moore's law, 
the available computational power grows at exponential rate, hence we 
only need to wait for more powerful computers. An analogous argument would suggest that using GPU rather than CPU
would boost performance in some of the tasks mentioned above. 
By comparison with human performance, we realize that this argument is not completely accurate.
While it is true that we can keep increasing the computational resources to meet given 
time constraints (i.e., enable faster processing of sensory data), the increase in computation implies an 
increase in energy consumption; for instance, a Titan X GPU has a nominal power consumption of 250W~\cite{TitanXwebsite} 
while a Core i7 CPU has a power consumption as low as 11W~\cite{processorsWebsite}.
On the other hand, human processing constantly deals with limited time and energy constraints, and 
is parsimonious in allocating only the resources that are necessary to accomplish its goals.

\mysubparagraph{Efficiency via specialized computing} Another potential alternative to 
enable high-rate low-power perception 
and bridge the gap between  human and robot perception 
is to design specialized hardware for machine perception. As extensively discussed 
in our previous work~\cite{Zhang17rss-vioChip}, algorithms and hardware co-design allows minimizing resource utilization by exploiting a tight-integration of algorithms and specialized hardware, 
and leveraging opportunities (e.g., pipelining, low-cost arithmetic) provided by ASICs 
(Application-Specific Integrated Circuits) and FPGAs 
(Field-Programmable Gate Arrays). While we have shown that using specialized hardware for VIN 
leads to a reduction of the power consumption of 1-2 orders of magnitude (with comparable performance), 
three main observations motivate the present work.
First, the development of specialized hardware for perception is an expensive and time-consuming process 
and the resulting hardware is difficult to upgrade. 
Second, rather than designing optimized hardware that can meet given performance requirements, 
it may be desirable to develop a framework that can systematically trade-off performance for 
computation, hence more flexibly adjusting to the available, possibly time-varying, computational resources and performance requirements. 
Third, extensive biological evidence suggests that efficient perception requires both 
specialized circuitry (e.g., visual perception in humans is carried out by highly specialized areas of the 
brain~\cite{Cavanagh11vr-visualCognition}) and a mechanism to prioritize stimuli (i.e., visual attention~\cite{Carrasco11vr-attentionSurvey}).

\myparagraph{Contribution}
\edited{In this paper, we investigate how to speed-up computation (or, equivalently, reduce the computational effort)
 in visual navigation by prioritizing sensor data in a task-dependent fashion.
In particular, we focus on a motion estimation task, \emph{Visual-Inertial Navigation} (VIN), 
and consider the case in which,}
due to constraints on the on-board computation, a robot can only use a small number of visual features in the 
environment to support motion estimation. We then design a visual attention mechanism that selects a suitable 
set of visual features to maximize localization accuracy; our general framework is presented in~\prettyref{sec:attention}. 

Our approach is task-driven: 
\edited{we consider a motion estimation task (VIN), and our approach} selects features that maximize a task-dependent performance metric, that 
we present in~\prettyref{sec:metrics}.
Contrarily to the literature on visual feature selection, we believe that the utility of a feature is not an intrinsic 
property of the feature itself (e.g., appearance), but it rather stems from the intertwining 
of the environment and the observer state. Our approach seamlessly captures both visual saliency and the 
task-dependent utility of a set of features.

Our attention mechanism is predictive in nature:
 when deciding which feature is more useful, our approach
performs fast forward-simulations of the state of the robot leading to a feature selection that is aware 
of the dynamics and the ``intentions'' of the robot.
The forward simulation is based on a simplified model 
which we present in~\prettyref{sec:estimationModel}. 

Since the \editedTwo{optimal allocation of the resources is} a hard combinatorial problem, 
in~\prettyref{sec:algorithmsAndGuarantees} 
we present a greedy algorithm for attention allocation. 
In the same section, 
we leverage recent results on submodularity 
to provide formal performance guarantees for the greedy algorithm.
 \prettyref{sec:algorithmsAndGuarantees}  also reviews related techniques
based on convex relaxations.

\prettyref{sec:experiments} provides an experimental evaluation of the proposed approach. The results confirm that our 
approach can boost performance in standard VIN pipelines and enables accurate navigation 
under agile motions and strict resource constraints.
The proposed approach largely outperforms appearance-based feature selection methods, and 
drastically reduces the computational time required by the VIN back-end.

This paper extends the preliminary results presented in~\cite{Carlone17icra-vioAttention}. 
In particular, the discussion on convex relaxations for features selection (\prettyref{sec:convexRelax} and \prettyref{sec:convexRelaxGuarantees}), 
the performance guarantees of \prettyref{prop:nonVanishingRatio}, 
the simulation results of \prettyref{sec:easySim}, and 
the experimental evaluation on the 11 \emph{EuRoC} datasets~\cite{Burri16ijrr-eurocDataset} 
are novel and have not been previously published.


\section{Related Work}
\label{sec:relatedWork}

\extra{This work intersects several lines of research across  fields.}

\myparagraph{Attention and Saliency in  Neuroscience and Psychology}  
 Attention is a central topic in human and animal vision research with more than 2500 papers published since 
 the 1980s~\cite{Carrasco11vr-attentionSurvey}. While a complete coverage is outside the scope of this work, we 
review few basic concepts, using the surveys of Carrasco~\cite{Carrasco11vr-attentionSurvey}, 
Borji and Itti~\cite{Borji13pami-surveyAttention}, Scholl~\cite{Scholl01cog-attentionSurvey},
and the work of Caduff and Timpf~\cite{Caduff08cp-attention} as main references.
Scholl~\cite{Scholl01cog-attentionSurvey} defines attention as the discrimination of sensory stimuli, and the allocation of 
 limited resources to competing attentional demands. Carrasco~\cite{Carrasco11vr-attentionSurvey} identifies 
three types of attention: \emph{spatial}, \emph{feature-based}, and \emph{object-based}. Spatial attention prioritizes 
different locations of the scene by moving the eyes towards a specific location (\emph{overt} attention)
or by focusing on relevant locations within the field of view (\emph{covert} attention).
Feature-based attention prioritizes the detection of a specific feature (color, motion direction, orientation) 
independently on its location. Object-based attention prioritizes specific objects.
In this work, we are mainly interested in covert spatial attention: which locations in the field of view are 
the most informative for navigation?  Covert attention in humans is a combination of voluntary and involuntary 
mechanisms that guide the processing of visual stimuli at given locations in the scene~\cite{Carrasco11vr-attentionSurvey}.
Empirical evidence shows that attention is task-dependent in both primates and humans~\cite{Caduff08cp-attention,Moran85science-attention}.
Borji and Itti~\cite{Borji13pami-surveyAttention} explicitly capture this aspect by distinguishing bottom-up and top-down attention models;
in the former the attention is captured by visual cues (stimulus-driven), while in the latter the attention is guided by the 
goal of the observer. 
Caduff and Timpf~\cite{Caduff08cp-attention} study landmark saliency in human navigation and conclude that 
saliency stems from the intertwining of intrinsic property of a landmark (e.g., appearance) 
and the state of the observer  (e.g., prior knowledge, observation pose). 
Another important aspect, that traces back to the \emph{guided search theory} of Wolfe~\cite{Wolfe94pbr-attention} and Spekreijse~\cite{Spekreijse00vr-attention}, 
is the distinction between pre-attentive and attentive visual processes. 
Pre-attentive processes handle \emph{all} incoming sensory data in parallel; then, attentive processes only 
work on a filtered-out-version of the data, which the brain deems more relevant. 
%
%
General computational models for attention are reviewed in~\cite{Borji13pami-surveyAttention}, including Bayesian models, 
graph-theoretic, and information-theoretic formulations. 
\myparagraph{Feature Selection in Robotics and Computer Vision}
The idea of enhancing performance in visual SLAM 
and visual odometry via active feature selection is not novel. 
Sim and Dudek~\cite{Sim99iccv-featureSelection} and Peretroukhin\setal~\cite{Peretroukhin15iros-predictiveVIO} 
use training data to learn a model of the quality of the visual features. 
Each feature is mapped from  a hand-crafted predictor space to a scalar weight that quantifies 
its usefulness for pose estimation; 
in~\cite{Peretroukhin15iros-predictiveVIO}  the weights are then used to rescale the measurement covariance 
of each observation.
 Ouerhani\setal~\cite{Ouerhani05ecmr-attention} construct a topological map using attentional landmarks.
Newman and Ho~\cite{Newman05icra-salientFeatures} consider a robot equipped with camera and laser range finder 
and perform feature selection using an appearance-based notion of visual saliency.
Sala\setal~\cite{Sala06tro-landmarkSelection} use a co-visibility criterion to select good landmarks that are visible from multiple viewpoints.
Siagian and Itti~\cite{Siagian07iros-attention} investigate a \edited{bio-inspired} attention model 
within Monte Carlo localization.
Frintrop and Jensfelt~\cite{Frintrop08tro-attentionLandmarks} use an 
attention framework for 
 landmark selection and active gaze control; feature selection is based on the VOCUS 
 model~\cite{Frintrop08tro-attentionLandmarks}, which includes 
  a bottom-up attentional system (which computes saliency from the feature appearance), and can incorporate
  a top-down mechanism (which considers task performance). Active gaze control, instead, is obtained as the 
  combination of three behaviors: landmark redetection, landmark tracking, and exploration of new areas.
Hochdorfer and Schlegel~\cite{Hochdorfer09icra-landmarksSelection} propose a landmark rating and selection mechanism 
based on area coverage to enable life-long mapping.
Strasdat\setal~\cite{Strasdat09icra-landmarksSelection} propose a reinforcement learning approach for landmark selection.
Chli and Davison~\cite{Chli09ras} and Handa\setal~\cite{Handa10cvpr} 
use available priors to inform feature matching, hence reducing the computational cost.
Jang\setal~\cite{Jang13icufn-featureSelection} propose an approach for landmark classification to improve accuracy in visual odometry; 
 each feature class is used separately to estimate rotational and translational components of the ego-motion.
Shi\setal~\cite{Shi13mva-featureSelection} propose a feature selection technique 
to improve robustness of data association in SLAM.
\edited{The notion of visual attention and saliency has been also investigated in the context of scene 
understanding. For instance, Oliva and Torralba~\cite{Oliva01ijcv} propose the notion of \emph{Space Envelop} to obtain a coarse 
description of a scene which abstracts away unnecessary details, while 
Torralba\setal~\cite{Torralba06pr} propose an attention mechanism for natural search tasks, which 
combines bottom-up saliency and top-down aspects to identify the image region that is likely to 
attract the attention of human observer searching for a given object.
}
Very recent work in  computer vision  use attention to reduce 
the computational burden in neural networks.
Mnih\setal~\cite{Mnih14arxiv-attention} reduce the processing of object detection and tracking with
a recurrent neural network by introducing the notion of 
\emph{glimpse}, which provides higher resolution in areas of interest within the image.
Xu\setal~\cite{Xu16arxiv-attention} use visual attention to improve image content description.
Cvi\v{s}i\'c and Petrovi\'c~\cite{Cvisic15ecmr-featureSelection} speed up computation in stereo odometry by feature selection;
the selection procedure is based on bucketing (which uniformly distributes the features across the image), and 
appearance-based ranking. 

Our approach is loosely related to techniques for graph sparsification in which features are pruned a-posteriori
from the SLAM factor graph to reduce computation; we refer the reader to the survey~\cite{Cadena16tro-SLAMsurvey}
for a review of these techniques.

The contributions that are most relevant to our proposal are the one of
Davison~\cite{Davison05iccv}, 
Lerner\setal~\cite{Lerner07tro}, 
Mu\setal~\cite{Mu15rss}, 
Wu\setal~\cite{Wu16iser-vins},
and Zhang and Vela~\cite{Zhang15cvpr}.
The pioneering work of Davison~\cite{Davison05iccv} is one of the first papers 
to use information theoretic constructs to reason about visual features, and 
shares many of the motivations discussed in the present paper.
Contrarily to the present paper, Davison~\cite{Davison05iccv} considers a model-based tracking problem 
in which the state of a moving camera has to be estimated from observations of known features.
In hindsight, we also provide a theoretical justification for the use of a greedy algorithm (similar to the one used in~\cite{Davison05iccv})
  which we prove able to compute near-optimal solutions. 
Lerner\setal~\cite{Lerner07tro} study landmark selection in a localization problem
 with known landmarks; the robot has to choose a subset of landmarks to 
 observe so to minimize the localization uncertainty.
The optimal subset selection is formulated as a mixed-integer program and 
relaxed to an SDP. 
While the problem we solve in this 
paper is different (visual inertial odometry vs. localization with known map), an interesting aspect of~\cite{Lerner07tro}
 is the use of a \emph{requirement matrix} that weights the state covariance and encodes task-dependent uncertainty constraints.
Mu\setal~\cite{Mu15rss} propose a two-stage approach to select a subset of landmarks to minimize the 
probability of collision and a subset of measurements to accurately localize those landmarks.
Our approach shares the philosophy of task-driven measurement selection, but has three key differences.
First, we use a simplified linear model for forward dynamics simulation: this is in the spirit of 
RANSAC, in that a simplified algebraic model is used to quickly filter out less relevant data.
Second, we consider different performance metrics, going beyond the determinant criterion used in~\cite{Mu15rss} 
and related work on graph sparsification.
Third, we perform feature selection in a single stage
and  leverage submodularity to provide formal performance guarantees. 
Wu\setal~\cite{Wu16iser-vins} consider a multi camera system and split the feature selection process into a cascade of 
two resource-allocation problems: (i) how to allocate resources among the cameras, and (ii) how to select features in each camera, 
according to the allocated resources. The former problem is \editedTwo{solved by taking simplifying assumptions} on the distribution of the 
features, the latter is based on the heuristic feature selection scheme of~\cite{Kottas14icra-vins}.
Our paper attempts to formalize feature selection by leveraging the notion of submodularity.
\edited{Zhang and Vela~\cite{Zhang15cvpr} perform feature selection using an observability score and provide sub-optimality guarantees
using submodularity. Our proposal is similar in spirit to~\cite{Zhang15cvpr} with few important differences. 
First, our approach is based on \emph{anticipation}: the feature selector is aware of the intention (future motion) 
of the robot and selects the features accordingly. Second, we operate in a fixed-lag smoothing setup and investigate other performance metrics.
Third, 
from the theoretical standpoint, we provide multiplicative suboptimality bounds 
and we prove conditions under which those bounds are non-vanishing.}
\myparagraph{Sensor Scheduling and Submodularity}
Feature selection is deeply related to the problem of sensor scheduling in control theory.
The most common setup for sensor scheduling is the case in which $N$ sensors monitor a phenomenon of interest and one has to choose 
$\kappa$ out of the $N$ 
available sensors to maximize some information-collection metric; this setup is also known as 
 \emph{sensor selection} or \emph{sensor placement}. The literature on sensor selection includes 
 approaches based on convex relaxation~\cite{Joshi09tsp-sensorSelection}, Bayesian optimal design~\cite{Giraud95iros-sensorSelection}, and
submodular optimization~\cite{Krause08jmlr-submodularity}. The problem is shown to be NP-hard in~\cite{Bian06ipsn-sensorSelection}.
Shamaiah\setal~\cite{Shamaiah10cdc-sensorScheduling} leverage submodularity and provide performance guarantees 
when optimizing the log-determinant of the estimation error covariance.
A setup which is closer to the one in this paper is the case in which the sensed phenomenon is dynamic;
in such case the sensor scheduling can be formulated in terms of the optimal selection of 
$\kappa$ out of $N$ possible measurements to be used in the update phase of a Kalman filter (KF). 
Vitus\setal~\cite{Vitus08automatica-sensorScheduling} use a tree-search approach for 
sensor scheduling.
Zhang\setal~\cite{Zhang15cdc-sensorSelection} proves that sensor scheduling within Kalman filtering is NP-hard 
and shows that the trace of the steady state prior and posterior KF covariances are not submodular, 
despite the fact that greedy 
algorithms are observed to work well in practice.
Jawaid and Smith~\cite{Jawaid15automatica-sensorScheduling} provide counterexamples showing that 
in general the maximum eigenvalue and the trace of the covariance are not submodular.
Tzoumas\setal~\cite{Tzoumas16acc-sensorScheduling} generalize the derivation of~\cite{Jawaid15automatica-sensorScheduling} 
to prove submodularity of the logdet over a fixed time horizon, under certain assumptions 
on the observation matrix.
Summers\setal~\cite{Summers16tcns-sensorScheduling} show that several metrics based on the 
controllability and observability Gramians are submodular.
\myparagraph{Visual-Inertial Navigation}
As the combined use of the visual and vestibular system is key to human navigation, 
recent advances in visual-inertial navigation 
on mobile robots are enabling unprecedented performance in pose estimation in GPS-denied environments using
commodity hardware.
The literature on visual-inertial navigation is vast, with many contributions proposed over the last two years, 
including approaches based on filtering~\cite{Mourikis07icra,Kottas14icra-vins,Davison07pami,Blosch15iros,Jones11ijrr,Hesch14ijrr},
fixed-lag smoothing~\cite{Mourikis08wvlmp,Sibley10jfr,DongSi11icra,Leutenegger15ijrr}, 
and full smoothing~\cite{Bryson09icra,Indelman13ras,Shen14thesis,keivan14iser,PatronPerez15,Lupton12tro,Forster16tro}.
We refer the reader to~\cite{Forster16tro} for a comprehensive review.



\myparagraph{Notation}
We use lowercase and uppercase bold letters to denote vectors (e.g. $\vv$) and matrices (e.g. $\MM$), respectively.
Sets are denoted by  sans script fonts (e.g. $\setA$). 
Non-bold face letters are used for scalars and indices (e.g. $j$) and function names (e.g. $f(\cdot)$). 
The symbol $|\setA|$ denotes the cardinality of $\setA$.
The identity matrix of size $n$ is denoted with $\eye_n$. An $m \times n$ zero matrix is denoted by $\zero_{m\times n}$. 
 \editedTwo{
 $\MM \succeq 0$ indicates that the matrix $\MM$ is positive semidefinite.}
The symbol $\|\cdot\|$ denotes the Euclidean norm for vectors and the spectral norm for matrices. 


\section{Attention in Visual-Inertial Navigation}
\label{sec:attention}


We design an attention mechanism that selects $\kappa$ relevant visual features 
(e.g., Harris corners) from the 
current frame in order to maximize the performance  of visual-inertial motion estimation.
The $\kappa$ features have to be selected out of $N$ available features present in the camera image; 
the approach can deal with both monocular and stereo cameras (a stereo camera is treated as a rigid pair 
of monocular cameras).



We call $\setF$ the set of all available features (with $|\setF| = N$). 
If we denote with $f(\cdot)$ our task-dependent performance metric 
(we formalize a suitable  metric for VIN in \prettyref{sec:metrics}), 
we can state our feature selection problem as follows: 
\bea
\label{eq:featureSelection}
\max_{\setS \subset \setF} & f(\setS) \quad  \subject & |\setS| \leq \kappa
\eea
The problem looks for a subset of features $\setS$, containing no more than $\kappa$ features, 
which optimizes the task performance $f(\cdot)$. This is a standard feature selection problem and 
has been used across multiple fields, including machine learning~\cite{Das11icml-submodularity}, 
robotics~\cite{Mu15rss}, and sensor networks~\cite{Joshi09tsp-sensorSelection}.
Problem~\eqref{eq:featureSelection} is NP-hard~\cite{Bian06ipsn-sensorSelection} in general. 
In the rest of this paper we are interested in designing a suitable performance metric $f(\setS)$ 
for our VIN task, and provide fast approximation algorithms to solve~\eqref{eq:featureSelection}. 

We would like to design a performance metric $f(\cdot)$ that  captures task-dependent requirements: in our 
case the metric has to quantify the uncertainty in the VIN motion estimation. 
Moreover, the metric should capture aspects already deemed relevant in 
related work. First, the metric has to reward the selection of the most distinctive features (in terms of appearance) 
since these are more likely to be re-observed in consecutive frames.
Second, the metric has to reward features that remain within the field of view for a longer time.
Therefore, \emph{anticipation} is a key aspect: the metric has to be aware 
that under certain motion some of the features are more likely to remain in the field of view of the camera.
Third, the metric has to reward features that are more informative to reduce uncertainty. 
In the following section we propose two performance metrics that 
seamlessly capture all these aspects. 

\subsection{Task-dependent Performance Metrics for VIN}
\label{sec:metrics}

Here we propose two metrics that quantify the accumulation of estimation errors over 
the horizon $\hor$, under the selection of a set of visual features $\setS$. 
Assume that $k$ is the time instant at which the features need to be  selected.
Let us call $\vxxest_k$ the (to-be-computed) state estimate of the robot at time $k$: we will be more precise 
about the variables included in $\vxxest_k$ in~\prettyref{sec:imuLinearModel}; for now the reader can 
think that $\vxxest_k$ contains the estimate for the pose and velocity of the robot at time $k$, as well as 
the IMU biases. We denote with $\vxxest_{k:k+\hor} \doteq [\vxxest_{k} \; \vxxest_{k+1} \; \ldots \; \vxxest_{k+\hor}]$ the future 
state estimates within the horizon $\hor$.
Moreover, we call $\MP_{\k:\k+\hor}$ the covariance matrix of our estimate $\vxxest_{k:k+\hor}$, 
and $\MOmega_{\k:\k+\hor} \doteq \MP_{\k:\k+\hor}\inv$ the corresponding information matrix.
Two natural metrics to capture the accuracy of $\vxxest_{k:k+\hor}$ are described in the following.

\myparagraph{Worst-case Estimation Error}
The worst-case error variance is quantified by the largest eigenvalue $\lambdaMax(\MP_{\k:\k+\hor})$ of the covariance matrix 
$\MP_{\k:\k+\hor}$, see e.g.,~\cite{Joshi09tsp-sensorSelection}.
 Calling $\lambdaMin(\MOmega_{\k:\k+\hor})$ the smallest eigenvalue of the information matrix $\MOmega_{\k:\k+\hor}$, 
if follows that $\lambdaMax(\MP_{\k:\k+\hor})  = 1/\lambdaMin(\MOmega_{\k:\k+\hor})$, hence minimizing the worst-case error is the 
same as maximizing $\lambdaMin(\MOmega_{\k:\k+\hor})$. Note that the information matrix $\MOmega_{\k:\k+\hor}$ is function of the 
selected set of measurements $\setS$, hence we write $\lambdaMin(\MOmega_{\k:\k+\hor}(\setS))$.

Therefore our first metric (to be maximized) is:
\beq
\label{eq:eigMin}
\feigMin(\setS) = \lambdaMin(\MOmega_{\k:\k+\hor}(\setS)) = \lambdaMin\left(\barMOmega_{\k:\k+\hor} + \sum_{\l \in \setS} \Delta_\l \right)
\eeq
where on the right-hand-side, we exploited the additive structure of the information matrix, where $\barMOmega_{\k:\k+\hor}$ is the 
information matrix of the estimate when no features are selected (intuitively, this is the inverse of the covariance resulting from the IMU integration), 
while $\Delta_\l$ is the information matrix associated with the selection of the $\l$-th feature. We will give an explicit expression to $\barMOmega_{\k:\k+\hor}$ 
and $\Delta_\l$ in~\prettyref{sec:estimationModel}.

\myparagraph{Volume and Mean Radius of the Confidence Ellipsoid}
The $\varepsilon$-confidence ellipsoid is the ellipsoid that contains the estimation error with 
probability $\varepsilon$. Both the volume and the mean radius of the $\varepsilon$-confidence ellipsoid  
are proportional to the determinant of the covariance matrix. In particular, the volume $V$ 
and the mean radius $\bar{R}$ of an $n$-dimensional ellipsoid associated with the covariance 
$\MP_{\k:\k+\hor}$
 can be written as~\cite{Joshi09tsp-sensorSelection}:
\bea
\label{eq:volumeAndRadius}
V = \frac{(\alpha \pi)^{n/2}}{\Gamma(\frac{n}{2}+1)} \det(\MP_{\k:\k+\hor}^\half) \;, \quad
\bar{R} = \sqrt{\alpha} \det(\MP_{\k:\k+\hor})^{\frac{1}{2n}}
\eea
where $\alpha$ is the quantile of the $\chi^2$ distribution with $n$ degrees of freedom and upper tail 
probability of $\varepsilon$, $\Gamma(\cdot)$ is the Gamma function, and $\det(\cdot)$ is the determinant of a square matrix.

From~\eqref{eq:volumeAndRadius} we note that to minimize the volume and the mean radius of the 
confidence ellipsoid we can equivalently minimize the determinant of the covariance.
Moreover, since
\bea
\logdet(\MP_{\k:\k+\hor}) = \logdet(\MOmega_{\k:\k+\hor}\inv) = - \logdet(\MOmega_{\k:\k+\hor}) \nonumber
\eea
then minimizing the size of the confidence ellipsoid is the same as maximizing
the log-determinant of the information matrix, leading to our second performance metric:
\beq
\label{eq:logDet}
\flogDet(\setS) = \logdet(\MOmega_{\k:\k+\hor}(\setS)) = \logdet\left(\barMOmega_{\k:\k+\hor} + \sum_{\l \in \setS} \Delta_\l \right)
\eeq
where we again noted that the information matrix is function of the selected features and can 
be written in additive form.

\myparagraph{Probabilistic Feature Tracks}
The performance metrics described so far already capture some important aspects: they are task-dependent in that they 
both quantify the motion estimation performance; moreover, they are predictive, in the sense that they look at the result 
of selecting a set of features over a short (future) horizon. As we will see in~\prettyref{sec:visionLinearModel}, 
the model also captures the fact that longer feature tracks are more informative,
therefore it implicitly rewards the selection of features 
that are co-visible across multiple frames. 

The only aspect that is not yet modeled is the fact that, even when a feature is in the field of 
view of the camera, there is some chance that it will not be correctly tracked
 and the corresponding feature track will be lost. For instance, 
if the appearance of a feature is not distinctive enough, the feature track may 
 be shorter than expected.

To model the probability that a feature track is lost, 
we introduce $N$ Bernoulli random variables $b_1, \ldots, b_N$. 
Each variable $b_\l$ represents the outcome of the tracking of feature $\l$: 
if $b_\l=1$, then the feature is successfully tracked, otherwise, the feature 
track is lost. For each feature we assume $p_\l = \prob{b_\l = 1}$ to be given. \edited{In practice 
one can correlate the appearance of each feature to $p_\l$, such that 
more distinctive features have higher probability of being tracked, or can learn the probabilities from data.} 
Using the binary variables $\vb \doteq \{b_1,\ldots,b_N\}$, we write the information matrix at the 
end of the horizon as:
\beq
\label{eq:OmegaWithb}
\textstyle \MOmega_{\k:\k+\hor}(\setS,\vb) = \barMOmega_{\k:\k+\hor} + \sum_{\l \in \setS} b_\l \Delta_\l
\eeq
which has a clear interpretation: if the $\l$-th feature  is correctly tracked, then $b_\l = 1$ 
and the corresponding information matrix $\Delta_\l$ is added to  $\barMOmega_{\k:\k+\hor}$; on the other hand, if the feature tracks is lost, then $b_\l = 0$ and
the corresponding information content simply disappears from the sum in~\eqref{eq:OmegaWithb}.

Since $\vb$ is a random vector, our information matrix is now a stochastic 
quantity $\MOmega_{\k:\k+\hor}(\setS,\vb)$, hence we have to redefine our 
performance metrics to include the expectation over $\vb$:
\beq
\label{eq:expectation}
f(\setS) = \E { f(\MOmega_{\k:\k+\hor}(\setS,\vb)) }
\eeq
where the function
$f(\cdot)$ denotes either  $\feigMin(\cdot)$ or  $\flogDet(\cdot)$.

Computing the expectation~\eqref{eq:expectation}  leads
to a sum with a combinatorial number of terms, which is hard to 
even evaluate. 
To avoid the combinatorial 
explosion, we use Jensen's inequality, \edited{which produces an upper bound on the expectation of a 
\emph{concave} function as follows}:
\beq
\label{eq:JensensInequality}
\E { f(\MOmega_{\k:\k+\hor}(\setS,\vb)) } \leq  f(\E {\MOmega_{\k:\k+\hor}(\setS,\vb)} )
\eeq
\edited{Since both $\feigMin(\cdot)$ and $\flogDet(\cdot)$ are concave functions, 
the Jensen's inequality produces an upper bound for our expected cost}. In the 
rest of this paper we maximize this bound, rather that the original 
cost. 
The advantage of doing so is that the 
right-hand-side of~\eqref{eq:JensensInequality} can be efficiently 
computed as:
\bea
\label{eq:JensensInequalityRHS}
\textstyle f(\E {\MOmega_{\k:\k+\hor}(\setS,\vb)} = f( \barMOmega_{\k:\k+\hor} + \sum_{\l \in \setS} p_\l \Delta_\l )
\eea 
where we used the definition~\eqref{eq:OmegaWithb}, the fact
that the expectation is a linear operator, and that $\E{b_\l} = p_\l$.
Therefore, our performance metrics can be written explicitly as:
\bea
\label{eq:probabilisticMetrics}
\feigMin(\setS) = \lambdaMin\left(\barMOmega_{\k:\k+\hor} + \sum_{\l \in \setS} p_\l \Delta_\l \right)
\\
\flogDet(\setS) = \logdet\left(\barMOmega_{\k:\k+\hor} + \sum_{\l \in \setS} p_\l \Delta_\l \right) \nonumber
\eea
 which coincide with the deterministic counterparts~\eqref{eq:eigMin},~\eqref{eq:logDet}
 when $p_\l = 1, \forall \l$. Interestingly, in~\eqref{eq:probabilisticMetrics} 
 the probability that a feature is not tracked 
 simply discounts the corresponding information content. Therefore, the approach 
 considers features that are more likely to get lost as less informative, which 
 is a desired behavior. 
\edited{We remark that the probabilities $p_\l$ only capture the distinctiveness of the visual features, while the geometric aspects 
(e.g., co-visibility) are captured in the matrices $\Delta_\l$, as described in~\prettyref{sec:visionLinearModel}.}
 While the derivation so far is quite general and provides a feature selection mechanism for 
 any feature-based SLAM system, in the following we focus on visual-inertial navigation and 
 we provide explicit expressions for the matrices $\barMOmega_{\k:\k+\hor}$ and $\Delta_\l$ appearing in eq.~\eqref{eq:probabilisticMetrics}.


\subsection{Forward-simulation Model}
\label{sec:estimationModel}


The feature selection model proposed in~\prettyref{sec:attention} and the metrics in~\prettyref{sec:metrics} require to predict the evolution of the information matrix over the horizon $\hor$.
In the following we show how to forward-simulate 
the IMU and the camera; we note that we do not 
require to simulate actual IMU measurements, but only need to predict the corresponding 
information matrix, which depends on the IMU noise statistics.

The forward-simulation model depends on the future motion of the robot 
(the IMU and vision models are function of the future poses of the robot); therefore, anticipation is a 
key element of our approach: the feature selection mechanism is aware of the immediate-future 
intentions of the robot and selects features accordingly. As we will see in the experiments, 
this enables a more clever selection of features during sharp turns and aggressive maneuvers.
In practice, the future poses along the horizon can be computed from the current control actions; 
 for instance, if the controller is planning over a receding horizon, 
 one can get the future poses by integrating the dynamics of the vehicle. 
 In this sense, our attention mechanism involves a tight integration of control and perception.

The algorithms for feature selection that we present in~\prettyref{sec:algorithmsAndGuarantees} 
are generic and work for any positive definite $\barMOmega_{\k:\k+\hor}$ 
and any positive semidefinite $\Delta_\l$. Therefore, the non-interested reader can safely skip this section, 
which provides explicit expressions for $\barMOmega_{\k:\k+\hor}$ and $\Delta_\l$ in the 
visual-inertial setup. 

Before delving into the details of the IMU and vision model we remark a key design goal of our 
forward-simulation model: efficiency. The goal of an attention mechanism is to reduce the 
cognitive load later on in the processing pipeline; therefore, by design, it should not be computational demanding, 
as that would defeat its purpose. 
For this reasons, in this section we present 
a simplified VIN model which is designed to be efficient to compute, while capturing all the aspects of interest 
of a full visual-inertial estimation pipeline, e.g.,~\cite{Forster16tro}. 


\subsubsection{IMU Model \edited{and Priors}}
\label{sec:imuLinearModel}

Our simplified IMU model is based on a single assumption: the accumulation of the rotation error due to gyroscope 
integration over the time horizon is negligible. In other words, the relative rotation estimates predicted by the 
gyroscope are accurate. This assumption is realistic, even for inexpensive IMUs: 
the drift in rotation integration is typically small  
and  negligible over the time horizon considered in our attention system (in our tests we 
consider a time horizon of $3\sec$).

Assuming that the rotations are accurately known allows restricting the state to the robot position, linear velocity, 
and the accelerometer bias. Therefore, in the rest of this paper, the (unknown) state of the robot at time $\k$ is 
  $\vxx_\k \doteq [\pos{\k} \; \vel{\k} \; \biasAcc{\k} ]$, where $\pos{\k} \in \Real{3}$ is the 3D position of the 
  robot, $\vel{\k} \in \Real{3}$ is its velocity, and $\biasAcc{\k}$ is the (time-varying) accelerometer bias.
  We also use the symbol $\Rot{\k}$ to denote the attitude of the robot at time $\k$: this is assumed to be known from 
  gyroscope integration over the horizon $\hor$, hence it is not part of the state.

As in most VIN pipelines, we want to estimate the state of the robot at each frame\footnote{
The derivation is identical for the case in which 
 we associate a state to each keyframe, rather than each frame, as done in related work~\cite{Forster16tro}.}.
Therefore, the goal of this subsection, 
  similarly to~\cite{Forster16tro}, is to reformulate a set of IMU measurements between two consecutive frames $\k$ and $\j$ as 
  a single measurement that constrains $\vxx_\k$ and $\vxx_\j$. Differently from~\cite{Forster16tro}, we show how to 
  get a 
  \emph{linear} measurement model. 

The on-board accelerometer measures 
 the acceleration $\acc{\k}$ of the sensor with respect to an inertial frame, and is affected by  
  additive white noise $\noiseAcc{\k}$ and a slowly varying sensor bias $\biasAcc{\k}$. Therefore, the 
  measurement $\measAcc{\k} \in \Real{3}$ acquired by the accelorometer at time $\k$ is modeled as~\cite{Forster16tro}:
\bea
\measAcc{\k} &=& \Rot{\k}\tran \left( \acc{\k} - \gravity \right) + \biasAcc{\k} + \noiseAcc{\k}, \label{eq:accModel}
\eea
where $\gravity$ is the gravity vector, expressed in the inertial frame. 
To keep notation simple, we omit the reference frames in our notation, which follow closely the convention used in~\cite{Forster16tro}:
 position $\pos{\k}$ and velocity $\vel{\k}$ are expressed in the \edited{(inertial) world} frame,\footnote{\edited{As usual when adopting MEMS inertial sensors, we assume that a local reference frame on Earth (our ``world'' frame) can be approximated as inertial, since 
 the effects of the Earth rotation are negligible with respect to the measurement noise. For a more comprehensive discussion on  reference frames for inertial navigation we refer the reader to~\cite[Chapter 2.2]{Farrell08book}.}} while the bias $\biasAcc{\k}$ is expressed in the sensor frame.

Given position $\pos{\k}$ and velocity $\vel{\k}$ at time $\k$, we can forward-integrate and obtain 
 $\pos{\j}$ and $\vel{\j}$ at time $j > \k$:
\bea
\label{eq:integrationKeyframes}
 \vel{j} &=& \textstyle  \vel{\k} + \sum_{\ind=\k}^{\j-1} \acc{\ind}  \Deltak \nonumber
 \\
 && \grayText{(from~\eqref{eq:accModel} we know 
 $\acc{\ind} = \gravity  + \Rot{\ind}(\measAcc{\ind} - \biasAcc{\ind} - \noiseAcc{\ind})$,}
 \nonumber \\
 && \grayText{and assuming constant bias between frames, $\biasAcc{\ind} = \biasAcc{\k}$)}
 \nonumber \\
&=&
 \textstyle \vel{\k} + \gravity\Deltaij + \sum_{\ind=\k}^{\j-1} \edited{\Rot{\ind}} \left( \measAcc{\ind}\!-\! \biasAcc{\k} \!-\! \noiseAcc{\ind} \right) \Deltak  \label{eq:velEq}
\eea
\bea
\pos{j} &=& \textstyle \pos{\k} + 
\sum_{\ind=\k}^{\j-1} \left( \vel{\ind} \Deltak + \frac{1}{2}  
\acc{\ind} \Deltak ^2 \right) \nonumber
\\
&& \grayText{(substituting $\acc{\ind} = \gravity  + \Rot{\ind}(\measAcc{\ind} - \biasAcc{\k} - \noiseAcc{\ind})$)}
\nonumber \\
 &=&
\textstyle \pos{\k} + \sum_{\ind=\k}^{\j-1} 
( \vel{\ind} \Deltak + \frac{1}{2}\gravity\Deltak^2 + 
\frac{1}{2} \Rot{\k} \left( \measAcc{\ind}\!-\! \biasAcc{\k} \!-\! \noiseAcc{\ind} \right) \Deltak^2 ) 
\nonumber 
\\
&& \grayText{(substituting $\vel{j}$ from~\eqref{eq:velEq} with $\j = \ind$)}
\nonumber \\  
&=& \textstyle \pos{\k} + \frac{1}{2}\gravity\DeltaijSq + 
\sum_{\ind=\k}^{\j-1} 
\frac{1}{2} \Rot{\ind} \left( \measAcc{\ind}\!-\! \biasAcc{\k} \!-\! \noiseAcc{\ind} \right) \Deltak^2
\nonumber \\  
&+&\!\! \textstyle \sum_{\ind=\k}^{\j-1} 
( \vel{\k} \!+\! \gravity\Deltak_{\k\ind} \!+\! \sum_{\indd=\k}^{\ind-1} 
\Rot{\indd} \left( \measAcc{\indd}\!-\! \biasAcc{\k} \!-\! \noiseAcc{\indd} \right) \Deltak ) 
\label{eq:posEq}
\eea
where $\Deltak$ is the sampling time of the IMU,  
$\Deltaij \doteq  \sum_{\ind=\k}^{\j-1} \Deltak$, 
and $\DeltaijSq \doteq  \sum_{\ind=\k}^{\j-1} \Deltak^2$;
as in~\cite{Forster16tro}, we assumed that the IMU bias remains constant between two frames.
The evolution of the bias across frames can be modeled as a random walk:
\bea
\label{eq:biasEvolution}
\biasAcc{\j} &=& \textstyle \biasAcc{\k} - \noise^\B_{\k\j}
\eea
where $\noise^\B_{\k\j}$ is a zero-mean random vector.

Noting that the state appears linearly in~\eqref{eq:velEq}-\eqref{eq:biasEvolution}, 
it is easy to rewrite the three expressions together in matrix form:
\bea
\label{eq:imuModel}
\measImu{\k\j} = \MA_{{\k\j}} \vxx_{\k:\k+\hor} + \noiseImu{\k\j}
\eea
where $\measImu{\k\j}  \in \Real{9}$ is a suitable vector,\footnote{\edited{The expression of $\measImu{\k\j}$ 
is inconsequential for the 
following derivation, but the interested reader can find details and derivations
 in Appendix~\ref{sec:linearImuModelAppendix}.}} and
 \edited{$\noiseImu{{\k\j}} \in \Real{9}$} is zero-mean random noise.
 We remark that while $\measImu{\k\j}$ is function of the future IMU measurements, 
 this  vector is not actually used in our approach (what matters is $\MA_{{\k\j}}$ 
 and the information matrix of $\noiseImu{\k\j}$), hence we do not need to simulate future measurements.
An explicit expression for the matrix $\MA_{{\k\j}} \in \Real{9 \times 9(\hor+1)}$, 
the vector $\measImu{\k\j}$, and the covariance of $\noiseImu{\k\j}$ 
is given in Appendix~\ref{sec:linearImuModelAppendix}.
The matrix $\MA_{{\k\j}}$ is a sparse block matrix with
$9 \times 9$ blocks, which is all zeros, except the blocks corresponding to 
the state at times $\k$ and  $\j$. 
%

\edited{
From  linear estimation theory, we know that, using the IMU measurements~\eqref{eq:imuModel} 
for all consecutive frames $\k,\j$ in the horizon $\hor$,
 the information matrix of the optimal estimate of the state $\vxx_{\k:\k+\hor}$ given the IMU data is:
\bea
\label{eq:barMOmegaImuNoPriors}
\barMOmega_{\k:\k+\hor}^\imu = \sum_{\k\j \in \setH} (\MA_{{\k\j}}\tran  \MOmegaImu{\k\j}   \MA_{{\k\j}})
\eea
where $\setH$ is the set of consecutive frames within the time horizon $\hor$, and 
$\MOmegaImu{\k\j} \in \Real{9\times 9}$ is the information matrix of the noise vector $\noiseImu{\k\j}$ introduced in eq.~\eqref{eq:imuModel}.

While the IMU measurements constrain the states in the future horizon $\hor$, the predicted information matrix at time $\k+\hor$ is also influenced by the initial information matrix at time $\k$.
This information matrix, referred to as $\barMOmega_{\k}^\prior \in \Real{9\times 9}$, is computed and maintained by 
the VIN estimator,\footnote{
\edited{In our implementation, the VIN estimation is a fixed-lag smoother based on iSAM2 (described in 
Section~\ref{sec:experiments}) and the information matrix at time $\k$ is obtained by marginalizing states in the smoother other than $\vxx_{\k}$}.} and can be understood as a prior on the state at time $\k$. The presence of this information matrix results in an additional term in the expression of $\barMOmega_{\k:\k+\hor}$, which is the information matrix of the state estimate before any vision measurement is 
selected, as per eq.~\eqref{eq:OmegaWithb}.
In particular, $\barMOmega_{\k:\k+\hor}$ can be written as:
\bea
\label{eq:barMOmegaImu}
\barMOmega_{\k:\k+\hor} = \barMOmega_{\k:\k+\hor}^\imu + 
\barMOmega_{\k:\k+\hor}^\prior
\eea
where $\barMOmega_{\k:\k+\hor}^\prior$ is a matrix which is zero everywhere except the top-left $9\times 9$ block which is equal 
to $\barMOmega_{\k}^\prior$, to reflect our prior on the state at time $\k$.
The matrix $\barMOmega_{\k:\k+\hor}$ represents the information matrix of an optimal estimator of the state $\vxx_{\k:\k+\hor}$ before selecting visual measurements and follows the notation we have already introduced in~\eqref{eq:OmegaWithb}.
The presence of priors and measurements proceeds in full analogy with a standard fixed-lag smoother, while here we have the advantage of working with a linear model. 
}


\subsubsection{Vision Model}
\label{sec:visionLinearModel}

Also for the vision measurements, we are interested in designing a 
linear measurement model, which simplifies the actual (nonlinear) 
perspective projection model. 
To do so, we have to express a pixel measurement as a linear function of the 
unknown state that we want to estimate.

A (calibrated) pixel measurement of an external 3D point (or landmark) $\l$
identifies the 3D bearing of the landmark in the camera frame.
Mathematically, 
if we call $\ucl$ the unit vector corresponding to the (calibrated) pixel 
observation of $\l$ from the robot pose 
at time $\c$, $\ucl$ satisfies the following relation:
%
\bea
\label{eq:visionModel0}
\ucl  \times \left( ( \MR^\world_{\cam,\c})\tran (\p_\l - \vt^\world_{\cam,\c} ) \right) = \zero_3
\eea 
where $\times$ is the cross product between two vectors,
$\p_\l$ is the 3D position of landmark $\l$ (in the world frame), 
 $\MR^\world_{\cam,\c}$ and $\vt^\world_{\cam,\c}$ are the rotation and translation describing the camera pose at time $\c$ 
 (w.r.t. the world frame).
In words, the model~\eqref{eq:visionModel0} requires the observed point (transformed to the camera frame) 
to be collinear to the measured direction $\ucl$, since the cross product measures the deviation 
from collinearity~\cite{Tron15acc-rigidity}. 

Now we note that
 for two vectors $\vv_1$ and $\vv_2$, the cross product $\vv_1 \times \vv_2 = [ \vv_1 ]_\times \vv_2$, 
 where $[ \vv_1 ]_\times$ is the skew symmetric matrix built from $\vv_1$. Moreover, we 
 note that the camera pose w.r.t. the world frame, $(\MR^\world_{\cam,\c},\vt^\world_{\cam,\c})$, 
 can be written as the composition of the IMU pose w.r.t. the world frame, $(\Rot{\c},\pos{\c})$, 
 and the relative pose of the camera w.r.t. the IMU, $(\MR^\imu_{\cam},\vt^\imu_{\cam})$ (known from 
 calibration).
 Using these two considerations, we rewrite~\eqref{eq:visionModel0} equivalently as:
\bea
\label{eq:visionModel1}
[ \ucl ]_\times \left( ( \Rot{\c} \MR^\imu_{\cam} )\tran (\p_\l - ( \pos{\c} + \Rot{\c} \vt^\imu_{\cam} ) ) \right) = \zero_3
\eea 
In presence of measurement noise,~\eqref{eq:visionModel1} becomes:
\bea
\label{eq:visionModel2}
[ \ucl ]_\times \left( ( \Rot{\c} \MR^\imu_{\cam} )\tran (\p_\l - ( \pos{\c} + \Rot{\c} \vt^\imu_{\cam} ) ) \right) = \noiseu{\c\l}
\eea 
where $\noiseu{\c\l}$ is a zero-mean random noise with known covariance.
Under the assumptions that rotations are known from gyroscope integration, 
the unknowns in model~\eqref{eq:visionModel2} are the robot position $\pos{\c}$ (which is part of our 
state vector $\vxx_{\k:\k+\hor}$) and 
the position of the observed 3D landmark $\p_\l$.
\edited{Rearranging the terms we obtain:
\bea
[ \ucl ]_\times  ( \MR^\imu_{\cam} )\tran \vt^\imu_{\cam} = 
[ \ucl ]_\times ( \Rot{\c} \MR^\imu_{\cam} )\tran  ( \pos{\c} -  \p_\l) \nonumber
+
\noiseu{\c\l}
\eea 
The only unknowns in the previous equation 
are $\pos{\c}$ and $\p_\l$, hence the model is linear in the state
 and can be written in matrix form:}
\bea
\label{eq:visionModel_single}
\measU{\c\l} = \MF\cl \vxx_{\k:\k+\hor} + \ME\cl \p_\l + \noiseu{\c\l}
\eea
\edited{with $\measU{\cl} \doteq [ \ucl ]_\times  ( \MR^\imu_{\cam} )\tran \vt^\imu_{\cam}$, and for suitable matrices 
$\MF\cl \in \Real{3\times 9(\hor+1)}$ and $\ME\cl \in \Real{3\times 3}$.}  
In order to be triangulated, a point has to be observed across multiple frames. 
Stacking the linear system~\eqref{eq:visionModel_single} 
for each observation pose from which $\l$ is visible, we get a single linear system:
\bea
\label{eq:visionModel_point}
\measU{\l} = \MF_\l \vxx_{\k:\k+\hor} + \ME_\l \p_\l + \noiseu{\l}
\eea
\edited{where $\measU{\l} \in \Real{3\nrVisible}$, $\MF_\l \in \Real{3\nrVisible \times 9(\hor+1)}$, and $\ME_\l \in \Real{3\nrVisible \times 3}$ are obtained by stacking (row-wise) 
 $\measU{\c\l}$, $\MF\cl$, and $\ME\cl$, respectively, for the $\nrVisible$ frames in which $\l$ is visible.}
 As for the IMU model, the expression of $\measU{\l}$ is inconsequential for our derivation, 
 as it does not influence the future state  covariance.
 On the other hand $\MF_\l$ and $\ME_\l$ depend on the future measurements $\ucl$: 
 for this reason, computing these matrices requires simulating pixel projections 
 of $\p_\l$ for each frame in the horizon. When using a stereo camera, we have an 
 estimate of $\p_\l$ hence we can easily project it to the future frames.
  In a monocular setup, we can guess the depth of new features from the existing features 
  in the VIN back-end. 
\edited{We remark that when simulating pixel projections at future frames in the horizon, 
we also perform a visibility check (i.e., we use the camera projection model to assert whether a landmark projects within the image frame), hence restricting the set of future visual measurements according to the 
expected motion of the vehicle and the camera field of view. This aspect is crucial in making our feature selection approach ``predictive'' in nature. 
As discussed in Proposition~\ref{prop:longerFeatureTracks} below, the resulting model seamlessly captures the intuitive fact that landmarks that remain visible and 
can be tracked across multiple frames are more informative.
}

 Now we note that we cannot directly use the linear model~\eqref{eq:visionModel_point} 
 to estimate our state vector $\vxx_{\k:\k+\hor}$, since it contains the unknown 
position of landmark $\l$. One way to circumvent this problem is to include the 3D 
point in the state vector. This is undesirable for two reasons; first, including the landmarks 
as part of the state would largely increase the 
dimension of the state space (and hence of the matrices in~\eqref{eq:probabilisticMetrics}). Second, it may create undesirable behaviors of our performance metrics;  
for instance, the metrics might induce to select features that minimize the uncertainty of a far 3D point 
rather than focusing on the variables we are actually interested in (i.e., the state of the robot).

To avoid this undesirable effects, we analytically eliminate the 3D point from the estimation 
using the Schur complement trick~\cite{Carlone14icra-smartFactors}.
We first write the information matrix of the joint state $[\vxx_{\k:\k+\hor} \; \p_\l]$ 
from the linear measurements~\eqref{eq:visionModel_point}:
\bea
\label{eq:infoMatrixPoint}
\MOmega^{(\l)}_{\k:\k+\hor} = 
\matTwo{
\MF_\l\tran \MF_\l   & \MF_\l\tran 	\ME_\l \\
\ME_\l\tran \MF_\l    &  \ME_\l\tran \ME_\l
} \edited{\in \Real{(9\hor+12) \times (9\hor+12)}}
\eea
\edited{where, for simplicity, we assumed the pixel measurement noise to be the identity matrix}.
Using the Schur complement trick we marginalize out the landmark $\l$ 
and obtain the information matrix of our state $\vxx_{\k:\k+\hor}$ given the measurements~\eqref{eq:visionModel_point}:
\bea
\label{eq:infoMatrixVision}
\Delta_\l = 
\MF_\l\tran \MF_\l - \MF_\l\tran \ME_\l (\ME_\l\tran \ME_\l)\inv  \ME_\l\tran \MF_\l  \;\; \edited{\in \Real{9(\hor+1) \times 9(\hor+1)}}
\eea
Eq.~\eqref{eq:infoMatrixVision} is the (additive) contribution to the information matrix 
of our state estimate due to the measurements of a single landmark 
$\l$. This is 
the matrix that we already called $\Delta_\l$ in~\eqref{eq:OmegaWithb}.
The matrix $\Delta_\l$ 
is sparse, and its sparsity pattern is dictated by the co-visibility of landmark $\l$ 
across different frames~\cite{Carlone14bmvc-miningSfM}.
 It is worth noticing that $(\ME_\l\tran \ME_\l)\inv$ is the covariance of the estimate of the landmark 
 position~\cite{Carlone14bmvc-miningSfM}, and it is invertible as long as the landmark $\l$ can be triangulated.
\edited{In our implementation, we only consider landmarks that can be triangulated (for which $(\ME_\l\tran \ME_\l)\inv$ invertible) 
as candidates for feature selection.}

 \edited{The following proposition formally proves that longer feature tracks lead to ``larger'' $\Delta_\l$ 
 hence the formulations~\eqref{eq:probabilisticMetrics} will encourage the selection of features with long tracks.

 \begin{proposition}[Long Feature Tracks]
\label{prop:longerFeatureTracks}
Consider two landmarks $\l_1$ and $\l_2$ having identical predicted future pixel measurements till time $k_1$
(i.e., $\measU{\tau\l_1} = \measU{\tau\l_2}$)
for $\tau = k,\ldots,k_1$, but such that $l_1$ is only tracked till $k_1$, while $l_2$ is tracked 
till a later frame $k_2$ (with $k < k_1 < k_2 \leq \hor$). Then $\Delta_{\l_1} \preceq \Delta_{\l_2}$.
\end{proposition}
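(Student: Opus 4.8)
The plan is to read $\Delta_\l$ in~\eqref{eq:infoMatrixVision} as a Schur complement and to exploit its least-squares (variational) characterization, reducing the claim to the elementary fact that adding a nonnegative residual term inside a minimization cannot decrease its value.

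First I would make the embedding of the $\l_1$ system into the $\l_2$ system explicit. The per-frame blocks $\MF\cl$ and $\ME\cl$ in~\eqref{eq:visionModel_single} are deterministic functions of the calibrated bearing $\ucl$, the (common) robot attitude $\Rot{\c}$, and the fixed extrinsic calibration. Hence the hypothesis that the two landmarks share identical pixel measurements for $\tau = k,\ldots,k_1$ --- and therefore identical bearings $\ucl$ --- observed from the same poses, forces these blocks to coincide for $\l_1$ and $\l_2$ on those frames. Stacking row-wise as in~\eqref{eq:visionModel_point}, this gives $\MF_{\l_2} = \vect{\MF_{\l_1} \\ \MF'}$ and $\ME_{\l_2} = \vect{\ME_{\l_1} \\ \ME'}$, where $\MF'$ and $\ME'$ collect the blocks from the extra frames $k_1{+}1,\ldots,k_2$ tracked only by $\l_2$. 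Since we restrict to triangulatable landmarks, $\ME_{\l_1}\tran\ME_{\l_1}$ and $\ME_{\l_2}\tran\ME_{\l_2}$ are invertible, so both $\Delta_{\l_1}$ and $\Delta_{\l_2}$ in~\eqref{eq:infoMatrixVision} are well defined.

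Next I would record the variational identity: for every state vector $\vxx$,
\[
\vxx\tran \Delta_\l \vxx = \min_{\p} \norm{\MF_\l \vxx + \ME_\l \p}^2,
\]
where $\p$ ranges over candidate landmark positions. This follows by minimizing the quadratic over $\p$: the minimizer is $\p^\star = -(\ME_\l\tran\ME_\l)\inv \ME_\l\tran \MF_\l \vxx$, and substituting it back reproduces exactly~\eqref{eq:infoMatrixVision}. Geometrically, $\vxx\tran\Delta_\l\vxx$ is the squared norm of the part of $\MF_\l\vxx$ orthogonal to $\mathrm{range}(\ME_\l)$, i.e., the information on the state that cannot be explained away by re-estimating the landmark. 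Applying the identity to $\l_2$ and splitting the stacked residual blockwise gives
\[
\vxx\tran\Delta_{\l_2}\vxx = \min_{\p} \left( \norm{\MF_{\l_1}\vxx + \ME_{\l_1}\p}^2 + \norm{\MF'\vxx + \ME'\p}^2 \right),
\]
and since the second term is nonnegative for every $\p$,
\[
\vxx\tran\Delta_{\l_2}\vxx \;\geq\; \min_{\p} \norm{\MF_{\l_1}\vxx + \ME_{\l_1}\p}^2 \;=\; \vxx\tran\Delta_{\l_1}\vxx.
\]

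As this holds for all $\vxx$, we obtain $\Delta_{\l_1} \preceq \Delta_{\l_2}$. I expect the only delicate point to be the first (bookkeeping) step: verifying that identical tracks up to $k_1$ really do make the $\l_1$ measurement system a sub-block of the $\l_2$ one. Once that embedding is in place, the monotonicity is free from the variational form, and no appeal to a general Schur-complement monotonicity lemma is needed --- the argument is insensitive to the fact that the eliminated blocks $\ME_{\l_1}\tran\ME_{\l_1}$ and $\ME_{\l_2}\tran\ME_{\l_2}$ differ, because the minimization over $\p$ absorbs that difference automatically.
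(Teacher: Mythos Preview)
Your proof is correct and takes a genuinely different route from the paper's. The paper argues at the level of the joint information matrices $\MOmega^{(\l_1)}_{\k:\hor}$ and $\MOmega^{(\l_2)}_{\k:\hor}$: it first observes $\MOmega^{(\l_2)}_{\k:\hor}\succeq\MOmega^{(\l_1)}_{\k:\hor}$ by additivity, then shows that the Schur complement preserves this ordering by inverting both matrices, extracting the top-left block (which is the inverse of the Schur complement), and inverting again. Your argument instead uses the least-squares/variational characterization $\vxx\tran\Delta_\l\vxx=\min_{\p}\|\MF_\l\vxx+\ME_\l\p\|^2$ and obtains the monotonicity directly by dropping the nonnegative residual from the extra frames. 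The payoff of your approach is twofold: it is more elementary (no block-inversion identity, no appeal to ``diagonal blocks of PSD matrices are PSD''), and it is more robust, since it never requires the full joint information matrices $\MOmega^{(\l)}_{\k:\hor}$ to be invertible --- a hypothesis the paper's proof implicitly uses but which can fail when the number of observations is small relative to the state dimension. The paper's route, on the other hand, makes explicit the general fact that Schur complements are monotone in the Loewner order, which is a reusable lemma beyond this specific setting.
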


The proof of \prettyref{prop:longerFeatureTracks} is given in Appendix~\ref{app:proof:prop:longerFeatureTracks}.
\prettyref{prop:longerFeatureTracks} ensures that features with long tracks carry a ``larger'' information content $\Delta_\l$  
hence contributing more to the objectives~\eqref{eq:probabilisticMetrics} (recall that for both choices of performance metrics 
it holds $f\left(\barMOmega_{\k:\k+\hor} + \Delta_{\l_1} \right) \leq f\left(\barMOmega_{\k:\k+\hor} + \Delta_{\l_2} \right)$ whenever $\Delta_{\l_1} \preceq \Delta_{\l_2}$). Note, however, that \prettyref{prop:longerFeatureTracks} ensures that
 long feature tracks ``dominate'' short tracks only when their future measurements are identical. 
 Therefore, in general, a heuristic selection based on feature track length may provide a suboptimal solution for~\eqref{eq:featureSelection}
(intuitively, such choice would not account for the \emph{geometry} of the features). 
}


\begin{remark}[Linear measurement models]
 Sections~\ref{sec:imuLinearModel} and~\ref{sec:visionLinearModel} 
 provide linear measurement models for inertial and visual measurements. 
 \edited{In particular, we assumed that rotations are known over a short time horizon and this allowed 
 us to obtain measurement models that are linear with respect to the unknown robot state.}
Within our framework, one might directly use linearized models of the 
nonlinear inertial and perspective models commonly used in VIN~\cite{Forster16tro}.
Our choice to design linear models has three motivations. First, we 
operate over a smaller state space (which does not include 
rotations and gyroscope biases), hence making matrix manipulations faster.
Second, we avoid the actual computational cost of linearizing the 
nonlinear models. Third, thanks to the simplicity of the models,
we enable a geometric understanding of our feature selection mechanisms~(\prettyref{sec:guarantees}).
\end{remark}

\section{Attention Allocation: Algorithms and Performance Guarantees}
\label{sec:algorithmsAndGuarantees}

In this section we discuss computational approaches to find a 
set of features 
that approximately solves the feature selection 
problem~\eqref{eq:featureSelection}. It is known that finding the 
optimal subset $\setSopt$ which solves~\eqref{eq:featureSelection} exactly is NP-hard~\cite{Bian06ipsn-sensorSelection}, hence we cannot hope to find 
efficient algorithms to compute $\setSopt$ in real-world problems.\footnote{
	In typical real-world problems, the set of available visual feature is larger than 200, and 
	we are asked to select $10-100$ features, depending on on-board resources.
	In those instances, the cost of a brute force search is prohibitive. 
}
The solution we adopt in this paper is to design \emph{approximation algorithms}, 
which are computationally efficient and provide performance guarantees (roughly speaking, 
produce a set 
which is not far from the optimal subset $\setSopt$).
We remark that we are designing a \emph{covert attention mechanism}: our algorithms 
only select a set of features that have to be retained and used for state estimation, 
while we do not attempt to actively control the motion of the camera.

In the following we present two classes of algorithms.
The former is based on a 
convex relaxation of the original combinatorial problem~\eqref{eq:featureSelection}. 
\edited{We do not claim the convex relaxations as original contributions, since they have been proposed multiple times in other contexts, 
see, e.g.,~\cite{Joshi09tsp-sensorSelection}. We briefly review the convex relaxations and the corresponding performance guarantees 
in~\prettyref{sec:convexRelax} and~\prettyref{sec:convexRelaxGuarantees}.
The second class of approximation algorithms includes  greedy selection methods, and are discussed in greater details  in~\prettyref{sec:greedy}. 
We provide performance guarantees for the greedy algorithms in~\prettyref{sec:guarantees}.}


\subsection{Convex Relaxations}
\label{sec:convexRelax}

This section presents a convex-relaxation approach to compute an approximate solution for problem~\eqref{eq:featureSelection}. 

Using~\eqref{eq:probabilisticMetrics}, we rewrite 
 problem~\eqref{eq:featureSelection} explicitly as:
\bea
\label{eq:featureSelectionExplicit}
\max_{\setS \subset \setF} & f\left( \barMOmega_{\k:\k+\hor} + \sum_{\l \in \setS} p_\l \Delta_\l \right) 
\;
\subject  |\setS| \leq \kappa  
\eea
where $f(\cdot)$ denotes either  $\feigMin(\cdot)$ or  $\flogDet(\cdot)$
(for the moment there is not need to distinguish the two metrics).

Introducing binary variables $s_\l$, for $\l=1,\ldots,N$, we rewrite~\eqref{eq:featureSelectionExplicit} 
equivalently as: 
\bea
\label{eq:featureSelectionBinary}
\max_{s_1, \ldots, s_N} & f\left( \barMOmega_{\k:\k+\hor} + \sum_{\l \in \setS} s_\l p_\l \Delta_\l \right) 
\\
\subject & \sum_{\l=1}^N s_l  \leq \kappa  \;\;, \;\;  s_l \in \{0,1\} \; \forall \; l \in \{1,\ldots,N\}
\nonumber
\eea
Problem~\eqref{eq:featureSelectionBinary} is a binary optimization problem.
While problem~\eqref{eq:featureSelectionBinary} would return the optimal subset $\setSopt$, it is still NP-hard to solve, 
due to the constraint that $s_\l$ have to be binary.

Problem~\eqref{eq:featureSelectionBinary} admits an simple 
convex relaxation:
\bea
\label{eq:convexRelaxation}
f^\star_\cvx = \max_{s_1, \ldots, s_N} & f\left( \barMOmega_{\k:\k+\hor} + \sum_{\l \in \setS} s_\l p_\l \Delta_\l \right) 
\\
\subject & \sum_{\l=1}^N s_l  \leq \kappa  \;\;, \;\;  s_l \in [0,1] \; \forall \; l \in \{1,\ldots,N\}
\nonumber
\eea
where the binary constraint $s_l \in \{0,1\}$ is replaced by the convex constraint 
$s_l \in [0,1]$. Convexity of problem~\eqref{eq:convexRelaxation} follows from the fact that 
 we maximize a concave cost under linear inequality constraints.\footnote{Both the 
smallest eigenvalue and the log-determinant of a positive definite matrix are concave functions~\cite{Boyd04book} 
of the matrix entries.} 

This convex relaxation  has been proposed multiple times in other contexts 
(see, e.g.,~\cite{Joshi09tsp-sensorSelection}). The solution ${s_1^\star, \ldots, s_N^\star}$ of~\eqref{eq:convexRelaxation} is not binary in 
general and a rounding procedure is needed to distinguish the features 
that have to be discarded ($s_\l=0$) from the features that have to be selected ($s_\l=1$).
A common rounding procedure is to simply select the $\kappa$ features with the largest $s_\l$, 
while randomized rounding procedures have also been considered~\cite{Lerner07tro}.
We use the former, and we call $\setScvx$ the set including the indices of the 
$\kappa$ features with the largest $s_\l^\star$, where ${s_1^\star, \ldots, s_N^\star}$ is the 
optimal solution of~\eqref{eq:convexRelaxation}.

\subsection{Performance Guarantees for the Convex Relaxations}
\label{sec:convexRelaxGuarantees}

The convex relaxation~\eqref{eq:convexRelaxation} has been observed to work well in practice, although there is no 
clear (a-priori) performance guarantee on the quality of the set $\setScvx$. 

Let us call $f^\star_\cvx$ the optimal objective of the relaxed problem~\eqref{eq:convexRelaxation}, 
$f(\setScvx)$ the objective attained by the rounded solution, and $f(\setSopt)$ the 
optimal solution of the original NP-hard problem~\eqref{eq:featureSelectionBinary}.
Then, one can easily obtain a-posteriori 
performance bounds by observing that:
\bea
\label{eq:convexInequalityChain}
f(\setScvx) \leq f(\setSopt) \leq f^\star_\cvx 
\eea
where the first inequality follows from optimality of $\setSopt$ 
(any subset of $\kappa$ features has cost at most $f(\setSopt)$), 
while the latter from the fact that~\eqref{eq:convexRelaxation} is a 
relaxation of the original problem.

The chain of inequality~\eqref{eq:convexInequalityChain} suggests a simple (a-posteriori) 
performance bound for the quality of the set produced by the convex relaxation~\eqref{eq:convexRelaxation}:
\bea
\label{eq:guaranteesConvex}
f(\setSopt) - f(\setScvx) \leq f^\star_\cvx - f(\setScvx) 
\eea
i.e., the suboptimality gap $f(\setSopt) - f(\setScvx)$ of the subset $\setScvx$ 
is bounded by the difference $f^\star_\cvx - f(\setScvx)$, which can be computed (a posteriori) 
after solving~\eqref{eq:convexRelaxation}.


\subsection{Greedy Algorithms and Lazy Evaluation}
\label{sec:greedy}

This section presents a second approach to approximately solve problem~\eqref{eq:featureSelection}.
Contrarily to the convex relaxation of \prettyref{sec:convexRelax}, here 
we consider a greedy algorithm that selects $\kappa$ features that (approximately) 
maximize the cost $f(\cdot)$. 

The algorithm 
starts with an empty set $\setSgreedy$ and 
performs 
$\kappa$ iterations. At each iteration, it adds the feature that, 
if added to $\setSgreedy$, induces the largest increase in the cost function.
The pseudocode of the algorithm is given in~\prettyref{alg:greedy}.

\begin{algorithm}[h]
\SetAlgoLined
\textbf{Input:} \ $\barMOmega_{\k:\k+\hor}$, $\Delta_\l$, for $\l = 1,\ldots,N$, and $\kappa$ \; 
\textbf{Output:} \ feature subset $\setSgreedy$ \; 
$\setSgreedy = \emptyset$ \; \label{line:initSet}
 \For{$i = 1,\ldots,\kappa$}{ \label{line:forLoop1}
 	\% Compute upper bound for $f(\setSgreedy \!\cup \!\edited{\{\l\}})$, $\!\l=1,\ldots,N\!\!$ \\
 	$[U_1, \ldots, U_N] = \text{upperBounds}(\barMOmega_{\k:\k+\hor}, \Delta_1, \ldots,\Delta_N)\!$ \;
 	\label{line:upperBound}
 	\% Sort features using upper bound \\
 	$\setF^\downarrow = \text{sort}(U_1,\ldots,U_N)$ \; \label{line:sort}
 	\% Initialize best feature \\
 	$f_\max = -1$ \, ; \, $\l_\max = -1$ \;
 	\For{$\l \in \emph{\setF}^\downarrow$}{ \label{line:forLoop2}
 	 	\If{$U_l < f_\max$}{   \label{line:break1}
  			\textbf{break} \;   \label{line:break2}
		}						\label{line:break3}
 		\If{$f(\setSgreedy \cup \l) > f_\max$}{ \label{line:costCheck1}
  			$f_\max = f(\setSgreedy \cup \l)$ \, ; \,  $\l_\max = \l$ \; \label{line:costCheck2}
		} \label{line:costCheck3}
 	}
  $\setSgreedy = \setSgreedy \cup \l_\max$ \; \label{line:addFeature}
 }
 \caption{Greedy algorithm with lazy evaluation\label{alg:greedy}}
\end{algorithm}  

In line~\ref{line:initSet} the algorithm starts with an empty set.
The ``for'' loop in line~\ref{line:forLoop1} iterates $\kappa$ times: 
at each time the best feature is added to the subset $\setSgreedy$ (line~\ref{line:addFeature}).
The role of the ``for'' loop in line~\ref{line:forLoop2} is to compute the feature that induces 
the maximum increase in the cost (lines~\ref{line:costCheck1}-\ref{line:costCheck3}).
The remaining lines provide a lazy evaluation mechanism. 
For each feature $\l$ we compute an \editedTwo{upper bound on the cost $f(\setSgreedy \cup \{\l\})$} (line~\ref{line:upperBound}).
The features are sorted (in descending order)
 according to this upper bound (line~\ref{line:sort}).
 The advantage of this is that by comparing the current best feature with 
 this upper bound (line~\ref{line:break1}) we can avoid checking features that 
 are guaranteed to attain a smaller cost.

 Clearly, the lazy evaluation is advantageous if the upper bound is faster to compute 
 than the actual cost. The following propositions provide two useful (and computationally 
 cheap) upper bounds for our cost functions.

\begin{proposition}[Upper bounds for $\logdet$: Hadamard's inequality, Thm 7.8.1~\cite{Horn85book}]
\label{prop:boundLogDet}
For a positive definite matrix $\MM \in \Real{n \times n}$ with diagonal elements $\MM_{ii}$, 
it holds: 
\bea
\det(\MM) \leq \textstyle \prod_{i=1}^n \MM_{ii} 
\Leftrightarrow
\logdet(\MM) \leq \textstyle \sum_{i=1}^m \log \MM_{ii}
\eea
\end{proposition}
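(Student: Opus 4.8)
The plan is to establish the determinant form $\det(\MM) \leq \prod_{i=1}^n \MM_{ii}$ directly; the logarithmic form then follows immediately, since $\log$ is monotone increasing, so the stated equivalence $\Leftrightarrow$ is merely a matter of taking logarithms on both sides of a single inequality between positive quantities.

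First I would reduce to the unit-diagonal case by normalization. Since $\MM \succeq 0$ is positive definite, each diagonal entry satisfies $\MM_{ii} = \ve_i\tran \MM \ve_i > 0$, so the diagonal matrix $D = \diag{\MM_{11},\ldots,\MM_{nn}}$ is invertible and admits a positive square root $D^{1/2}$. I would then form the congruent matrix $\tilde\MM \doteq D^{-1/2} \MM D^{-1/2}$, which remains positive definite (congruence by the invertible $D^{-1/2}$) and, by construction, has every diagonal entry equal to one. The determinant factorizes as $\det(\MM) = \det(D)\,\det(\tilde\MM) = \left(\prod_{i=1}^n \MM_{ii}\right)\det(\tilde\MM)$, so the claim reduces to proving $\det(\tilde\MM) \leq 1$.

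For this reduced claim I would invoke the arithmetic--geometric mean inequality on the spectrum. Let $\lambda_1,\ldots,\lambda_n > 0$ be the eigenvalues of $\tilde\MM$. Because the diagonal of $\tilde\MM$ consists of all ones, its trace equals $n$, hence $\sum_{i=1}^n \lambda_i = \trace{\tilde\MM} = n$. Applying AM--GM then gives $\det(\tilde\MM) = \prod_{i=1}^n \lambda_i \leq \left(\frac{1}{n}\sum_{i=1}^n \lambda_i\right)^n = 1$, which is exactly the required bound. Substituting back into the determinant identity yields $\det(\MM) \leq \prod_{i=1}^n \MM_{ii}$, and taking logarithms produces the second inequality.

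I do not anticipate a serious obstacle, as this is the classical Hadamard inequality for positive definite matrices; the only points needing care are checking that normalization preserves positive definiteness and correctly tracking the determinant factors in the congruence. An equally short alternative would use the Cholesky factorization $\MM = \ML\ML\tran$ with $\ML$ lower-triangular and positive diagonal: there $\det(\MM) = \prod_{i=1}^n \ML_{ii}^2$ while $\MM_{ii} = \sum_{j \leq i} \ML_{ij}^2 \geq \ML_{ii}^2$, and multiplying these bounds over $i$ gives the result directly without recourse to AM--GM.
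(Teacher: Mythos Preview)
Your proof is correct and entirely standard: the normalization-plus-AM--GM argument you give is precisely the classical proof of Hadamard's inequality (and your Cholesky alternative is equally valid). Note, however, that the paper does not supply its own proof of this proposition; it simply cites the result from Horn and Johnson's \emph{Matrix Analysis} (Theorem 7.8.1) and uses it as a black box to obtain a cheap upper bound for the lazy evaluation in Algorithm~1. So there is no ``paper's proof'' to compare against---your argument fills in what the paper deliberately omits, and does so in the textbook way.
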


\begin{proposition}[Eigenvalue Perturbation Bound~\cite{Ipsen09siam-eigenvaluePerturbation}]
\label{prop:boundsIpsen}
Given Hermitian matrices $\MM, \Delta \in \Real{n \times n}$, and 
denoting with $\lambda_i(\MM)$ the $i$-th eigenvalue of $\MM$, the following inequalities hold:
\bea
|\lambda_i(\MM+\Delta) - \lambda_i(\MM)| &\leq& \|\Delta\|  \label{eq:weyl}\\
\min_j |\lambda_i(\MM) - \lambda_j(\MM+\Delta)| &\leq& \|\Delta \vv_i \|  \label{eq:ipsen}
\eea
where $\vv_i$ is the eigenvector of $\MM$ associated to $\lambda_i(\MM)$.
\end{proposition}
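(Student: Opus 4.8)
The plan is to treat the two inequalities separately, since \eqref{eq:weyl} is a classical consequence of Weyl's monotonicity theorem, while \eqref{eq:ipsen} is a resolvent-norm argument that exploits the fact that the perturbed matrix $\MM+\Delta$ is Hermitian (hence normal and unitarily diagonalizable).

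First I would establish \eqref{eq:weyl}. Since $\MM$ and $\Delta$ are Hermitian, so is $\MM+\Delta$, and all eigenvalues involved are real. Applying the Courant--Fischer min-max characterization to the additive splitting $\MM+\Delta$ yields the two-sided bound
\[
\lambda_i(\MM) + \lambdaMin(\Delta) \;\leq\; \lambda_i(\MM+\Delta) \;\leq\; \lambda_i(\MM) + \lambdaMax(\Delta).
\]
Rearranging gives $\lambdaMin(\Delta) \leq \lambda_i(\MM+\Delta) - \lambda_i(\MM) \leq \lambdaMax(\Delta)$. Because $\Delta$ is Hermitian, its spectral norm equals $\max(|\lambdaMax(\Delta)|,|\lambdaMin(\Delta)|) = \|\Delta\|$, so $\lambdaMax(\Delta) \leq \|\Delta\|$ and $\lambdaMin(\Delta) \geq -\|\Delta\|$; both sides of the displayed difference are therefore dominated in absolute value by $\|\Delta\|$, which is exactly \eqref{eq:weyl}.

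For \eqref{eq:ipsen}, I would start from the eigen-relation $\MM \vv_i = \lambda_i(\MM)\,\vv_i$ with $\|\vv_i\| = 1$, and rewrite it as
\[
(\MM+\Delta - \lambda_i(\MM)\,\eye)\, \vv_i \;=\; \Delta \vv_i .
\]
If $\lambda_i(\MM)$ happens to coincide with an eigenvalue of $\MM+\Delta$, the left-hand side of \eqref{eq:ipsen} vanishes and there is nothing to prove; otherwise $\MM+\Delta - \lambda_i(\MM)\,\eye$ is invertible and I can write $\vv_i = (\MM+\Delta-\lambda_i(\MM)\,\eye)^{-1}\Delta\vv_i$. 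Taking norms and using sub-multiplicativity gives $1 = \|\vv_i\| \leq \|(\MM+\Delta-\lambda_i(\MM)\,\eye)^{-1}\|\,\|\Delta\vv_i\|$. The crucial step is to evaluate the resolvent norm: since $\MM+\Delta$ is Hermitian it is unitarily diagonalizable, so the spectral norm of $(\MM+\Delta-\lambda_i(\MM)\,\eye)^{-1}$ equals $1/\min_j |\lambda_j(\MM+\Delta) - \lambda_i(\MM)|$. Substituting and rearranging produces $\min_j |\lambda_i(\MM)-\lambda_j(\MM+\Delta)| \leq \|\Delta\vv_i\|$, which is precisely \eqref{eq:ipsen}.

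The main obstacle is this resolvent-norm identity: it is the single place where Hermiticity (normality) is indispensable, because for a general matrix the spectral norm of the resolvent is controlled by the pseudospectrum rather than by the distance to the spectrum, and the clean bound would fail. Invoking the spectral theorem for $\MM+\Delta$ settles it directly. Beyond that, the only bookkeeping is the degenerate case in which $\lambda_i(\MM)$ is itself an eigenvalue of $\MM+\Delta$, which I dispose of above by observing that the left-hand side of \eqref{eq:ipsen} is then zero so the inequality holds trivially.
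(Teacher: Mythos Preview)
Your argument is correct. The paper does not actually prove this proposition: it merely cites the result, remarking that \eqref{eq:weyl} is the classical Weyl inequality and that \eqref{eq:ipsen} is a tighter bound due to Ipsen and Nadler. Your derivation---Courant--Fischer for \eqref{eq:weyl} and the resolvent/Bauer--Fike-style argument for \eqref{eq:ipsen}---is the standard route and fills in exactly what the paper leaves to the reference.
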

Eq.~\eqref{eq:weyl} is a restatement of the classical Weyl inequality, while~\eqref{eq:ipsen} is a tighter bound from Ipsen
 and Nadler~\cite{Ipsen09siam-eigenvaluePerturbation}.
 To clarify how the bounds in~\prettyref{prop:boundsIpsen} provide us with an upper bound for $\lambdaMin$, we prove the following result.

\begin{corollary}[Upper bounds for $\lambdaMin$]
\label{prop:boundEigMin}
Given two symmetric and positive semidefinite matrices $\MM, \Delta \in \Real{n \times n}$ the following inequality holds:
\bea
\label{eq:boundEigMin}
\lambdaMin(\MM+\Delta) \leq \lambdaMin(\MM) + \|\Delta \vv_\min \| 
\eea
where $\vv_\min$ is the eigenvector of $\MM$ associated to the smallest eigenvalue $\lambdaMin(\MM)$.
\end{corollary}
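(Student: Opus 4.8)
The plan is to instantiate the Ipsen--Nadler bound~\eqref{eq:ipsen} of \prettyref{prop:boundsIpsen} at the index corresponding to the smallest eigenvalue of $\MM$, and then exploit the elementary fact that the minimum eigenvalue of $\MM+\Delta$ underestimates \emph{every} eigenvalue of $\MM+\Delta$.

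First I would let $i$ be the index with $\lambda_i(\MM) = \lambdaMin(\MM)$, so that the associated eigenvector is $\vv_i = \vv_\min$. Applying~\eqref{eq:ipsen} with this choice yields $\min_j |\lambdaMin(\MM) - \lambda_j(\MM+\Delta)| \leq \|\Delta \vv_\min\|$. By definition of the minimum over $j$, some index $j^\star$ attains it, so $|\lambdaMin(\MM) - \lambda_{j^\star}(\MM+\Delta)| \leq \|\Delta \vv_\min\|$; keeping only the direction I need gives $\lambda_{j^\star}(\MM+\Delta) \leq \lambdaMin(\MM) + \|\Delta \vv_\min\|$. Then, since $\lambdaMin(\MM+\Delta)$ is by definition the smallest eigenvalue of $\MM+\Delta$, it is no larger than the particular eigenvalue $\lambda_{j^\star}(\MM+\Delta)$, and chaining the two inequalities produces exactly~\eqref{eq:boundEigMin}. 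Note that only symmetry (real spectrum) is truly used in this argument; the positive semidefiniteness stated in the hypothesis is what aligns the matrices with the additive information-matrix framework of~\eqref{eq:probabilisticMetrics} in which the corollary is later invoked.

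The one subtle point, and the thing to be careful about, is that~\eqref{eq:ipsen} guarantees only the \emph{existence} of some eigenvalue $\lambda_{j^\star}(\MM+\Delta)$ close to $\lambdaMin(\MM)$, without identifying it as the smallest eigenvalue of the perturbed matrix; taken literally the bound therefore does not directly control $\lambdaMin(\MM+\Delta)$. This gap closes for free precisely because we are after an \emph{upper} bound: any eigenvalue of $\MM+\Delta$ dominates its minimum, so exhibiting a single eigenvalue that lies below $\lambdaMin(\MM) + \|\Delta \vv_\min\|$ automatically bounds $\lambdaMin(\MM+\Delta)$ from above. Had we instead sought a lower bound on $\lambdaMin(\MM+\Delta)$, this one-sided slack would be insufficient, and one would have to fall back on the two-sided Weyl estimate~\eqref{eq:weyl}.
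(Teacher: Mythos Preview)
Your proof is correct and follows the same starting point as the paper: instantiate~\eqref{eq:ipsen} at the index of $\lambdaMin(\MM)$. The difference lies in how the two arguments dispose of the $\min_j$. The paper first invokes Weyl's inequality (using $\Delta \succeq 0$) to deduce $\lambda_j(\MM+\Delta) \geq \lambdaMin(\MM)$ for every $j$; this simultaneously removes the absolute value and pins down the minimizing index as the one corresponding to $\lambdaMin(\MM+\Delta)$, so that~\eqref{eq:ipsen} reads directly as $\lambdaMin(\MM+\Delta) - \lambdaMin(\MM) \leq \|\Delta \vv_\min\|$. You instead avoid identifying the minimizer altogether: you extract a single eigenvalue $\lambda_{j^\star}(\MM+\Delta)$ close to $\lambdaMin(\MM)$ and then use the trivial bound $\lambdaMin(\MM+\Delta) \leq \lambda_{j^\star}(\MM+\Delta)$. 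Your route is slightly more elementary and, as you note, needs only that the spectrum is real, not that $\Delta \succeq 0$; the paper's route gives the marginally sharper information that the nearest perturbed eigenvalue \emph{is} the smallest one, at the cost of the extra hypothesis.
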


\edited{The proof of Corollary~\ref{prop:boundEigMin} is given in Appendix~\ref{app:proof:prop:boundEigMin}.}
While~\prettyref{alg:greedy} highlights the simplicity of the greedy algorithm, it is unclear 
whether this algorithm produces good subsets of features. We tackle this question in the next section. 
\subsection{Performance Guarantees for the Greedy Algorithm}
\label{sec:guarantees}

This section shows that the greedy algorithm (\prettyref{alg:greedy})
 admits provable sub-optimality bounds. These bounds
guarantee that the greedy selection cannot perform much worse than the
optimal strategy.
The section tackles separately the two metrics presented in~\prettyref{sec:metrics}, 
since the corresponding performance guarantees are fundamentally different.

Our results are based on the recent literature on submodularity and 
submodular maximization.
Before delving in the guarantees for each metric, we provide 
few preliminary definitions, which can be safely skipped by the expert reader. 

\begin{definition}[Normalized and Monotone Set Function~\cite{nemhauser78mp-submodularity}]
\label{def:normalizedMonotone}
A set function $f: 2^\setF \rightarrow \Real{}$ is said to be \emph{normalized} if $f(\emptyset) = 0$;
$f(\setS)$ is said to be \emph{monotone} (non-decreasing) if for any subsets $\setA \subseteq \setB \subseteq \setF$, 
it holds $f(\setA) \leq f(\setB)$.
\end{definition}

\begin{definition}[Submodularity~\cite{nemhauser78mp-submodularity}]
A set function $f: 2^\setF \rightarrow \Real{}$ is \emph{submodular} if, 
for any subsets $\setA \subseteq \setB \subseteq \setF$, and for any element $e \in \setF \setminus \setB$,
 it holds that:
\bea
f(\setA \cup \{e\}) - f(\setA) \geq f(\setB \cup \{e\}) - f(\setB)
\eea
\end{definition}
Submodularity formalizes the notion of diminishing returns: 
adding a measurement to a small set of measurement is more advantageous than adding 
it to a large set. 
 Our interest towards submodularity is motivated by the following result.
\begin{proposition}[Near-optimal submodular maximization~\cite{nemhauser78mp-submodularity}]
\label{prop:submodularityGuarantees}
Given a normalized, monotone, submodular set function $f: 2^\setF \rightarrow \Real{}$,
and calling $\setSopt$ the optimal solution of the maximization problem~\eqref{eq:featureSelection}, 
then the set $\setSgreedy$, computed by the greedy~\prettyref{alg:greedy}, is such that:
\bea
f(\setSgreedy)  \geq (1-1/\e) f(\setSopt) \approx 0.63 f(\setSopt)
\eea
\end{proposition}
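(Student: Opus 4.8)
The plan is to prove Proposition~\ref{prop:submodularityGuarantees} by the classical greedy analysis of Nemhauser, Wolsey, and Fisher~\cite{nemhauser78mp-submodularity}: once $f$ is assumed normalized, monotone, and submodular, the $(1-1/\e)$ bound follows from a purely combinatorial argument that is independent of the specific VIN metric. I would first observe that the lazy evaluation in~\prettyref{alg:greedy} does not alter the output: a feature $\l$ is skipped (lines~\ref{line:break1}--\ref{line:break3}) only when its upper bound $U_\l$ is already below the best marginal gain found so far, so the pruned features cannot be maximizers. Hence $\setSgreedy$ coincides with the set produced by naive greedy, and I can analyze the latter.

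Next I would set up the recursion. Let $\setSgreedy_i$ denote the greedy set after $i$ iterations, so $\setSgreedy_0 = \emptyset$ and $\setSgreedy_\kappa = \setSgreedy$; write the optimizer of~\eqref{eq:featureSelection} as $\setSopt = \{o_1,\ldots,o_\kappa\}$, padding with arbitrary elements if $|\setSopt| < \kappa$ (allowed by monotonicity, which guarantees this does not decrease the objective). The crux is a per-iteration lower bound on the greedy marginal gain, obtained from the chain
\begin{align*}
f(\setSopt) &\leq f(\setSopt \cup \setSgreedy_i) \\
&= f(\setSgreedy_i) + \sum_{j=1}^{\kappa} \big[ f(\setSgreedy_i \cup \{o_1,\ldots,o_j\}) - f(\setSgreedy_i \cup \{o_1,\ldots,o_{j-1}\}) \big] \\
&\leq f(\setSgreedy_i) + \sum_{j=1}^{\kappa} \big[ f(\setSgreedy_i \cup \{o_j\}) - f(\setSgreedy_i) \big] \\
&\leq f(\setSgreedy_i) + \kappa \big[ f(\setSgreedy_{i+1}) - f(\setSgreedy_i) \big],
\end{align*}
where the first inequality is monotonicity, the equality is a telescoping sum, the second inequality is submodularity (each gain from adding $o_j$ is evaluated at the smaller set $\setSgreedy_i$ rather than at the larger prefix), and the last inequality uses that greedy selects the single element of maximal marginal gain.

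Finally I would close the argument by introducing $\delta_i \doteq f(\setSopt) - f(\setSgreedy_i)$. The chain rearranges to $\delta_{i+1} \leq (1 - 1/\kappa)\,\delta_i$, and iterating from $i=0$ gives $\delta_\kappa \leq (1-1/\kappa)^\kappa \delta_0$. Normalization ($f(\emptyset)=0$) yields $\delta_0 = f(\setSopt)$, and the elementary inequality $(1-1/\kappa)^\kappa \leq 1/\e$ then gives $f(\setSopt) - f(\setSgreedy) \leq (1/\e)\, f(\setSopt)$, i.e., the claim. I expect no real obstacle in the combinatorial step itself, which is textbook; the genuine difficulty lies \emph{outside} this proposition, namely in verifying that the VIN cost actually satisfies the three hypotheses. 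This is delicate because $\flogDet$ is not normalized (one has $f(\emptyset) = \logdet(\barMOmega_{\k:\k+\hor}) \neq 0$, so a shifted/normalized surrogate must be used) and $\feigMin$ is not submodular in general, so the bound as stated applies cleanly only after these structural properties are separately established for the chosen metric.
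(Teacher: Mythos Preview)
The paper does not actually prove this proposition: it is stated with a citation to Nemhauser~\etal~\cite{nemhauser78mp-submodularity} and used as a black-box classical result, so there is no ``paper's own proof'' to compare against. Your argument is the standard Nemhauser--Wolsey--Fisher proof and is correct, including the observation that lazy evaluation leaves the greedy output unchanged (since the upper bounds in~\prettyref{alg:greedy} are valid by Propositions~\ref{prop:boundLogDet}--\ref{prop:boundsIpsen} and~\prettyref{prop:boundEigMin}); your closing remarks about normalization of $\flogDet$ and non-submodularity of $\feigMin$ are also exactly the caveats the paper raises immediately after this proposition (see~\prettyref{prop:submodularityLogDet} and~\prettyref{cor:approxSumLambdaMin}).
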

This bound ensures us that the worst-case performance of the greedy algorithm 
cannot be far from the optimum. In the following we tailor this result to our 
feature selection problem.

\subsubsection{Sub-optimality guarantees for $\logdet$}

It is possible to show that $\logdet$ is submodular with respect to the 
set of measurements used for estimation. This result and the 
corresponding performance guarantees are formalized as follows.

\begin{proposition}[Submodularity of $\logdet$~\cite{Shamaiah10cdc-sensorScheduling}]
\label{prop:submodularityLogDet}
The set function $\flogDet(\setS)$ defined in~\eqref{eq:logDet}
is monotone and submodular. Moreover, the greedy algorithm applied to~\eqref{eq:featureSelection} using $\flogDet(\setS)$ as objective enjoys
the following performance guarantees:
\bea
\label{eq:submodularityLogDet}
\flogDet(\setS) \geq (1-1/\e) \flogDet(\setSopt) + \frac{\flogDet(\emptyset)}{\e}
\eea
\end{proposition}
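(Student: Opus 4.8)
The plan is to verify the three hypotheses of Proposition~\ref{prop:submodularityGuarantees} (normalization, monotonicity, submodularity) for the set function $\flogDet$, and then handle the fact that $\flogDet$ is not normalized by a simple shift argument. Throughout I write $\MOmega(\setS) \doteq \barMOmega_{\k:\k+\hor} + \sum_{\l \in \setS} p_\l \Delta_\l$, and I use that $\barMOmega_{\k:\k+\hor} \succ 0$ while each $p_\l \Delta_\l \succeq 0$, so $\MOmega(\setS) \succ 0$ for every $\setS$ and all determinants involved are strictly positive.

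First I would establish monotonicity. For any $\setA \subseteq \setB \subseteq \setF$ we have $\MOmega(\setB) = \MOmega(\setA) + \sum_{\l \in \setB \setminus \setA} p_\l \Delta_\l \succeq \MOmega(\setA) \succ 0$, and since $\MM \succeq \MN \succ 0$ implies $\det(\MM) \geq \det(\MN)$, and hence $\logdet(\MM) \geq \logdet(\MN)$, monotonicity follows immediately.

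The core of the argument is submodularity. Fix $\setA \subseteq \setB$ and an element $e \in \setF \setminus \setB$, and set $\MD \doteq p_e \Delta_e \succeq 0$. The marginal gain of adding $e$ admits the integral representation
\begin{equation}
\logdet(\MOmega(\setS) + \MD) - \logdet(\MOmega(\setS)) = \int_0^1 \trace{(\MOmega(\setS) + t\MD)\inv \MD}\, \mathrm{d}t,
\end{equation}
obtained from the identity $\frac{\mathrm{d}}{\mathrm{d}t}\logdet(\MX + t\MD) = \trace{(\MX + t\MD)\inv \MD}$. Because $\MOmega(\setA) \preceq \MOmega(\setB)$, for every $t \in [0,1]$ we have $\MOmega(\setA) + t\MD \preceq \MOmega(\setB) + t\MD$; since matrix inversion reverses the positive-definite (L\"owner) order, this yields $(\MOmega(\setA) + t\MD)\inv \succeq (\MOmega(\setB) + t\MD)\inv$. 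Finally, for $\MD \succeq 0$ the linear map $\MX \mapsto \trace{\MX \MD}$ is monotone on the PSD cone, so the integrand evaluated at $\setA$ dominates that evaluated at $\setB$ pointwise in $t$. Integrating over $[0,1]$ gives $\flogDet(\setA \cup \{e\}) - \flogDet(\setA) \geq \flogDet(\setB \cup \{e\}) - \flogDet(\setB)$, which is precisely submodularity.

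The only remaining subtlety is that $\flogDet$ is not normalized, since $\flogDet(\emptyset) = \logdet(\barMOmega_{\k:\k+\hor}) \neq 0$ in general, so Proposition~\ref{prop:submodularityGuarantees} does not apply verbatim. I would therefore apply it to the shifted function $g(\setS) \doteq \flogDet(\setS) - \flogDet(\emptyset)$, which is normalized and inherits both monotonicity and submodularity (subtracting a constant preserves both). Nemhauser's bound then gives $g(\setSgreedy) \geq (1 - 1/\e)\, g(\setSopt)$; substituting $g = \flogDet - \flogDet(\emptyset)$ and rearranging produces exactly $\flogDet(\setSgreedy) \geq (1 - 1/\e)\, \flogDet(\setSopt) + \flogDet(\emptyset)/\e$, as claimed. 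The main obstacle is the submodularity step: one must justify the derivative-of-$\logdet$ identity and, crucially, combine the order-reversing property of inversion with the monotonicity of $\trace{\cdot\,\MD}$ on the PSD cone. Everything else is bookkeeping made possible by the strict positive definiteness of $\barMOmega_{\k:\k+\hor}$.
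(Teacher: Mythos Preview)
Your proposal is correct and aligns with the paper's treatment. The paper does not actually prove monotonicity and submodularity in-house; it cites~\cite{Shamaiah10cdc-sensorScheduling} for those facts and then, exactly as you do, applies Proposition~\ref{prop:submodularityGuarantees} to the shifted function $\flogDet(\setS) - \flogDet(\emptyset)$ to obtain the extra $\flogDet(\emptyset)/\e$ term (the paper also credits~\cite{Jawaid15automatica-sensorScheduling} for pointing out the need for this normalization fix). Your self-contained derivation of submodularity via the integral representation of the $\logdet$ marginal gain and the order-reversing property of matrix inversion is a standard and valid route, so your argument simply fills in what the paper delegates to the literature.
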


The result is proven in~\cite{Shamaiah10cdc-sensorScheduling} 
and has been later rectified to account for the need of normalized functions in~\cite{Jawaid15automatica-sensorScheduling}.
The extra term $\frac{\flogDet(\emptyset)}{e}$ in~\eqref{eq:submodularityLogDet} indeed follows from the application 
of~\prettyref{prop:submodularityGuarantees} 
to the normalized function $\flogDet(\setS) - \flogDet(\emptyset)$.

\subsubsection{Sub-optimality guarantees for $\lambdaMin$}
 Currently, no result is readily available to bound the suboptimality gap of the greedy algorithm applied to the maximization of the smallest
  eigenvalue of the information matrix (or equivalently minimizing the largest eigenvalue of the covariance).
  Indeed, related work provides counterexamples, showing that this metric is not submodular in general, 
  while the greedy algorithm is observed to perform well in practice~\cite{Jawaid15automatica-sensorScheduling}.
  In this section we provide a first result showing that, despite the fact that $\feigMin(\setS)$ fails 
  to be submodular, it is not far from a submodular function. This notion is made more precise in the following.

\begin{definition}[Submodularity ratio~\cite{Das11icml-submodularity,Das12nips-submodularity}]
The \emph{submodularity ratio} of a non-negative set function $f(\cdot)$ with respect to a set $\setS$ and an integer $\kappa \geq 1$ 
is defined as:
\bea
\label{eq:subRatio}
\gamma_{\setS} \doteq \min_{ \substack{ \setL \subseteq \setS, \\ \setE: |\setE|\leq\kappa, \setE \; \cap \setL = \emptyset} }
 \frac{ \sum_{e \in \setE} \left( f(\setL \cup \{e\}) - f(\setL) \right) }{ f(\setL \cup \setE) - f(\setL) }
\eea
\end{definition}
It is possible to show that if $\gamma_{\setS} \geq 1$, then the function $f(\cdot)$ is submodular.
However, in this context we are interested in the submodularity ratio, since it enables a 
less restrictive sub-optimality bound, as described in the following proposition.

\begin{proposition}[Approximate submodular maximization~\cite{Das11icml-submodularity,Das12nips-submodularity}]
\label{prop:approxSub}
Let $f(\cdot)$ be a non-negative monotone set function and let 
$\setSopt$ be the optimal solution of the maximization problem~\eqref{eq:featureSelection}, 
then the set $\setSgreedy$, computed by the greedy~\prettyref{alg:greedy} is such that:
\bea
f(\setSgreedy) \geq (1- \e^{-\gamma_{\setSgreedy}}) f(\setSopt)
\eea
where $\gamma_{\setSgreedy}$ is the submodularity ratio of $f(\cdot)$ with respect to $\setSgreedy$ 
and $\kappa = |\setSgreedy|$.
\end{proposition}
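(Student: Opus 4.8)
The plan is to adapt the classical greedy analysis of Nemhauser, Wolsey, and Fisher to the approximately-submodular setting, replacing each invocation of submodularity with the inequality furnished by the submodularity ratio $\gamma_{\setSgreedy}$. Let $\setSgreedy_i$ denote the greedy set after $i$ iterations of \prettyref{alg:greedy} (so $\setSgreedy_0 = \emptyset$ and $\setSgreedy_\kappa = \setSgreedy$), and let $d_i \doteq f(\setSopt) - f(\setSgreedy_i)$ be the optimality gap after step $i$, which is non-negative since each $\setSgreedy_i$ is feasible for~\eqref{eq:featureSelection}. The goal is to establish the per-step contraction $d_{i+1} \leq (1 - \gamma_{\setSgreedy}/\kappa)\, d_i$, from which the claim follows by unrolling the recursion.

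First I would apply the definition~\eqref{eq:subRatio} of the submodularity ratio with the choice $\setL = \setSgreedy_i$ and $\setE = \setSopt \setminus \setSgreedy_i$. Since every intermediate set satisfies $\setSgreedy_i \subseteq \setSgreedy$, and since $|\setE| \leq |\setSopt| \leq \kappa$ with $\setE \cap \setL = \emptyset$, this pair lies in the feasible region of the minimization defining $\gamma_{\setSgreedy}$, so
\begin{equation*}
\sum_{e \in \setSopt \setminus \setSgreedy_i} \left( f(\setSgreedy_i \cup \{e\}) - f(\setSgreedy_i) \right) \geq \gamma_{\setSgreedy} \left( f(\setSgreedy_i \cup \setSopt) - f(\setSgreedy_i) \right).
\end{equation*}
Monotonicity of $f$ gives $f(\setSgreedy_i \cup \setSopt) \geq f(\setSopt)$, so the right-hand side is at least $\gamma_{\setSgreedy}\, d_i$. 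Because the greedy step selects the element of maximal marginal gain over a collection that contains $\setSopt \setminus \setSgreedy_i$, and the sum above has at most $\kappa$ terms, the chosen gain dominates the average, yielding $f(\setSgreedy_{i+1}) - f(\setSgreedy_i) \geq \frac{\gamma_{\setSgreedy}}{\kappa}\, d_i$. Rearranging gives the desired contraction.

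To close the argument I would induct on $i$ to obtain $d_\kappa \leq (1 - \gamma_{\setSgreedy}/\kappa)^\kappa\, d_0$, and then use non-negativity of $f$ in the form $d_0 = f(\setSopt) - f(\emptyset) \leq f(\setSopt)$. Combining these with the elementary bound $(1 - \gamma_{\setSgreedy}/\kappa)^\kappa \leq \e^{-\gamma_{\setSgreedy}}$ produces $f(\setSopt) - f(\setSgreedy) \leq \e^{-\gamma_{\setSgreedy}} f(\setSopt)$, which rearranges to exactly $f(\setSgreedy) \geq (1 - \e^{-\gamma_{\setSgreedy}}) f(\setSopt)$.

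I expect the main subtlety to be the bookkeeping around the definition of the submodularity ratio: one must verify that the specific pair $(\setL,\setE)$ used at \emph{every} greedy step is admissible for the \emph{same} constant $\gamma_{\setSgreedy}$, which hinges on the nesting $\setSgreedy_i \subseteq \setSgreedy$ and on taking $\setE = \setSopt \setminus \setSgreedy_i$ (rather than $\setSopt$ itself) so that disjointness $\setE \cap \setL = \emptyset$ holds. A secondary point is that the averaging step requires the number of summands to be bounded by $\kappa$ and not by $N$, so it is essential that $|\setSopt| \leq \kappa$ and that the greedy maximization ranges over a superset of $\setSopt \setminus \setSgreedy_i$; both are guaranteed by the structure of~\eqref{eq:featureSelection} and \prettyref{alg:greedy}.
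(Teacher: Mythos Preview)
Your argument is correct and is precisely the standard proof from the cited references of Das and Kempe. Note that the paper itself does not supply a proof of this proposition: it is stated as a known result and attributed to~\cite{Das11icml-submodularity,Das12nips-submodularity}, so there is no ``paper's own proof'' to compare against. Your write-up faithfully reconstructs that argument, including the two points that require care (admissibility of $(\setL,\setE)=(\setSgreedy_i,\setSopt\setminus\setSgreedy_i)$ for the fixed ratio $\gamma_{\setSgreedy}$ via the nesting $\setSgreedy_i\subseteq\setSgreedy$, and bounding the number of summands by $\kappa$ rather than $N$). One minor remark: the final step $(1-\gamma_{\setSgreedy}/\kappa)^\kappa\leq\e^{-\gamma_{\setSgreedy}}$ implicitly uses $\gamma_{\setSgreedy}\leq\kappa$; this is automatic because taking $\setE$ to be any singleton in~\eqref{eq:subRatio} shows $\gamma_{\setSgreedy}\leq 1$.
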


\prettyref{prop:approxSub} provides a multiplicative suboptimality bound
whenever $\gamma_{\setSgreedy} > 0$. In the following we show that this is 
generally the case when maximizing the smallest eigenvalue.

\begin{proposition}[Non-vanishing Submodularity ratio of $\lambdaMin$]
\label{prop:nonVanishingRatio}
Call $\setSgreedy$ the set returned by the greedy algorithm maximizing $\lambdaMin$. 
For any set $\setL \subseteq \setSgreedy$  
call $\vmubar$ the eigenvector corresponding to the smallest eigenvalue of the 
matrix $\barMOmega_{\k:\k+\hor} + \sum_{\l \in \setL} \Delta_\l$.
Moreover call $\vmubar_0, \vmubar_2, \ldots, \vmubar_\hor \in \Real{3}$, the subvectors of 
$\vmubar$ corresponding the robot positions at time $\k,\ldots,\k+\hor$.
Then the submodularity ratio of $\lambdaMin$ is bounded away from zero if 
  $\vmubar_i \neq \vmubar_j$, for some $i, j$.
\end{proposition}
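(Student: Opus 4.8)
The plan is to show that every ratio appearing in the minimum that defines $\gamma_{\setSgreedy}$ is bounded below by a strictly positive constant. Fix $\setL\subseteq\setSgreedy$ and a candidate set $\setE$, and abbreviate $\MM_\setL \doteq \barMOmega_{\k:\k+\hor} + \sum_{\l\in\setL}\Delta_\l$. Let $\lambda_1 \doteq \lambdaMin(\MM_\setL)$, assumed simple with unit eigenvector $\vmubar$ and spectral gap $\delta_\setL \doteq \lambda_2 - \lambda_1 > 0$, where $\lambda_2$ is the second-smallest eigenvalue. Since $f = \feigMin$ and every $\Delta_\l\succeq 0$, both the numerator $\sum_{e\in\setE}[\lambdaMin(\MM_\setL + p_e\Delta_e) - \lambda_1]$ and the denominator $\lambdaMin(\MM_\setL + \sum_{e\in\setE}p_e\Delta_e) - \lambda_1$ are non-negative. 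I would bound the denominator from above and the numerator from below, both in terms of the scalars $\vmubar\tran\Delta_e\vmubar$.

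For the denominator I would invoke concavity of $\lambdaMin$: its supergradient at $\MM_\setL$ is $\vmubar\vmubar\tran$, so $\lambdaMin(\MM_\setL + X) \leq \lambda_1 + \vmubar\tran X\vmubar$ for any symmetric $X$; taking $X = \sum_{e\in\setE}p_e\Delta_e$ gives $\text{den}\leq\sum_{e\in\setE}p_e\,\vmubar\tran\Delta_e\vmubar$. The crux is a matching lower bound on each single-feature gain. Writing $A = p_e\Delta_e\succeq 0$ and letting $x = \cos\theta\,\vmubar + \sin\theta\,\vr$ (with $\vr\perp\vmubar$ unit) be the unit smallest-eigenvector of $\MM_\setL + A$ for its eigenvalue $\mu$, I would split $\mu = x\tran\MM_\setL x + x\tran A x$ and bound the terms separately: $x\tran\MM_\setL x \geq \lambda_1 + \sin^2\theta\,\delta_\setL$ (since $\vr\perp\vmubar$ forces $\vr\tran\MM_\setL\vr\geq\lambda_2$), and $x\tran Ax \geq (\cos\theta\sqrt{\vmubar\tran A\vmubar} - |\sin\theta|\sqrt{\vr\tran A\vr})^2$ by the reverse triangle inequality on $A^{1/2}x$. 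Minimizing this lower bound over $\theta$ and over $\vr\tran A\vr\in[0,\|A\|]$ yields $\mu - \lambda_1 \geq \tfrac{\delta_\setL}{\|A\|}\vmubar\tran A\vmubar$ up to an absolute constant, i.e.\ $\lambdaMin(\MM_\setL + p_e\Delta_e) - \lambda_1 \geq \tfrac{\delta_\setL}{\|\Delta_e\|}\vmubar\tran\Delta_e\vmubar$. This is the hard step, and the main obstacle: because $\lambdaMin$ is concave, a \emph{lower} bound on the increment (as opposed to the trivial concavity upper bound above) genuinely requires the spectral gap $\delta_\setL$. Summing over $e\in\setE$ and dividing by the denominator bound, the per-feature weights $\vmubar\tran\Delta_e\vmubar$ cancel, giving $\gamma_{\setSgreedy} \geq \min_{\setL\subseteq\setSgreedy}\delta_\setL / \max_{e}(p_e\|\Delta_e\|)$, strictly positive whenever each gap $\delta_\setL>0$.

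Finally I would connect this bound to the stated geometric condition by making the terms $\vmubar\tran\Delta_e\vmubar$ explicit. Using $\Delta_\l = \MF_\l\tran\MF_\l - \MF_\l\tran\ME_\l(\ME_\l\tran\ME_\l)\inv\ME_\l\tran\MF_\l$ and recognizing the Schur complement as the squared residual of the orthogonal projection onto $\text{range}(\ME_\l)$, one gets $\vmubar\tran\Delta_\l\vmubar = \min_{\vw\in\Real{3}}\|\MF_\l\vmubar - \ME_\l\vw\|^2$. From the rearranged vision model~\eqref{eq:visionModel_single}, the block of $\MF_\l$ acting on $\pos{c}$ equals $\MG_{c\l}\doteq[\vu_{c\l}]_\times(\Rot{c}\MR^\imu_{\cam})\tran$ while the corresponding block of $\ME_\l$ equals $-\MG_{c\l}$, so $\vmubar\tran\Delta_\l\vmubar = \min_{\vw}\sum_{c}\|\MG_{c\l}(\vmubar_c - \vw)\|^2$, the sum running over the frames $c$ in which $\l$ is visible.

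If $\vmubar_0 = \cdots = \vmubar_\hor$, choosing $\vw$ equal to this common value makes every term vanish, hence $\vmubar\tran\Delta_e\vmubar = 0$ for all $e$ and both numerator and denominator vanish: this is the degenerate $0/0$ case in which the ratio is vacuous. The hypothesis $\vmubar_i\neq\vmubar_j$ excludes exactly this situation, since the position component of $\vmubar$ is then non-constant; consequently (for any feature tracked across frames $i,j$ with $\vmubar_i - \vmubar_j\notin\text{span}(\vd_{i\l},\vd_{j\l})$, where $\vd_{c\l}$ is the world-frame bearing) we have $\vmubar\tran\Delta_\l\vmubar > 0$, the denominator is strictly positive, and we land in the non-degenerate regime where the bound $\gamma_{\setSgreedy}\geq\min_{\setL}\delta_\setL/\max_{e}(p_e\|\Delta_e\|)>0$ applies. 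The remaining ingredients (concavity of $\lambdaMin$ and the Schur-complement identity) are routine, so the only delicate part is the perturbation lower bound of the third step and the accompanying assumption that the smallest eigenvalue of each $\MM_\setL$ is simple.
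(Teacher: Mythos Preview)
Your argument is correct in spirit and reaches the same geometric endpoint as the paper, but it takes a substantially more elaborate route. The paper avoids your spectral-gap machinery entirely by a one-line Rayleigh-quotient trick: instead of fixing $\vmubar$ as the smallest eigenvector of $\MM_\setL$ and then controlling how far the perturbed eigenvector drifts from it (which is what forces you to assume simplicity and introduce $\delta_\setL$), the paper lets $\vmubar$ be the smallest eigenvector of the \emph{perturbed} matrix $\MM_\setL+\Delta_e$. Then
\[
\lambdaMin(\MM_\setL+\Delta_e)-\lambdaMin(\MM_\setL)=\vmubar\tran(\MM_\setL+\Delta_e)\vmubar-\min_{\|\vnu\|=1}\vnu\tran\MM_\setL\vnu\ge\vmubar\tran(\MM_\setL+\Delta_e)\vmubar-\vmubar\tran\MM_\setL\vmubar=\vmubar\tran\Delta_e\vmubar,
\]
since substituting $\vnu=\vmubar$ into the second minimization can only increase it. From here the paper passes to $\|\MQ_e\MF_e\vmubar\|^2$ exactly as you do, and concludes that the numerator of the ratio is strictly positive whenever the position blocks $\vmubar_i$ are not all equal; finiteness of the index set then gives a strictly positive minimum. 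No gap assumption, no perturbation bound, no upper bound on the denominator.

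What your approach buys, and the paper's does not, is an \emph{explicit} quantitative lower bound $\gamma_{\setSgreedy}\gtrsim\min_\setL\delta_\setL/\max_e(p_e\|\Delta_e\|)$, which could be useful if one wanted to estimate the approximation factor numerically. The price is the extra hypothesis that each $\lambdaMin(\MM_\setL)$ is simple, which the proposition as stated does not assume. Note also a subtle mismatch: the proposition defines $\vmubar$ as the eigenvector of $\MM_\setL$ (your choice), while the paper's proof actually uses the eigenvector of $\MM_\setL+\Delta_e$; your version is consistent with the statement, but the paper's choice is precisely what makes the simpler argument work.
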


\edited{The proof of~\prettyref{prop:nonVanishingRatio} is given in Appendix~\ref{app:proof:prop:nonVanishingRatio}.}
In words,~\prettyref{prop:nonVanishingRatio} states that 
the submodularity ratio does not vanish as long as the directions of largest uncertainty 
change along the \edited{future} horizon. The following corollary is a straightforward consequence of~\prettyref{prop:nonVanishingRatio}.

\begin{corollary}[Approximate submodularity of $\lambdaMin$]
\label{cor:approxSumLambdaMin}
The set function $\feigMin(\setS)$ defined in~\eqref{eq:probabilisticMetrics}
is monotone. Moreover, under the assumptions of~\prettyref{prop:nonVanishingRatio}, 
the greedy algorithm applied to~\eqref{eq:featureSelection} \editedTwo{using $\feigMin(\setS)$} 
as objective enjoys the guarantees of~\prettyref{prop:approxSub} 
for a nonzero $\gamma_{\setSgreedy}$.
\end{corollary}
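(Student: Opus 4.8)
The plan is to verify that $\feigMin$ satisfies the two structural hypotheses required by~\prettyref{prop:approxSub}—non-negativity and monotonicity—and then to combine that proposition with~\prettyref{prop:nonVanishingRatio}, which already supplies the technical core (a strictly positive submodularity ratio). Since this is a corollary, essentially all the heavy lifting has been done upstream; what remains is to confirm applicability and assemble the pieces.

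First I would establish non-negativity. Because $\barMOmega_{\k:\k+\hor}$ is positive definite and each $\Delta_\l$ is positive semidefinite with $p_\l \geq 0$, the matrix $\barMOmega_{\k:\k+\hor} + \sum_{\l \in \setS} p_\l \Delta_\l$ is positive definite for every subset $\setS$; hence its smallest eigenvalue is strictly positive, so $\feigMin(\setS) > 0$ and in particular $\feigMin$ is non-negative, as required. Next I would prove monotonicity: for $\setA \subseteq \setB \subseteq \setF$ the difference $\sum_{\l \in \setB} p_\l \Delta_\l - \sum_{\l \in \setA} p_\l \Delta_\l = \sum_{\l \in \setB \setminus \setA} p_\l \Delta_\l \succeq 0$, so $\barMOmega_{\k:\k+\hor} + \sum_{\l \in \setB} p_\l \Delta_\l \succeq \barMOmega_{\k:\k+\hor} + \sum_{\l \in \setA} p_\l \Delta_\l$ in the Loewner order. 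Using the Rayleigh-quotient characterization $\lambdaMin(\MM) = \min_{\|\vy\|=1}\vy\tran\MM\vy$, the smallest eigenvalue is order-preserving, which yields $\feigMin(\setA) \leq \feigMin(\setB)$.

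With both hypotheses in hand, I would invoke~\prettyref{prop:nonVanishingRatio}: under the assumption that the directions of largest uncertainty $\vmubar_i$ differ along the horizon (i.e., $\vmubar_i \neq \vmubar_j$ for some $i,j$), the submodularity ratio $\gamma_{\setSgreedy}$ is bounded away from zero. Since $\feigMin$ is non-negative and monotone and $\gamma_{\setSgreedy} > 0$, applying~\prettyref{prop:approxSub} gives $\feigMin(\setSgreedy) \geq (1 - \e^{-\gamma_{\setSgreedy}})\,\feigMin(\setSopt)$ with a strictly positive exponent, which is the claimed nontrivial multiplicative guarantee.

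The main obstacle—the non-vanishing of the submodularity ratio—has already been discharged in~\prettyref{prop:nonVanishingRatio}, so the corollary is essentially immediate. The one point deserving care is bookkeeping: the submodularity ratio $\gamma_{\setSgreedy}$ appearing in~\prettyref{prop:approxSub} is taken with respect to the same greedy set $\setSgreedy$ (and the same subsets $\setL \subseteq \setSgreedy$) over which~\prettyref{prop:nonVanishingRatio} certifies positivity, so the lower bound on the ratio feeds directly into the suboptimality bound without any mismatch in the quantifiers.
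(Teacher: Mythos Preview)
Your proof is correct and follows essentially the same approach as the paper: the paper's proof is a two-line argument noting that monotonicity follows from the Weyl inequality (equivalent to your Rayleigh-quotient argument) and that the greedy guarantee follows by combining \prettyref{prop:approxSub} with \prettyref{prop:nonVanishingRatio}. Your version is more careful in explicitly verifying the non-negativity hypothesis required by \prettyref{prop:approxSub}, which the paper leaves implicit.
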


\begin{proof}
Monotonicity follows from the Weyl inequality~\cite{Ipsen09siam-eigenvaluePerturbation}.
The guarantees of the greedy algorithm follow from~\prettyref{prop:approxSub} 
and~\prettyref{prop:nonVanishingRatio}.
\end{proof}

\prettyref{cor:approxSumLambdaMin} guarantees that the approximation bound of~\prettyref{prop:approxSub} 
does not vanish, hence the greedy algorithm always approximate the optimal solution up to a constant-factor.
Empirical evidence, shown in~\prettyref{sec:experiments}, confirms that the greedy algorithm applied to 
the maximization of $\feigMin(\setS)$ has excellent performance, producing near-optimal results in all test instances. 

\begin{remark}[Geometric Intuition Behind Greedy Selection with $\lambdaMin$]
Our linear model enables a deeper understanding of the geometry behind the 
greedy selection. The greedy selection rewards features $\l$ with large 
objective $\lambdaMin(\barMOmega_{\k:\k+\hor} + \Delta_\l)$ or, equivalently, 
large marginal gain $\lambdaMin(\barMOmega_{\k:\k+\hor} + \Delta_\l) - \lambdaMin(\barMOmega_{\k:\k+\hor})$.
The following chain of relations provides a geometric understanding of which features induce 
a large marginal gain:
\beal
\lambdaMin(\barMOmega_{\k:\k+\hor} + \Delta_\l) - \lambdaMin(\barMOmega_{\k:\k+\hor}) \nonumber \\ 
\grayText{(from Rayleigh quotient)} \nonumber \\
= \min_{\|\vnu\|=1}  \vnu\tran (\barMOmega_{\k:\k+\hor} + \Delta_\l) \vnu -
\min_{\|\vmu\|=1} \vmu\tran (\barMOmega_{\k:\k+\hor}) \vmu \nonumber \\
\grayText{(calling $\bar\mu$ the minimizer of the second summand)} \nonumber \\
=  \min_{\|\vnu\|=1} \vnu\tran  (\barMOmega_{\k:\k+\hor} + \Delta_\l) \vnu -
\vmubar\tran (\barMOmega_{\k:\k+\hor}) \vmubar  \nonumber \\
\grayText{(\edited{substituting a suboptimal solution $\bar\mu$ in the first summand})} \nonumber \\
\leq  \vmubar\tran (\barMOmega_{\k:\k+\hor} + \Delta_\l) \vmubar -
\vmubar\tran (\barMOmega_{\k:\k+\hor}) \vmubar \nonumber \\
\grayText{(simplifying and substituting the expression of $\Delta_\l$)} \nonumber \\
= \vmubar\tran \Delta_\l \vmubar = 
\vmubar\tran  \MF_\l\tran (\eye - \ME_\l (\ME_\l\tran \ME_\l)\inv  \ME_\l\tran) \MF_\l \vmubar
\nonumber \\
\grayText{(defining the idempotent matrix $\MQ \doteq (\eye - \ME_\l (\ME_\l\tran \ME_\l)\inv  \ME_\l\tran)$)} \nonumber \\
= \vmubar\tran  \MF_\l\tran \MQ \MF_\l \vmubar  = 
\vmubar\tran  \MF_\l\tran \MQ \MQ \MF_\l \vmubar  =
 \| \MQ \MF_\l \vmubar \|^2 
\nonumber \\
\grayText{(using the triangle inequality and substituting $\MF_\l$)} \nonumber \\
\leq  \| \MQ \|^2 \| \MF_\l \vmubar \|^2 = \| \MF_\l \vmubar \|^2 = 
\sum_{\k=0}^\hor \| [\vu_{\k\l}]_\times (\MR^\world_{\cam,\c})\tran \vmubar_\k \|^2
\eeal
where $\vmubar_\k$ is the subvector of $\vmubar$ at the entries corresponding to the 
robot position \edited{coordinates} at time $\k$. Intuitively, the inequalities reveal that the marginal gain 
is small when $\| [\vu_{\k\l}]_\times (\MR^\world_{\cam,\c})\tran \vmubar_\k \|$ is small, i.e., when we pick landmark 
observations where the measured bearing $\vu_{\k\l}$ is nearly parallel to 
the directions of large uncertainty $\vmubar_\k$, transformed in the 
camera frame by the rotation $(\MR^\world_{\cam,\c})\tran$. 
For instance, if we have
large uncertainty in the forward direction, it is not convenient to use features in front of the robot 
(i.e., with bearing parallel to the direction of largest uncertainty);
accordingly, the greedy approach would select features in the
periphery of the image, which intuitively provide a better way to reduce uncertainty.


\end{remark}


\newcommand{\greedy}{\scenario{greedy}}
\newcommand{\rounded}{\scenario{rounded}}
\newcommand{\relaxed}{\scenario{relaxed}}
\newcommand{\nofeatures}{\scenario{no features}}
\newcommand{\random}{\scenario{random}}
\newcommand{\eigen}{\scenario{eigen}}

\newcommand{\minEig}{\scenario{minEig}}
\newcommand{\logDet}{\scenario{logDet}}
\newcommand{\quality}{\scenario{quality}}
\newcommand{\noselection}{\scenario{no}-\scenario{selection}}
\newcommand{\relRotErr}{\epsilon^r_R}
\newcommand{\relTranErr}{\epsilon^r_T}
\newcommand{\absRotErr}{\epsilon^a_R}
\newcommand{\absTranErr}{\epsilon^a_T}

\section{Experiments}
\label{sec:experiments}

This section provides three sets of experimental results. 
The first set of tests, in~\prettyref{sec:easySim}, 
shows that the greedy algorithm attains near-optimal solutions 
in solving problem~\eqref{eq:featureSelection}, 
while being faster than \edited{general purpose solvers for the convex relaxations discussed in~\prettyref{sec:convexRelax}}.
The second set of tests, in~\prettyref{sec:monteCarlo}, evaluates 
our c++ pipeline in realistic simulations, showing that our feature 
selection techniques boost VIN performance; 
the same section also shows the 
advantage of using our lazy evaluation.
The third set of tests, in~\prettyref{sec:realTests}, 
evaluates our approach on real data collected by an agile micro aerial vehicle. 

\subsection{Assessment of the Greedy Algorithms for Feature Selection}
\label{sec:easySim}

This section answers the following question: \emph{how good is the greedy~\prettyref{alg:greedy} 
to (approximately) solve the combinatorial optimization problem~\eqref{eq:featureSelection}?} 
In particular, we show that the greedy algorithm finds a 
near-optimal solution of~\eqref{eq:featureSelection}, for 
both choices of the cost function~\eqref{eq:probabilisticMetrics}; 
we also show that the convex relaxation approach of~\prettyref{sec:convexRelax} 
finds near-optimal solutions, while being more computationally expensive.

\mysubparagraph{Testing setup} To generate random instances 
of problem~\eqref{eq:featureSelection}, we consider a robot moving along a straight line at  
a constant speed of $2\meters/\sec$. The robot is equipped with an IMU 
with sampling period $\Deltak = 0.01\sec$; we choose the accelerometer noise 
density equal to $0.02 \meters/(\sec^2\sqrt{\Hz})$, and the
accelerometer bias continuous-time noise density to be 
 $0.03 \meters/(\sec^3\sqrt{\Hz})$.
 We also simulate an on-board monocular camera, which measures 
 3D points randomly scattered in the environment, at a (key)frame rate of $0.5\sec$.
The robot has to select a set of $\kappa$ features
out of $N$ available visual measurements. 
We assume that at the time of feature selection, the 
position covariance of the robot is $10^{-2} \cdot \eye_3$, while its velocity 
and accelerometer bias covariances are $10^{-2}\cdot\eye_3$ and $10^{-4} \cdot\eye_3$, respectively. 
Using this information, we build the matrix $\barMOmega_{\k:\k+\hor}$, using  
a prediction horizon of $2.5\sec$. Moreover, from the available feature measurements,
we build the matrices $\Delta_\l$; in these tests we assume $p_\l = 1$, i.e., 
we disregard appearance during feature selection.

\mysubparagraph{Techniques and evaluation metrics} We compare two approaches to solve~\eqref{eq:featureSelection}: 
the greedy algorithm of~\prettyref{alg:greedy} and the convex relaxation approach~\eqref{eq:convexRelaxation}.
We implemented the convex relaxation using \CVX/\MOSEK as parser/solver for~\eqref{eq:convexRelaxation}, and 
then we computed the rounded solution as described in~\prettyref{sec:convexRelax}. 
For the evaluation in this section, we implemented both the greedy algorithm and the convex relaxation
in Matlab.
We evaluate these approaches for each 
choice of the objective functions $\feigMin$ and $\flogDet$ defined in~\eqref{eq:probabilisticMetrics}. 
Ideally, for each technique, we should compare the objective attained by the techniques, 
versus the optimal objective. Unfortunately, the optimal objective is hard to compute 
and a brute-force approach is prohibitively slow, even for relatively small problem 
instances.\footnote{Even in a small instance in which we are required to select 50
out of 100 available visual measurements, a brute-force approach would need 
to evaluate around $10^{29}$ possible sets.} 
Luckily, the convex relaxation~\eqref{eq:convexRelaxation} also produces an upper bound 
on the optimal cost of~\eqref{eq:featureSelection} (\emph{c.f.} eq.~\eqref{eq:convexInequalityChain}), 
hence we can use this upper bound to understand how far are the greedy and the rounded solution of~\eqref{eq:convexRelaxation} 
from optimality.

\mysubparagraph{Results} We consider problems of increasing sizes in which we are 
given $N$ features and we have to select half of them ($\kappa = N/2$) to maximize 
the objective in~\eqref{eq:featureSelection}. For each $N$, we compute statistics over $50$ Monte Carlo.

\prettyref{fig:matlabSim}(a) shows the smallest eigenvalue objective $\feigMin$ 
attained by 
 the different techniques for increasing number of features $N$. 
Besides the greedy, the rounded convex relaxation (label: \rounded), and 
the relaxed objective (label: \relaxed), we show 
the objective attained by picking a random subset of $\kappa$ features (label: \random).  
We are solving a maximization problem hence the larger the objective the better.
\prettyref{fig:matlabSim}(a) shows that in all tested instances,  \greedy and \rounded match the upper bound \relaxed 
(the three lines are practically indistinguishable), hence 
they both produce optimal solutions (\emph{c.f.} eq.~\eqref{eq:guaranteesConvex}). 
The resulting solution is far better than \random.
This result is somehow surprising, since 
the smallest eigenvalue is not submodular in general, and the greedy algorithm enjoys 
 weaker performance guarantees (\prettyref{cor:approxSumLambdaMin}). However, this observation is in agreement with  
  related work in other fields, e.g.,~\cite{Zhang15cdc-sensorSelection}.
While both \greedy and \rounded return good solutions, solving the convex problem~\eqref{eq:convexRelaxation}  
is usually more expensive than computing the greedy solution:  
the CPU time of our greedy algorithm in Matlab (without lazy evaluation)
is around $0.4\sec$ (for $N = 50$), while \CVX requires around $0.8\sec$.


\begin{figure}[h]
\hspace{-0.5cm}
\begin{minipage}{\textwidth}
\begin{tabular}{cc}%
\begin{minipage}{4cm}%
\centering%
\includegraphics[width=1.14\columnwidth]{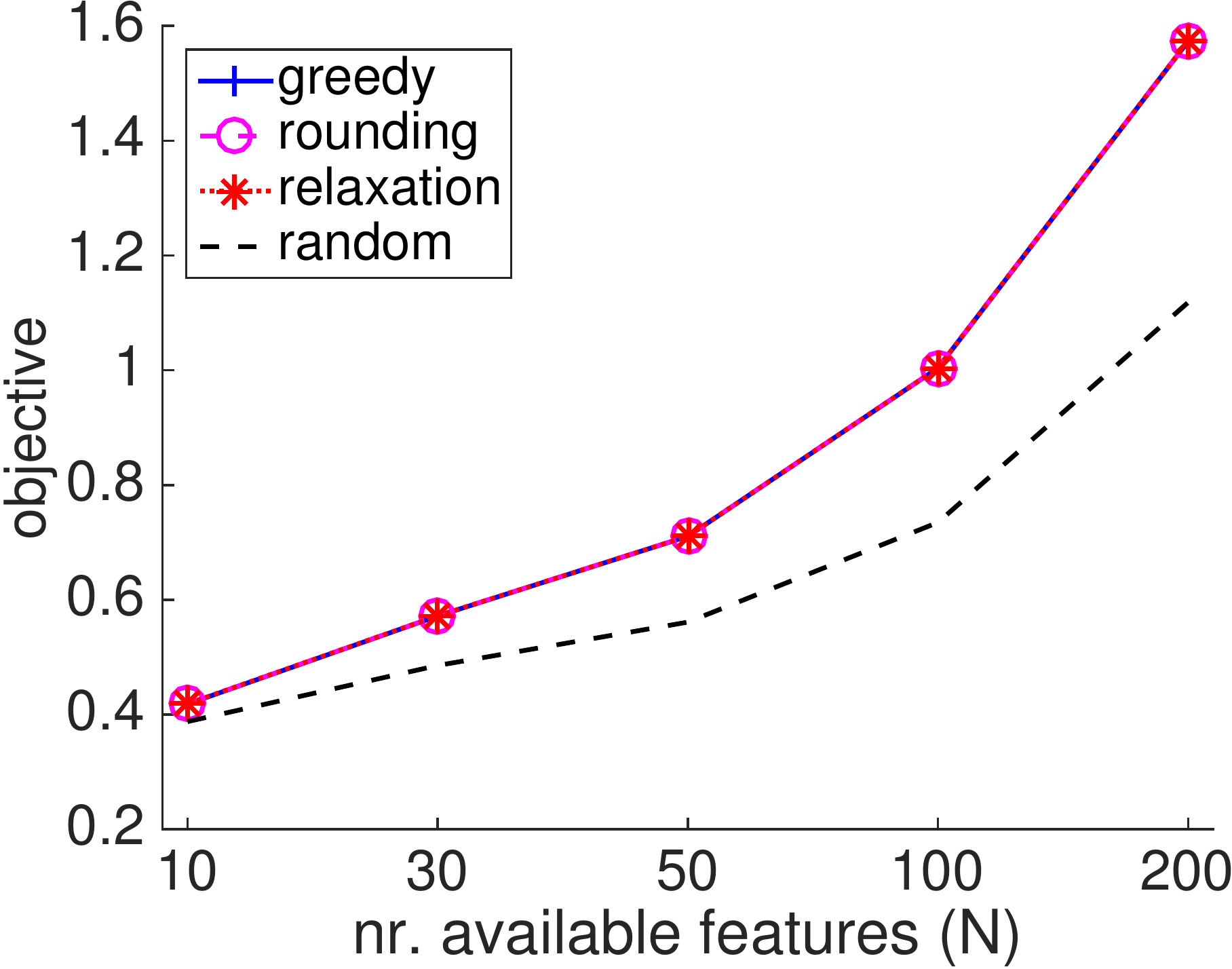} \\
\hspace{1.5cm}(a) $\feigMin$
\end{minipage}
& \hspace{0.05cm}
\begin{minipage}{4cm}%
\centering%
\includegraphics[width=1.14\columnwidth]{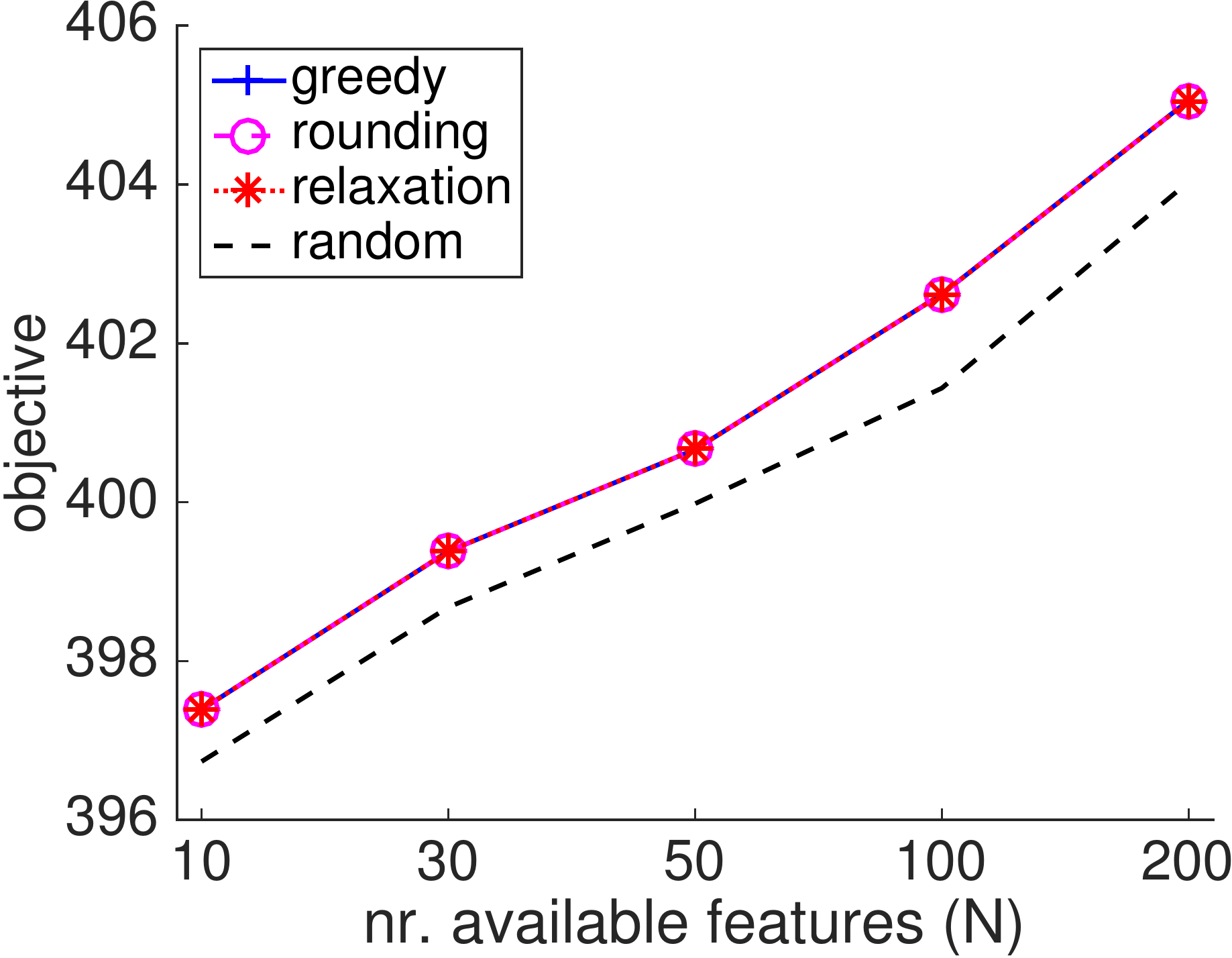} \\
\hspace{1.5cm}(b) $\flogDet$
\end{minipage}
\end{tabular}
\end{minipage}%
\caption{\label{fig:matlabSim}
Techniques to approximately solve problem~\eqref{eq:featureSelection} for 
(a) the smallest eigenvalue objective $\feigMin$, and (b) 
the log-determinant objective $\flogDet$.
The figure reports the objective attained by the greedy algorithm (\greedy), the 
rounded solution (\rounded), and a random selection (\random).
The  upper bound \relaxed, attained 
by the convex problem~\eqref{eq:convexRelaxation} (before rounding), 
is shown for comparison.
}
\end{figure}

Analogous considerations hold for the objective $\flogDet$. 
\prettyref{fig:matlabSim}(b) shows the log-determinant attained by the different techniques, 
for increasing number of available features $N$; also in this case the algorithms have to
select $\kappa = N/2$ features. As in the previous tests, 
\greedy and \rounded attain the optimal solution in all test instances, matching 
 the upper bound \relaxed, and performing remarkably better than a random choice.
Regarding the CPU time, our Matlab implementation of the greedy algorithm 
to optimize \edited{$\flogDet$} takes  
 around $0.1\sec$ (for $N = 50$), while \CVX requires more than 1min to 
solve~\eqref{eq:convexRelaxation}.\footnote{\CVX uses a successive approximation method 
to maximize the log-det objective, which is known to be fairly slow.}
\edited{We remark that while \CVX is a state-of-the-art general-purpose solver 
for convex programming, our analysis does not rule out the possibility of designing 
fast specialized solvers, e.g.,~\cite{Rosen16wafr-sesync}; such an attempt, however, is outside the scope of this paper.
} 
Since the greedy algorithms are as accurate as the convex relaxation techniques, 
while being faster \editedTwo{than general-purpose convex solvers}, in the following we focus on the former.

\subsection{Importance of Feature Selection in VIN}
\label{sec:monteCarlo}

This section answers the following question: \emph{does the feature selection 
resulting by solving~\eqref{eq:featureSelection} lead to performance improvements in VIN?} 
In the following we show that the proposed feature selection approach boosts 
VIN performance in realistic Monte Carlo simulations. 

\mysubparagraph{Testing setup} 
We adopt the benchmarking problem of~\cite{Forster16tro}
and pictured in~\prettyref{fig:monteCarlo}(a) as testing setup. 
We simulate a robot that follows a circular trajectory 
 with a sinusoidal vertical motion. The total length of the 
trajectory is $120\meters$. The on-board camera 
 has a focal length of $315$ pixels and runs at a rate of $2.5\Hz$ (simulating keyframes).
Simulated acceleration and gyroscope measurements are obtained as in~\cite{Forster16tro}. 

\mysubparagraph{Implementation and evaluation metrics}
In this section we focus on the greedy algorithms and we use those to 
select a subset of visual features. We implemented the greedy algorithms and the 
construction of the
matrices required in the functions~\eqref{eq:probabilisticMetrics} in c++, 
using $\eigen$ for the computation of the log-determinant and the smallest eigenvalue. 
For numerical reasons, rather than computing the determinant and taking the logarithm, 
we directly compute the log-determinant from the Cholesky decomposition of the 
 matrix. For the computation of the smallest eigenvalue we use \eigen's svd 
function. 

Our feature selection approach is used as an add-on to a visual-inertial pipeline 
similar to the one described in~\cite{Forster16tro}. Our VIN pipeline estimates the 
navigation state (robot pose, velocity, and IMU biases) using the structureless visual 
model and the pre-integrated IMU model described in~\cite{Forster16tro}.
The entire implementation is based on the GTSAM optimization library~\cite{Dellaert12tr}.
Our implementation differs from~\cite{Forster16tro} in three important ways.
First, in this paper we use the iSAM2 
algorithm within a fixed-lag smoothing approach; we marginalize out states outside 
a smoothing horizon of $6\sec$, which helps bounding latency and memory requirements.
Second, we do not adopt SVO as visual front-end; in this simulations we do not need a 
front-end as we simulate landmark observations, while in the following section we describe 
a simple real-world front-end. Finally, rather than feeding to the VIN estimator all available measurements, 
we use the feature selection algorithms described in this paper to select a small 
set of informative visual observations.

In this section we evaluate two main aspects of our approach.
First, we show that a clever selection of the features does actually impact VIN 
accuracy. Second, we show that the lazy evaluation approach discussed in~\prettyref{sec:greedy}
 speeds up the computation of the greedy solution. 
 We use two metrics for accuracy: the \emph{absolute translation error}, which is the 
 Euclidean distance between the estimated position and the actual position, and 
 the \emph{relative translation error}, which computes the Euclidean norm of the 
 difference between the 
 estimated translation between time $k$ and time $k+1$ and the actual translation.
 Indeed the relative translation error 
 quantifies how quickly the estimate drifts from the ground truth.  
 Since absolute positions are not observable in visual-inertial odometry, 
 the relative error is a more reliable performance metric. 
When useful, we report absolute and relative rotation errors (defined in analogy with 
the translation ones). 


\newcommand{\mpw}{4.2cm}
\begin{figure}[h]
\begin{minipage}{\textwidth}
\begin{tabular}{cc}%
\begin{minipage}{\mpw}%
\centering
\includegraphics[width=1.2\columnwidth, trim = 20mm 0mm 3mm 0mm, clip]{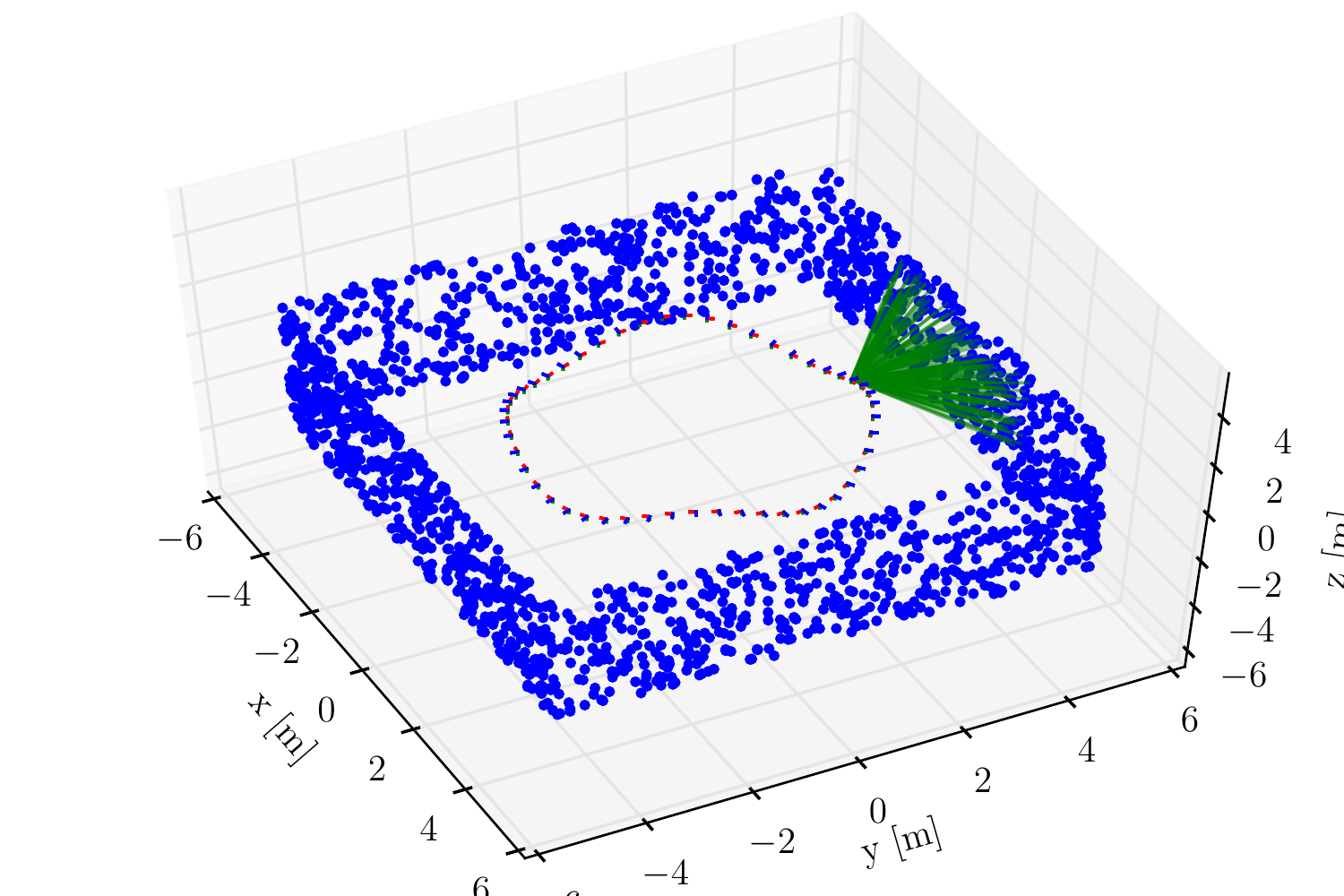} \\
(a)
\end{minipage}
& 
\hspace{0.2cm}
\begin{minipage}{\mpw}%
\centering
\begin{tabular}{ | c | c |}%
\hline
 &  Time [\sec]\\ 
 \hline
\minEig naive & 0.238\\
\minEig lazy  & 0.195 \\
\logDet naive & 0.070 \\
\logDet lazy  & 0.069\\
\hline
\end{tabular}  \vspace{0.5cm}\\ (b) 
\end{minipage}
\\
\begin{minipage}{\mpw}%
\centering%
\hspace{-1.7cm}
\includegraphics[width=1.15\columnwidth]{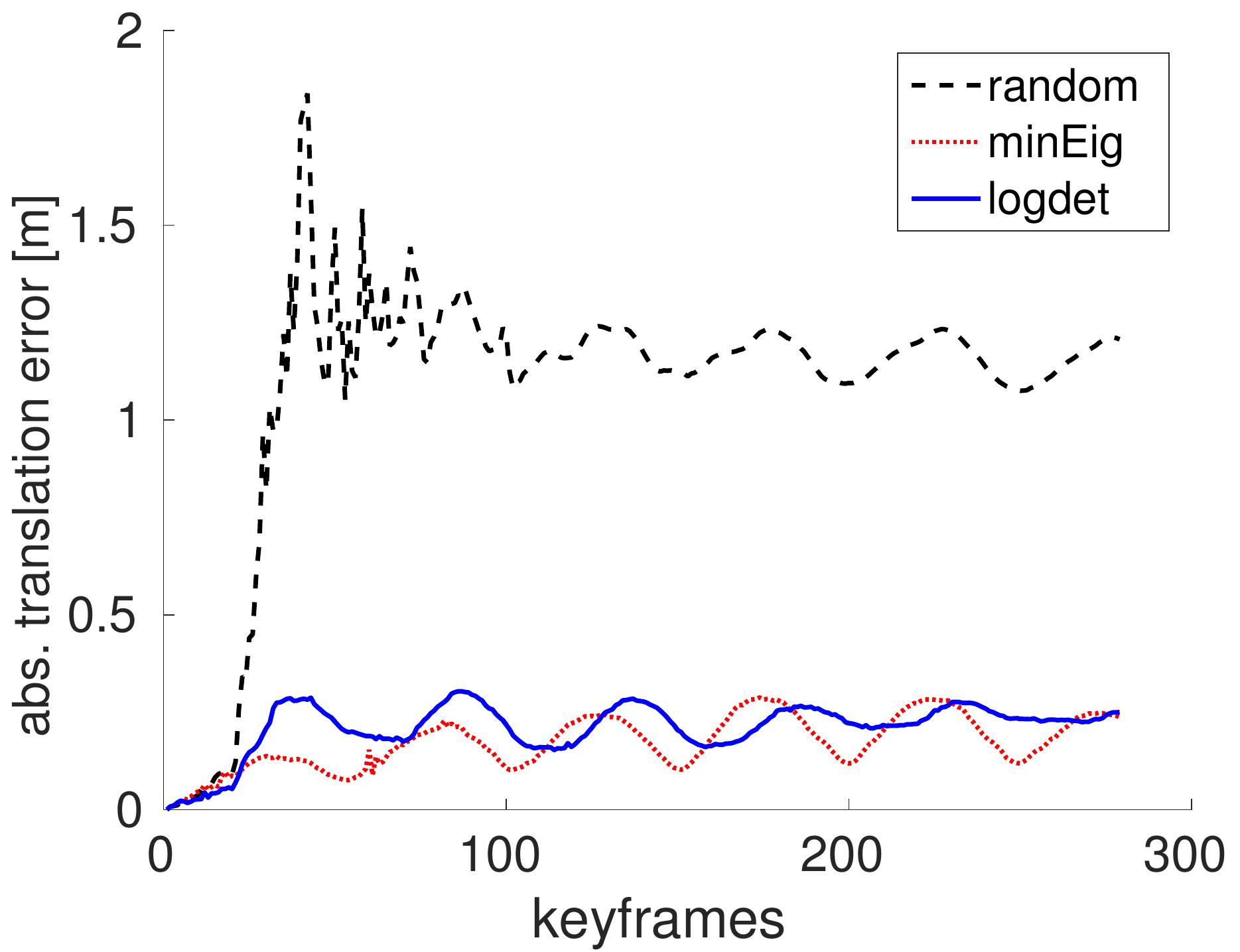} \\
\hspace{-1.5cm}(c)
\end{minipage}
\hspace{-1.2cm} 
&
\begin{minipage}{\mpw}%
\centering%
\hspace{-1.0cm}
\includegraphics[width=1.15\columnwidth]{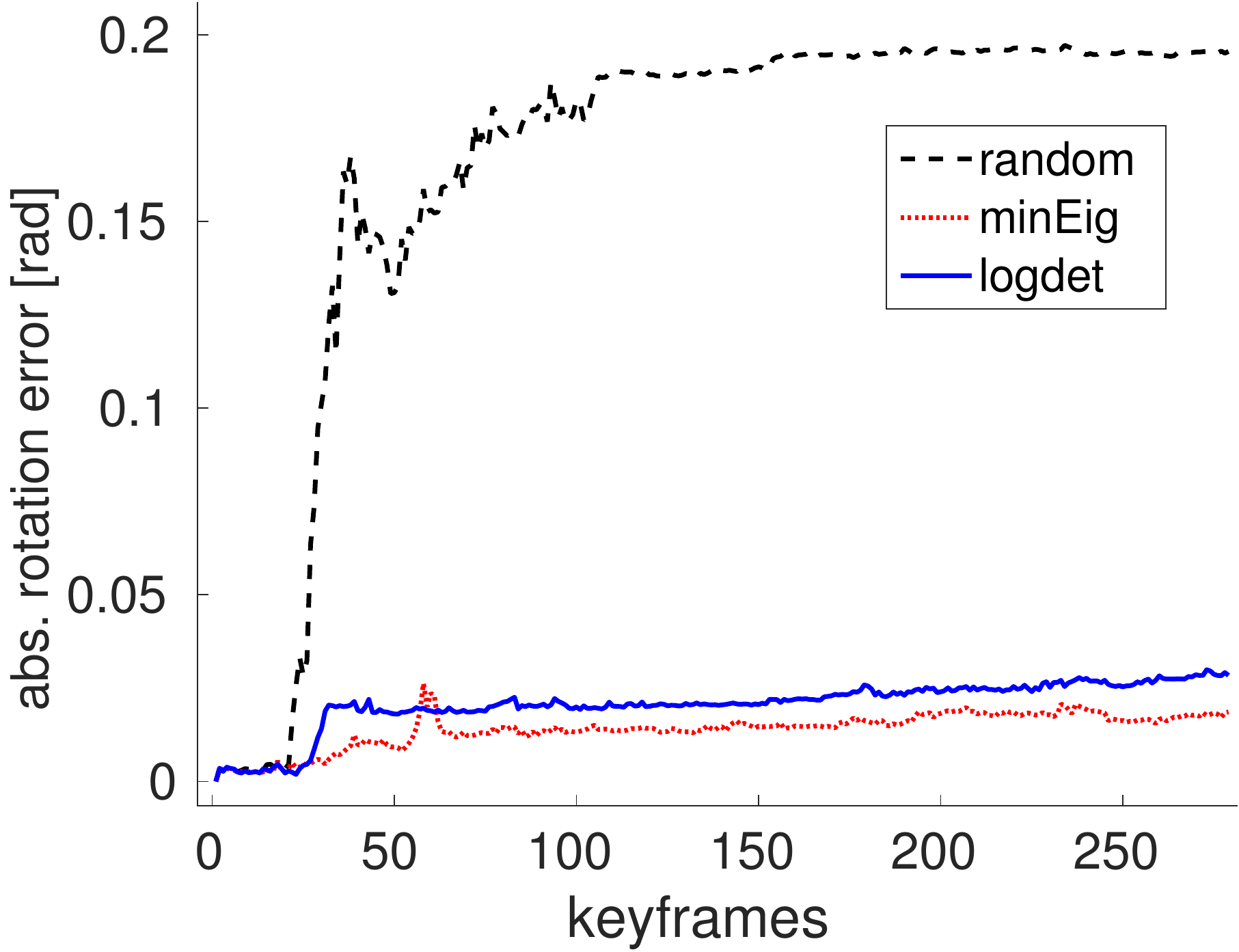} \\
\hspace{0.2cm}(d)
\end{minipage}
\end{tabular}
\end{minipage}%
\caption{\label{fig:monteCarlo}
Simulation results: (a) simulated environment,
(b) table with CPU times for different implementations of the greedy algorithms,
(c) absolute translation errors, (d) absolute rotation errors.}
\end{figure}

\mysubparagraph{Results} We simulate 50 Monte Carlo runs; in each run we add random 
noise to the acceleration, gyroscope, and camera measurements. 
To make the simulation realistic, the statistics about measurement noise are 
identical to the ones used in the real tests of~\prettyref{sec:realTests}.
In each run, the robot performs VIN and, at each camera frames, it selects $\kappa = 20$
visual features out of all the features in the field of view. 
We compare three feature selection strategies. 
The greedy selection resulting from~\prettyref{alg:greedy}
with the eigenvalue objective $\feigMin$ (label: \minEig), 
the greedy  selection with the log-determinant cost $\flogDet$ (label: \logDet), 
and a random selection which randomly draws $\kappa$ of the available features (label: \random).

\prettyref{fig:monteCarlo}(c)-(d) show the absolute translation  and absolute rotation errors, 
averaged over 50 Monte Carlo runs. From the figure is clear that a clever selection of the 
features, resulting from \logDet or \minEig, deeply impacts performance in VIN. 
Our techniques largely improve estimation errors, when compared against the \random selection; both approaches 
result in similar performance. From the figure we note that the absolute errors have some oscillations:
this is a consequence of the fact that the trajectory is circular;  in general, this 
stresses the fact that absolute metrics may be poor indicators of performance in visual-inertial odometry. 
In this case, the relative error metrics confirm the results of~\prettyref{fig:monteCarlo}(c)-(d): 
the average translation and rotation errors are given in~\prettyref{tab:relativeErrorsSim}; 
in parenthesis we report the 
error reduction percentage with respect to the random baseline. 


\begin{table}[h!]
\centering
\vspace{-2mm}
\begin{tabular}{ c | l | c }
Technique & Rel. Translation Error [\meters] &  Rel. Rotation Error [rad]  \\ 
\hline       
\random & \hspace{1cm} 0.0103 			    & \hspace{1cm}  0.0049 \\
\hline 
\minEig & \hspace{1cm}0.0064 \blue{(-37\%)} & \hspace{1cm}  0.0025 \blue{(-48\%)} \\
\hline 
\logDet & \hspace{1cm}0.0053 \blue{(-48\%)} & \hspace{1cm}  0.0018 \blue{(-63\%)} \\
\hline 
\end{tabular}
\caption{Relative translation and rotation errors for the simulated tests of~\prettyref{sec:monteCarlo} 
(average over 50 Monte Carlo runs) \label{tab:relativeErrorsSim}}%
\vspace{-3mm}
\end{table}

 \prettyref{fig:monteCarlo}(b) reports the CPU time required for feature selection.
 The figure considers both cost functions (\logDet and \minEig) and compares timing when using 
 our lazy evaluation, as described in \prettyref{alg:greedy}, against a naive implementation of the greedy algorithm that 
 always tests the marginal gain of every feature (i.e., for which the stopping condition in line~\ref{line:break1} 
 of \prettyref{alg:greedy}
 is disabled). 
 The naive greedy (without lazy evaluation) always results in $\kappa N$ objective evaluations. 
 When using lazy evaluation, the number of objective evaluation depends on the tightness 
 of the upper bounds used in~\prettyref{alg:greedy}.
 From   \prettyref{fig:monteCarlo}(b), we see that the advantage of using the lazy evaluation is marginal 
 for the log-determinant cost; this is not surprising, since the Hadamard's inequality of
 \prettyref{prop:boundLogDet} usually gives a fairly loose bound. On the other hard, the 
 advantage of using the lazy evaluation is significant for the \minEig, resulting 
 in a reduction of the computational time of $20\%$.
 The average CPU time required by \prettyref{alg:greedy} (with lazy evaluation) to select $\kappa = 20$
 features is $0.069\sec$ for \logDet and $0.195\sec$ for \minEig.
 While these timing may be already acceptable for applications, there are large 
 margins to speed up computation: we postpone these considerations to~\prettyref{sec:conclusion}.

\subsection{Real Tests: Agile Navigation on Micro Aerial Vehicles}
\label{sec:realTests}
In this section we show that our feature selection approach enhances VIN performance 
in real-world navigation problems with micro-aerial vehicles (MAVs).

\mysubparagraph{Testing setup} 
We use the \emph{EuRoC} benchmark~\cite{Burri16ijrr-eurocDataset}
for our evaluation. 
The EuRoC datasets are collected with an AscTech Firefly hex-rotor helicopter equipped with a VI
 (stereo) visual-inertial sensor. The camera records stereo images at resolution $752\times 480$ 
 and framerate $20\Hz$; IMU data is collected at $200\Hz$.
We refer to~\cite{Burri16ijrr-eurocDataset} for a technical description of the datasets.
In this context we only remark that the datasets 
 contain test instances at increasing levels of complexity, collected in a machine 
hall environment and in a smaller Vicon room.
In our tests, 
the measurement variances, as well as the intrinsic and extrinsic calibration parameters 
 match exactly the one specified in the dataset.  
 The most relevant parameters used in our tests are given in~\prettyref{tab:parameters}; 
 in the front-end we used openCV's \emph{goodFeaturesToTrack} for feature detection 
 and the Lucas-Kanade method for feature tracking;
 as input to the detector we specify a minimum quality level for the features and a desired 
 number of features to extract ($N$).
 From these $N$ features our selector has to retain $\kappa = 10$ features that will be used by the back-end.
 In this sense, feature detection and tracking at the front-end are pre-attentive mechanisms: they 
 work on a large set of features, which are later filtered out by our feature selector. 
 The feature selector uses a predictive horizon of 3\sec; 
 in practice, the future pose estimates along the horizon can be computed from the control inputs,
 by integrating the dynamics of the vehicle (\prettyref{sec:estimationModel}).
 Since the control inputs are not available in the EuRoC dataset, we compute the 
 future poses by attaching ground truth motion increments to the current pose estimate. 
 The only assumption in doing so is that the control loop and the estimation quality are 
 good enough to track a desired set of future poses; this is the case in VIN in which the short-term drift is small.


\begin{table}[h!]
\centering
\vspace{-2mm}
\begin{tabular}{ | c | c | c | }
\hline  
& Parameter name &  Value  \\ 
\hline       
\multirow{3}{*}{Front-end} & Nr. features to detect ($N$) & 100\\ 
						   & Minimum quality level &  0.001  \\ 
						   & Time between keyframes & 0.2\sec\\
						   \hline 
\multirow{2}{*}{Back-end} &  Smoothing window & 6\sec \\
						  &  iSAM2 iterations &  1  \\ 
						  \hline 
\multirow{2}{*}{Feature selector} 
								  &  Nr. features to select ($\kappa$) &  10  \\ 
								  &  Horizon &  3\sec  \\ 
\hline 
\end{tabular}
\caption{VIN and feature selection parameters \label{tab:parameters}}%
\vspace{-5mm}
\end{table}
\mysubparagraph{Techniques} 
We compare four VIN approaches. The first two VIN approaches use the \minEig and the \logDet selectors 
proposed in this paper.
The third approach uses a selector that picks the $\kappa$ features with highest quality 
(i.e., highest score in \emph{goodFeaturesToTrack}). This selector is commonly used in VIN and 
only accounts for the appearance of the visual features; we denote it with the 
label ``\quality'', following openCV's terminology. 
The fourth technique is a VIN approach using 200 features (selected as the ones with largest score in \emph{goodFeaturesToTrack}) and is used to have a reference 
performance for the case in which the VIN system has less stringent computational constraints (label: \noselection).

In order to compute the tracking probabilities $p_\l$, we modified openCV's \emph{goodFeaturesToTrack} in order to 
have access to the features' scores. Then, we mapped the scores to probabilities in $[0,1]$, such that 
 more distinguishable features 
 have higher tracking probabilities $p_\l$.


\newcommand{\mys}{5.2cm}


\begin{figure*}[h]
\hspace{-0.5cm}
\begin{minipage}{\textwidth}
\begin{tabular}{ccc}%
\begin{minipage}{\mys}%
\centering%
\includegraphics[width=1.14\columnwidth, trim={0cm 0.5cm 4cm 13cm},clip]{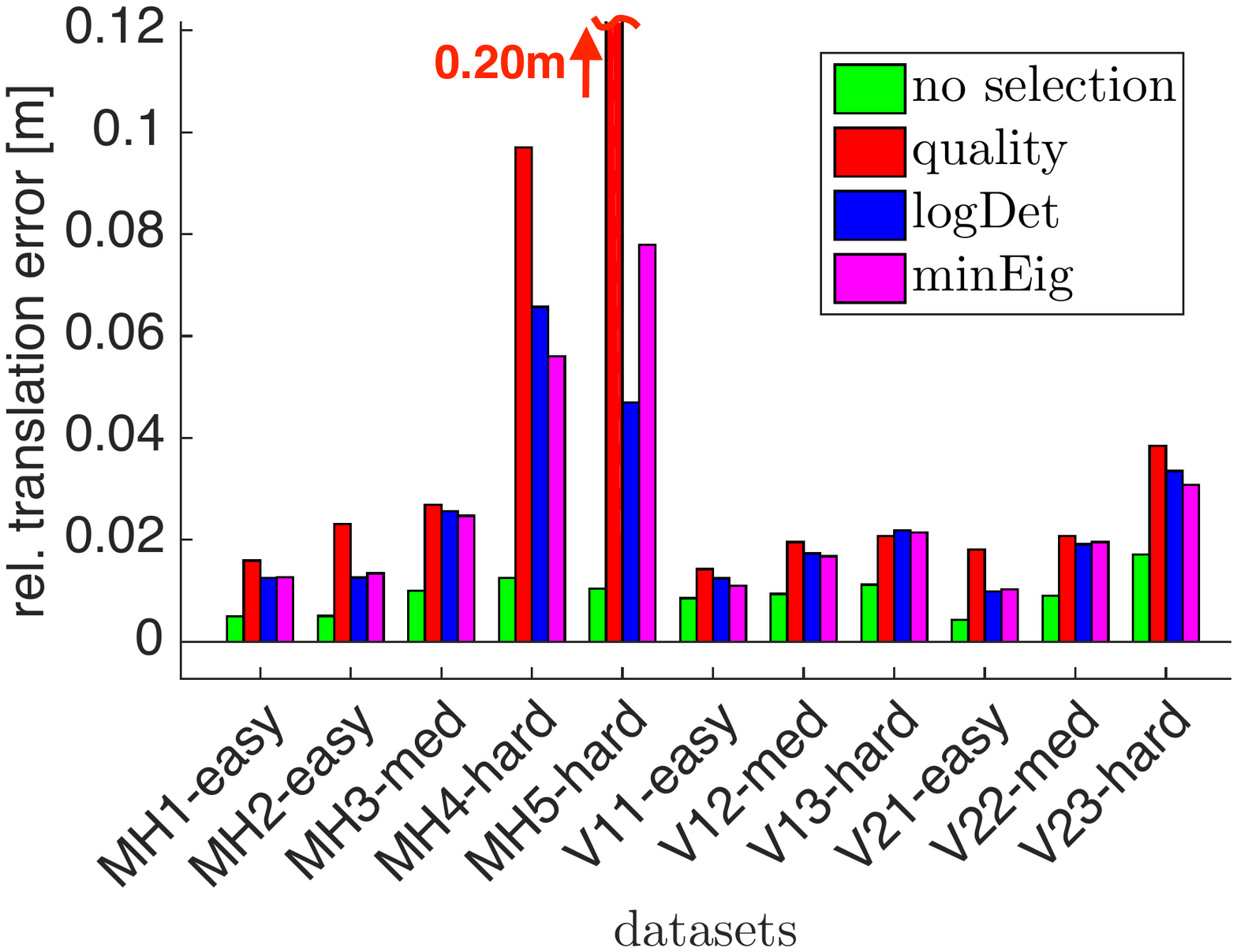} \vspace{-0.7cm}\\
\hspace{-2cm}(a)
\end{minipage}
& \hspace{0.5cm}
\begin{minipage}{\mys}%
\centering%
\includegraphics[width=1.14\columnwidth]{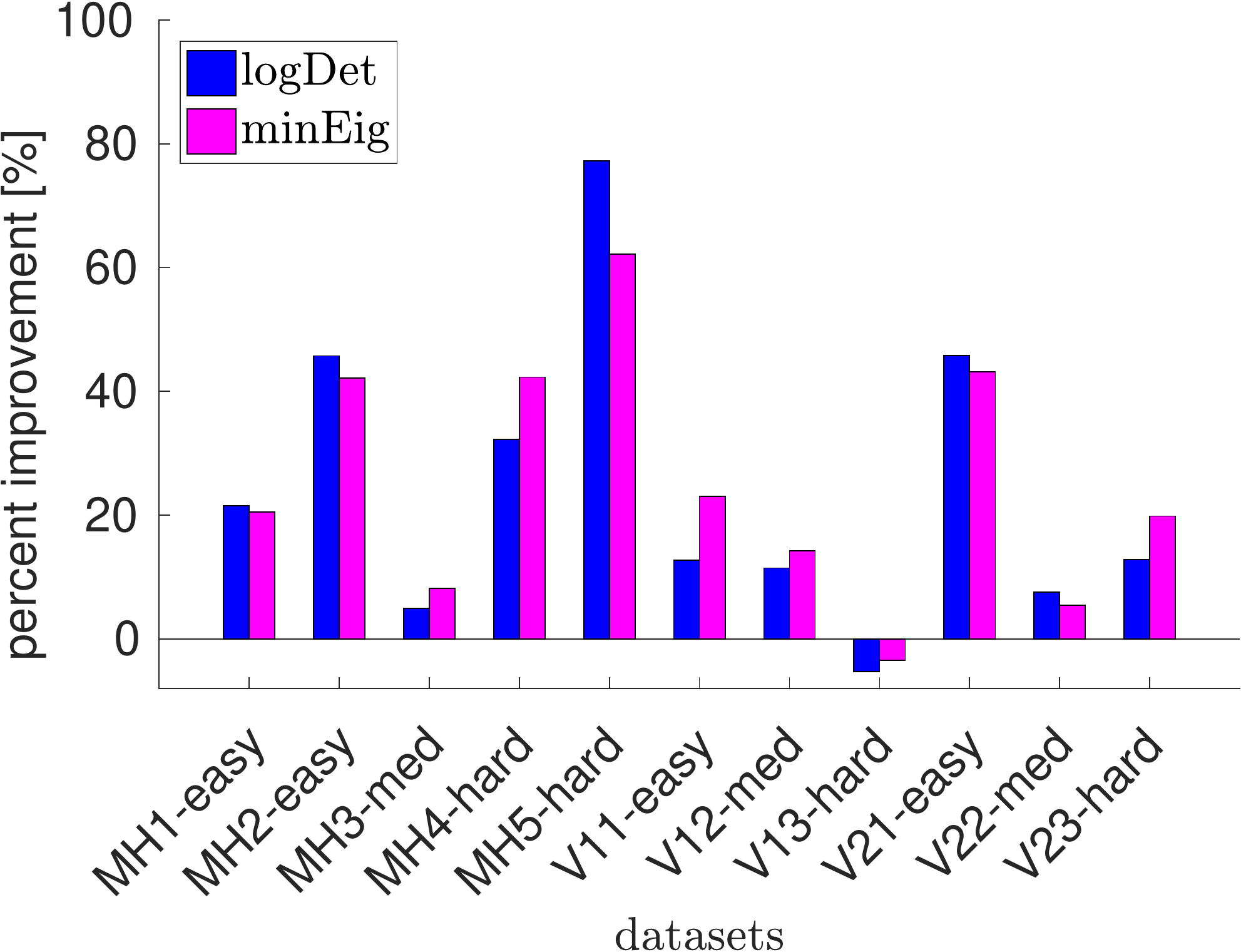} \vspace{-0.7cm} \\ 
\hspace{-2cm}(b) 
\end{minipage}
& \hspace{0.5cm}
\begin{minipage}{\mys}%
\centering%
\includegraphics[width=1.14\columnwidth, trim={0cm 5.5cm 0 5.5cm},clip]{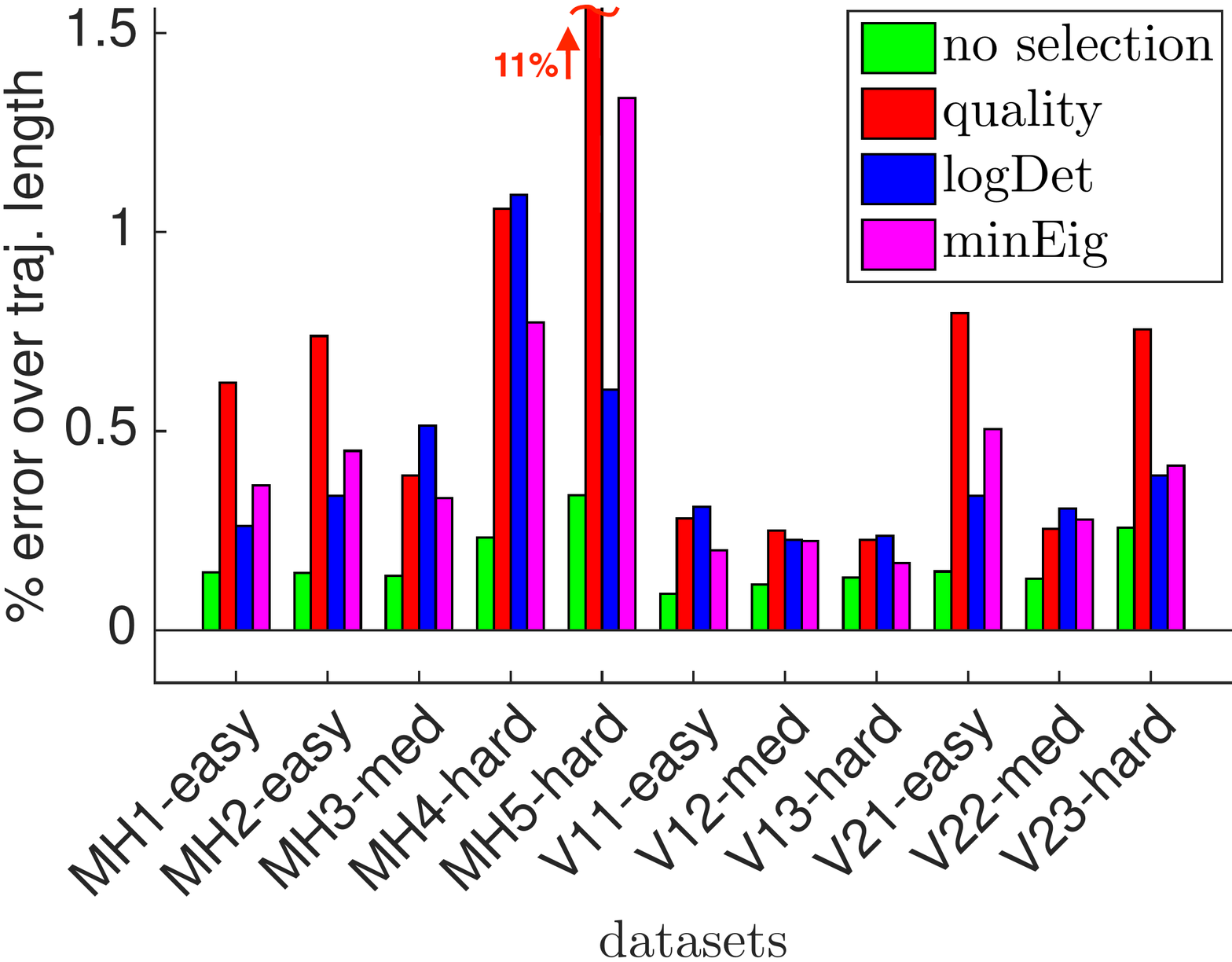} \vspace{-0.7cm} \\
\hspace{-2cm}(c) 
\end{minipage}
\end{tabular} 
\end{minipage}%
\caption{\label{fig:accuracyNoSuccProb}
Accuracy for the compared techniques on the 11 EuRoC MAV datasets.
(a) Relative translation error; 
(b) Relative improvement (relative translation error reduction) of the proposed techniques with respect to the \quality baseline;
(c) Translation error as percentage of the overall trajectory length.
}
\end{figure*}


\renewcommand{\mpw}{5.7cm}
\begin{figure*}[h]
\begin{minipage}{\textwidth}
\begin{tabular}{cccc}%
\begin{minipage}{\mpw}%
\centering
\includegraphics[width=1.06\columnwidth]{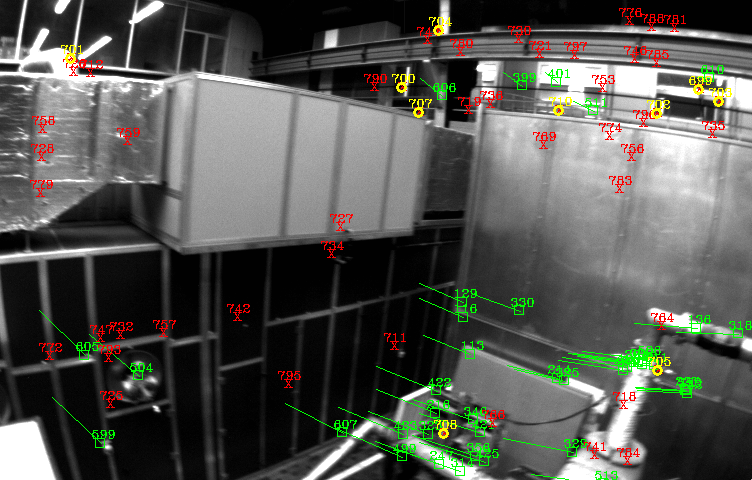} \\
(a) \quality
\end{minipage}
& 
\begin{minipage}{\mpw}%
\centering%
\includegraphics[width=1.06\columnwidth]{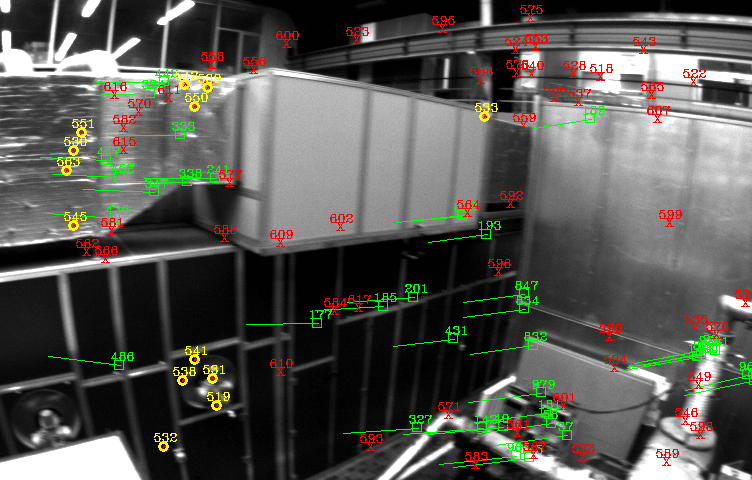} \\
(b) \minEig
\end{minipage}
&
\begin{minipage}{\mpw}%
\centering%
\includegraphics[width=1.06\columnwidth]{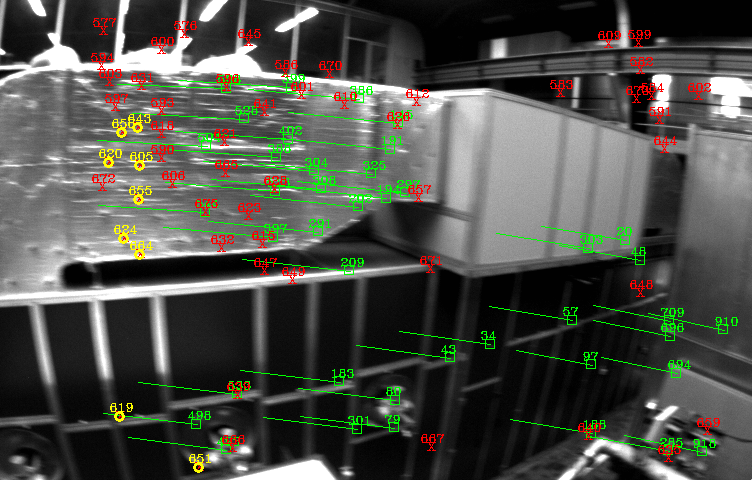} \\
(c) \logDet
\end{minipage}
\end{tabular}
\end{minipage}%
\caption{\label{fig:snapshots}
Snapshots of the feature selection performed by the techniques \quality, \minEig, and \logDet during a sharp
left turn. Features tracked from previous frames are shown as green squares (with the 
corresponding optical flow 
vectors), the newly detected features are shown as red crosses, and the selected features are shown as yellow circles.
We note that \quality only selects the features from their appearance, and chooses many features on the 
right-hand side of the frames: these features will soon fall out of the field of view due to the sharp turn.
}
\end{figure*}

\mysubparagraph{\edited{Results: Accuracy}}
\prettyref{fig:accuracyNoSuccProb} shows the performance of the compared techniques on all the 11 EuRoC datasets.
The EuRoC benchmark includes datasets  of different levels of complexity, with the difficult datasets 
being challenging for standard VIN pipeline due to the fast motion of the MAV.
In this section we show that we can obtain accurate position estimation with as few as $\kappa=10$ features; 
this \editedTwo{budget} is enforced for each frame; for instance, if we are tracking $r$ features from the 
previous frame, then in the current frame we can only retain $\kappa-r$ features. 

\prettyref{fig:accuracyNoSuccProb}(a) compares the VIN performance using the relative 
translation errors as metric. The figure confirms that the difficult datasets tend to have larger translation 
errors. Moreover, it shows that the proposed techniques, \minEig and \logDet, lead to smallest errors 
compared to the baseline \quality. Clearly, the technique \noselection, which uses 20x more features, leads to the 
smaller errors. To better appreciate the advantage of  \minEig and \logDet with respect to \quality, 
\prettyref{fig:accuracyNoSuccProb}(b) shows the relative improvement, i.e., the relative translation error reduction, of the two techniques with respect to \quality. The figure shows  
that the proposed feature selectors result in much smaller drift across 
 all but one datasets. 
The average error reduction is larger than $20\%$ and overcomes $40\%$ in the datasets MH_02_easy, MH_05_difficult, and V2_01_easy.
In particular, in the dataset MH_05_difficult the estimate resulting from the \quality-based feature selection 
diverged after a sharp turn, while our techniques were able to ensure accurate pose estimation.
The dataset V1_03_difficult is the only one in which the proposed techniques have slightly worse performance. 
We noticed that in datasets with severe motion blur 
the advantage of the proposed techniques may vary,
and this is 
due to the fact that we are using a simplistic model for the tracking probabilities $p_\l$.
For completeness, \prettyref{fig:accuracyNoSuccProb}(c) reports the absolute translation error as a percentage of the 
trajectory traveled; this is another common metric for VIN. We notice that \noselection has excellent performance, 
while using 200 features (average error accumulation is $0.17\%$ of the trajectory length). 
\edited{The approach \noselection exhibits similar or smaller estimation errors 
with respect to related techniques benchmarked on the EuRoC datasets in~\editedTwo{\cite{Sun18ral-UAVsystem}}.}\footnote{\edited{The interested reader can compare Fig.~2(a)
 in~\editedTwo{\cite{Sun18ral-UAVsystem}}
with \prettyref{fig:accuracyNoSuccProb}(c) in the present paper, noting that we report 
errors as percentage of the distance traveled and 10 out of 11 of the EuRoC datasets are less than 100m long~\cite{Burri16ijrr-eurocDataset}; furthermore, 
we notice that~\editedTwo{\cite{Sun18ral-UAVsystem}} reported systematic failures on 2 of the datasets, while we are able to successfully complete all the datasets.}}
Moreover, the proposed techniques, \logDet and \minEig,
are able to ensure an average error accumulation of $0.42\%$ and $0.46\%$, respectively, while using only 10 features!

To get a better intuition behind the large performance boost induced by the proposed techniques, 
we report few snapshots produced by our pipeline in~\prettyref{fig:snapshots}. 
Each sub-figure shows, for the current frame, the tracked features (green squares with the optical flow 
vector), the available features (red crosses), and the features selected (yellow circles) by 
(a) \quality, (b) \logDet, and (c) \minEig. The frames are captured during a sharp left turn from the
 MH_03_medium dataset. The \quality selector simply picks the most distinguishable features, resulting in 
  many features selected on the right-hand side of the image; these features are of scarce utility: they 
 will soon disappear from the field of view due to the motion of the MAV. 
 On the other hand, \logDet and \minEig are predictive and they leverage the knowledge of the immediate 
 motion of the platform; therefore they tend to discard features 
that fall outside the field of view and select features on the left-hand side of the image.


\edited{
\mysubparagraph{Results: Timing and Trade-offs}
\prettyref{fig:CPUtime} reports the average CPU time required by the VIN back-end for all techniques and datasets. 
For the proposed techniques, the back-end time includes both the CPU time spent on feature selection 
and the time spent for estimation in VIN (factor graph optimization with iSAM2).
The figure shows that \logDet is able to reduce the back-end time by $30-40\%$ in most scenarios, with respect to \noselection;
 in particular, the average time for \noselection is $57$ms while the average time for \logDet is $35$ms.
The CPU time of \quality is even smaller, at the cost of degraded performance (\prettyref{fig:accuracyNoSuccProb}). 
\edited{Consistent} with the Monte Carlo analysis, in our current implementation
 \logDet is faster  than \minEig, which implies a back-end time larger than $100$ms in our tests.


\begin{figure}[h]
\centering%
\begin{minipage}{\columnwidth}%
\centering%
\includegraphics[width=0.65\columnwidth]{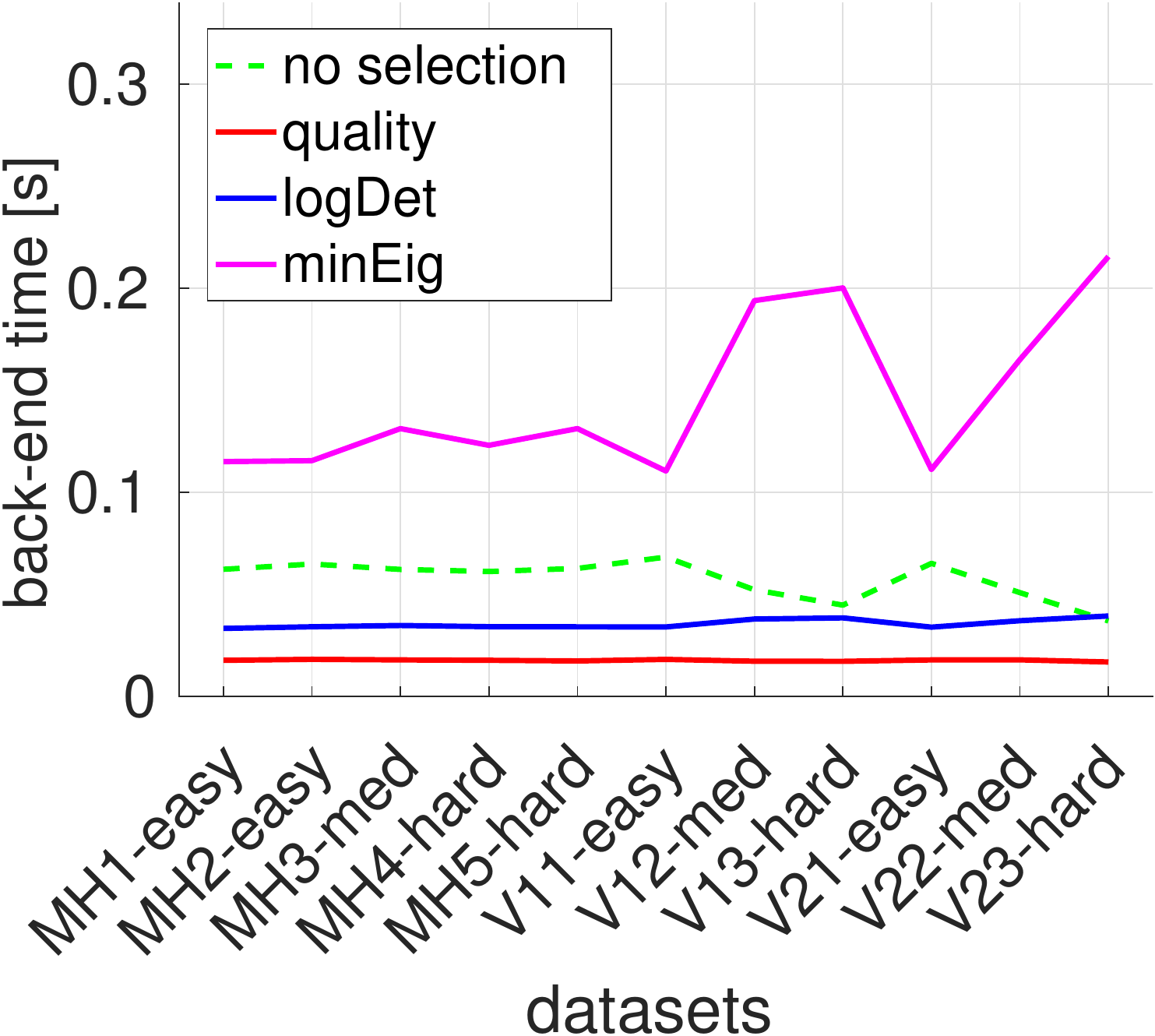} \\
\end{minipage}%
\caption{\label{fig:CPUtime}
CPU time for the back-end (including feature selection) for the compared techniques on the EuRoC datasets.
}
\end{figure}


\renewcommand{\mpw}{2.6cm}
\newcommand{\mww}{1.1}
\begin{figure*}[h]
\begin{minipage}{\textwidth}
\begin{tabular}{cccccc}%
\begin{minipage}{\mpw}%
\centering
\includegraphics[width=0.9\columnwidth]{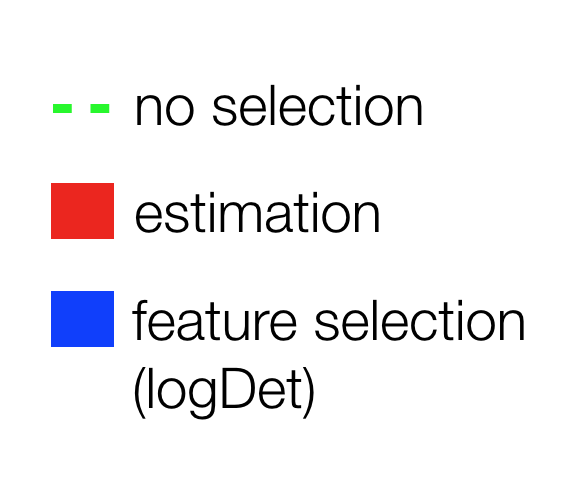} \\
\end{minipage}
&
\begin{minipage}{\mpw}%
\centering
\includegraphics[width=\mww\columnwidth]{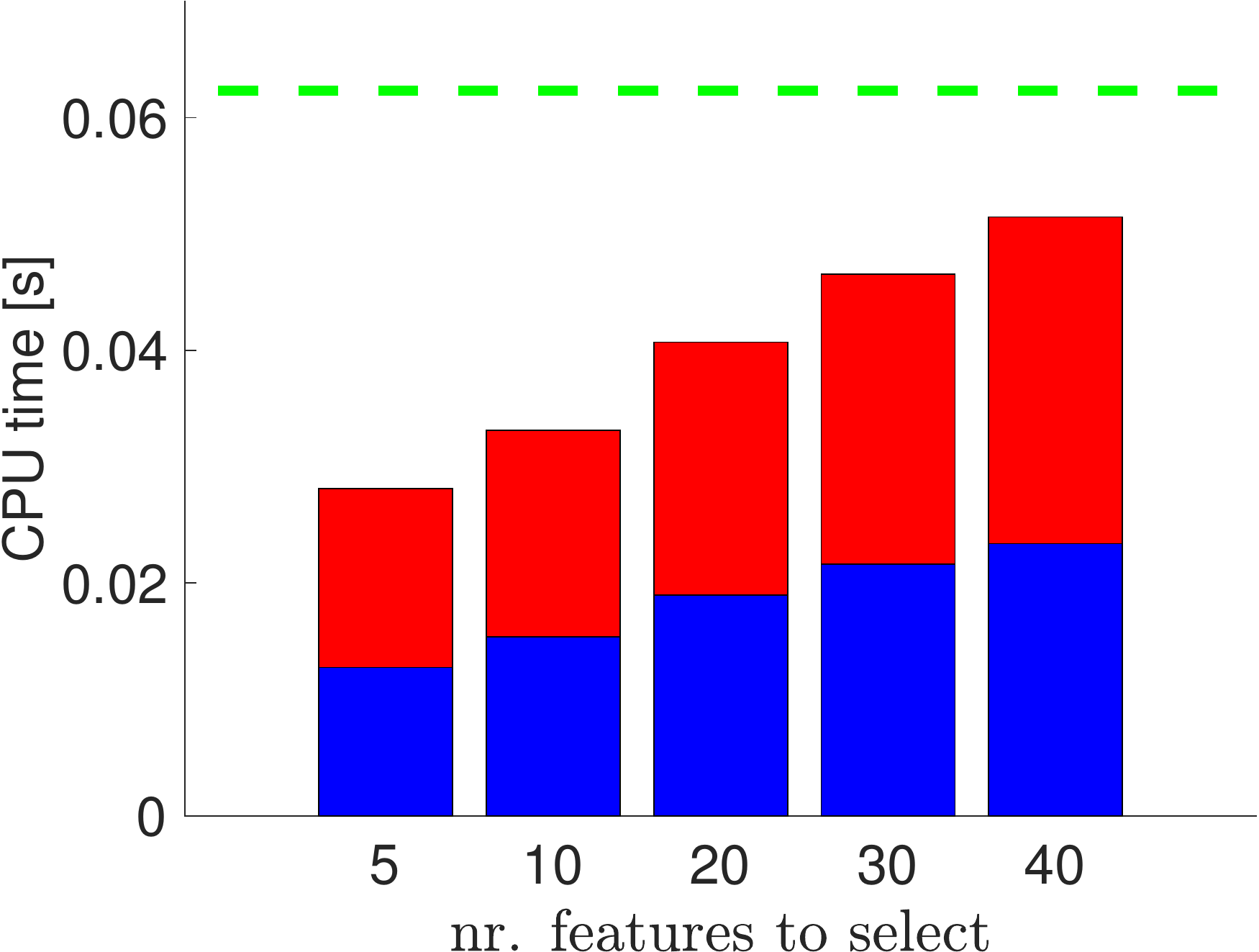} \\
\hspace{2mm}{\smaller MH_01_easy}
\end{minipage}
& 
\begin{minipage}{\mpw}%
\centering%
\includegraphics[width=\mww\columnwidth]{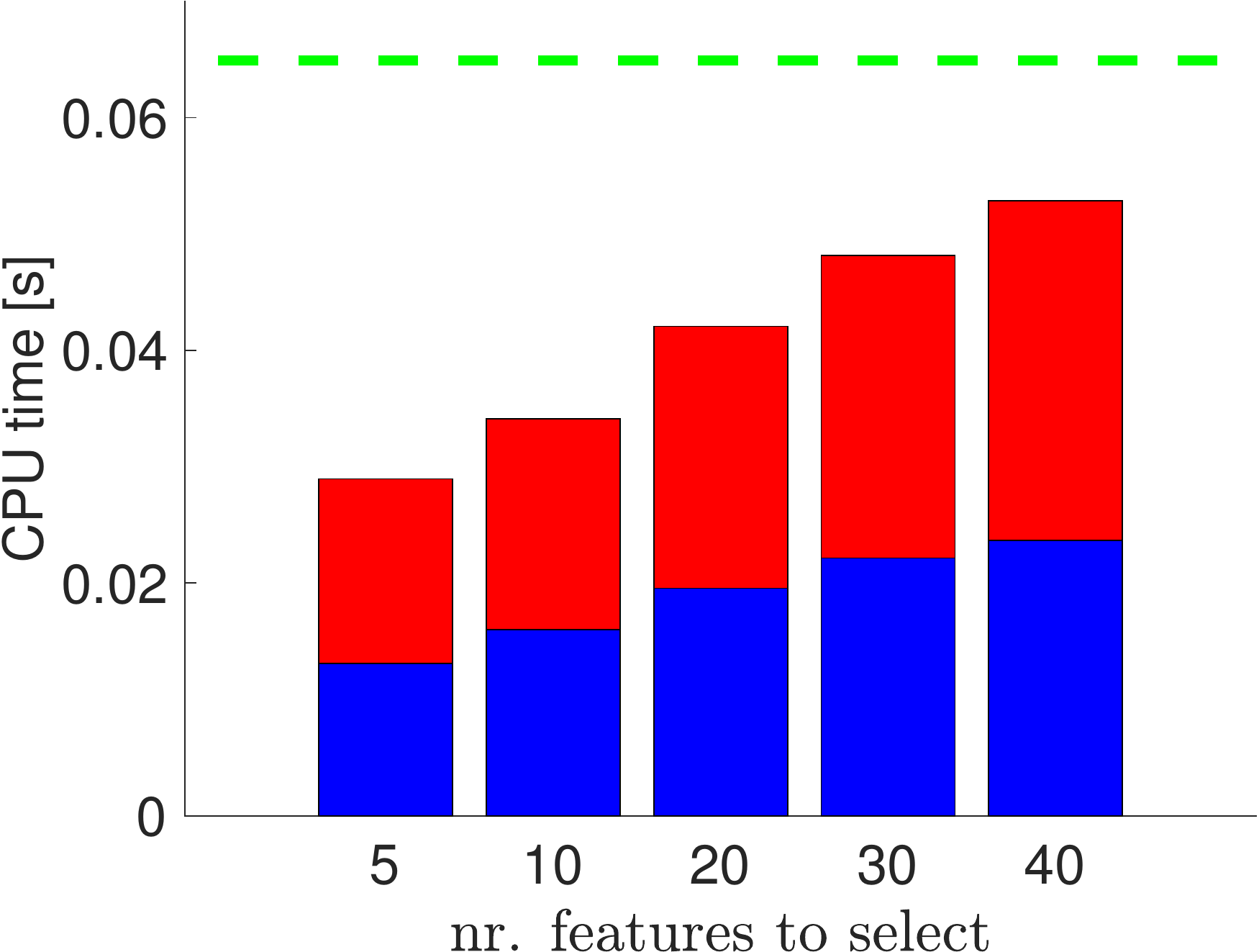} \\
\hspace{2mm}{\smaller MH_02_easy}
\end{minipage}
&
\begin{minipage}{\mpw}%
\centering%
\includegraphics[width=\mww\columnwidth]{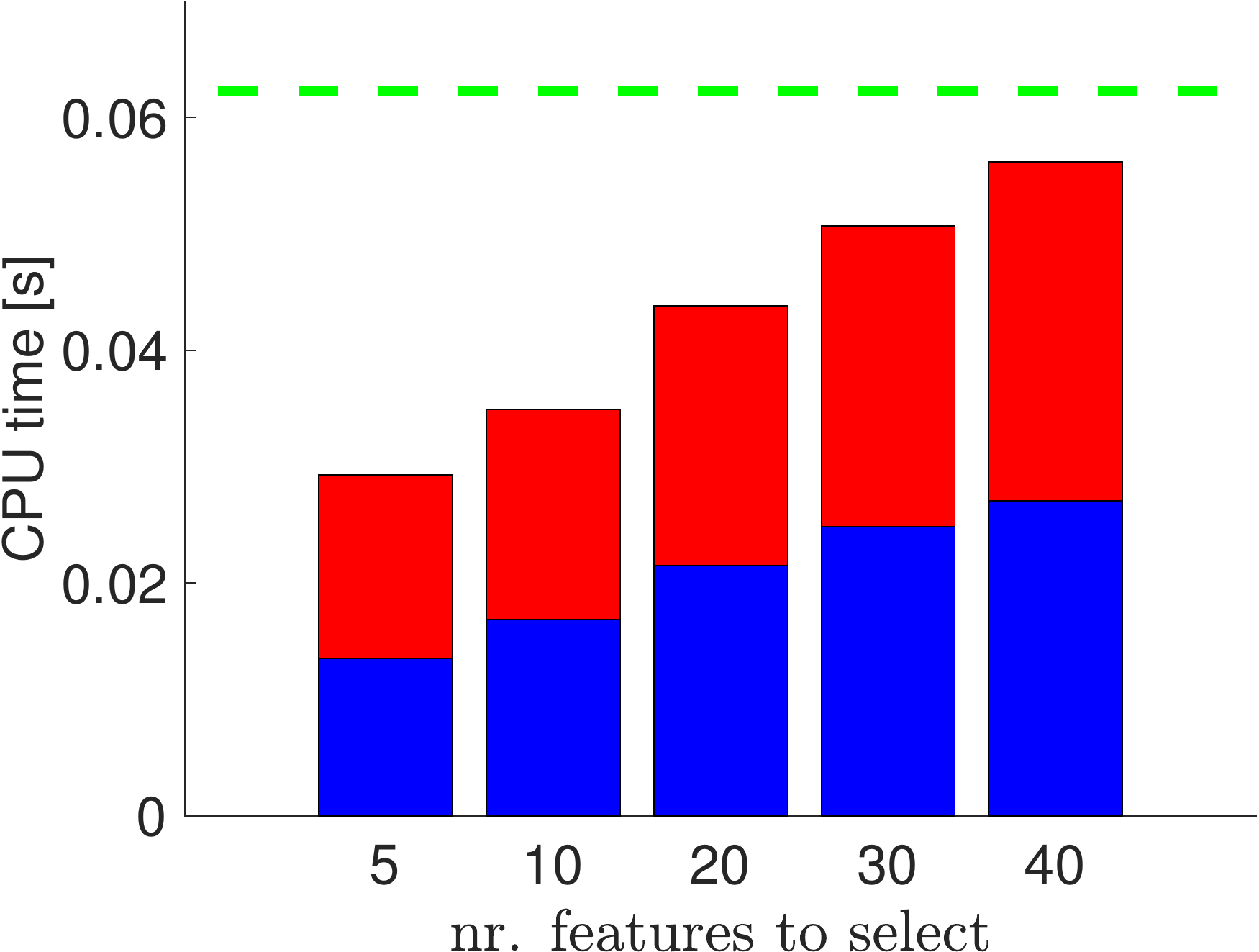} \\
\hspace{2mm}{\smaller MH_03_med}
\end{minipage}
&
\begin{minipage}{\mpw}%
\centering%
\includegraphics[width=\mww\columnwidth]{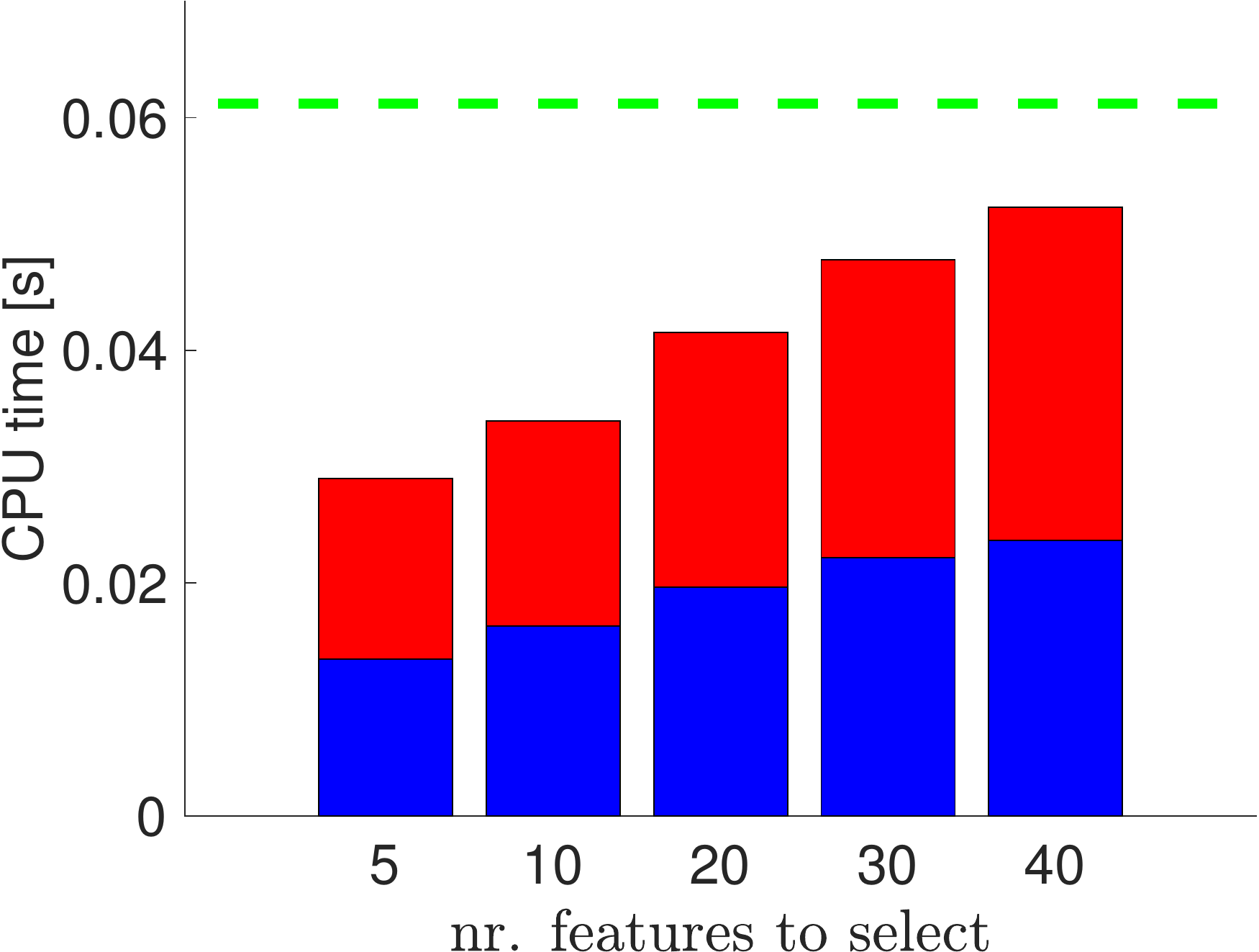} \\
\hspace{2mm}{\smaller MH_04_hard}
\end{minipage}
&
\begin{minipage}{\mpw}%
\centering%
\includegraphics[width=\mww\columnwidth]{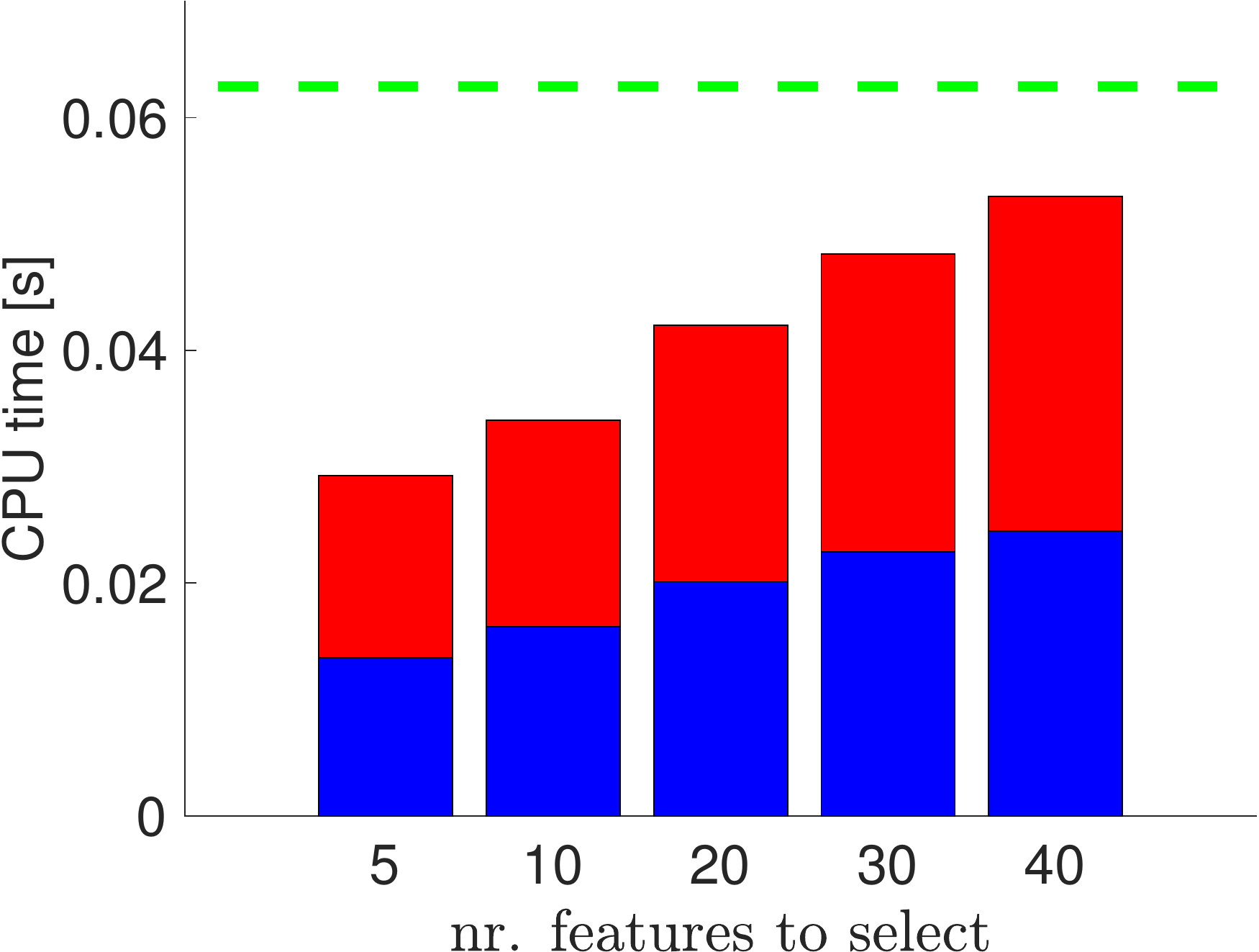} \\
\hspace{2mm}{\smaller MH_05_hard}
\end{minipage}
\\
\begin{minipage}{\mpw}%
\centering
\includegraphics[width=\mww\columnwidth]{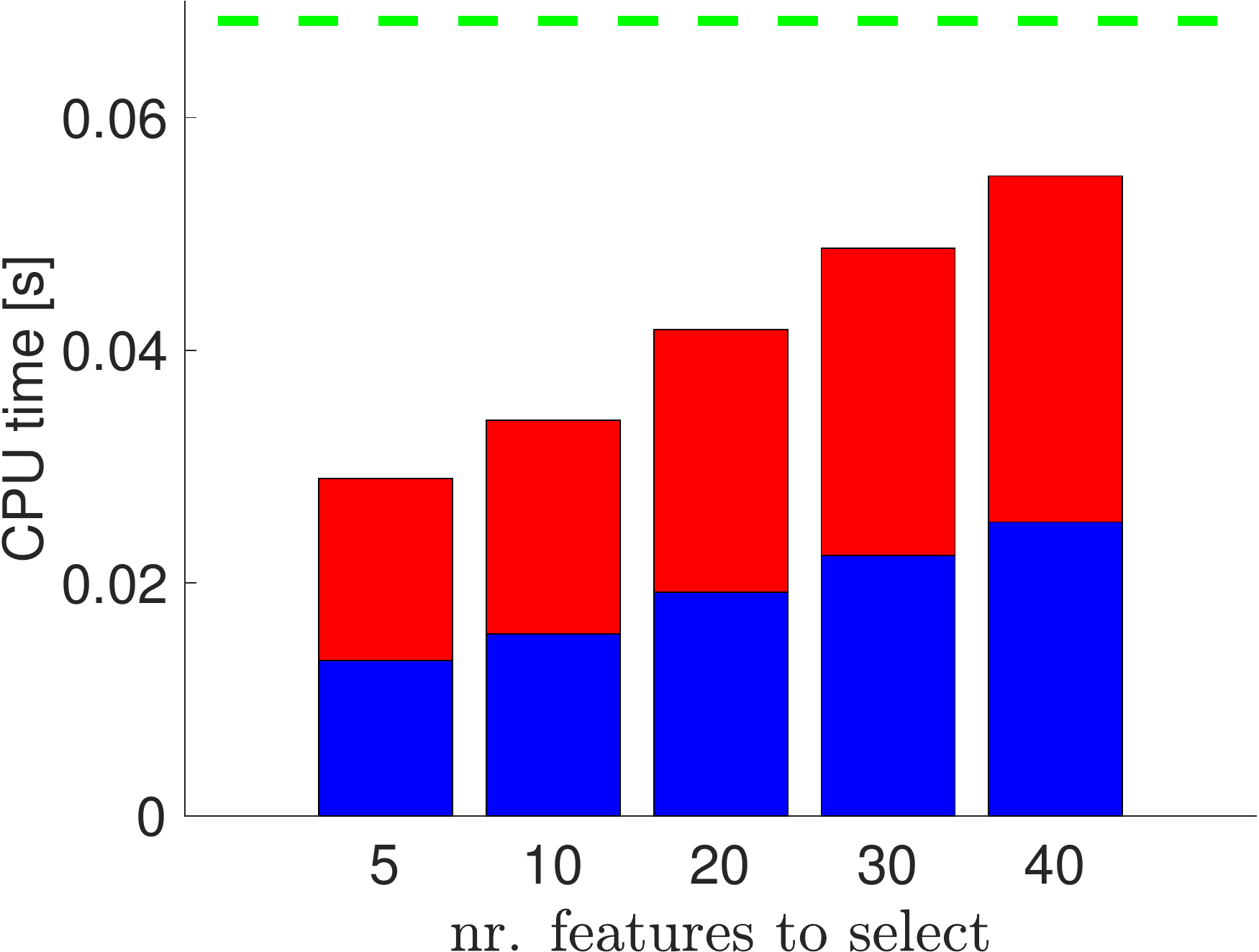} \\
\hspace{2mm}{\smaller V11_easy}
\end{minipage}
&
\begin{minipage}{\mpw}%
\centering
\includegraphics[width=\mww\columnwidth]{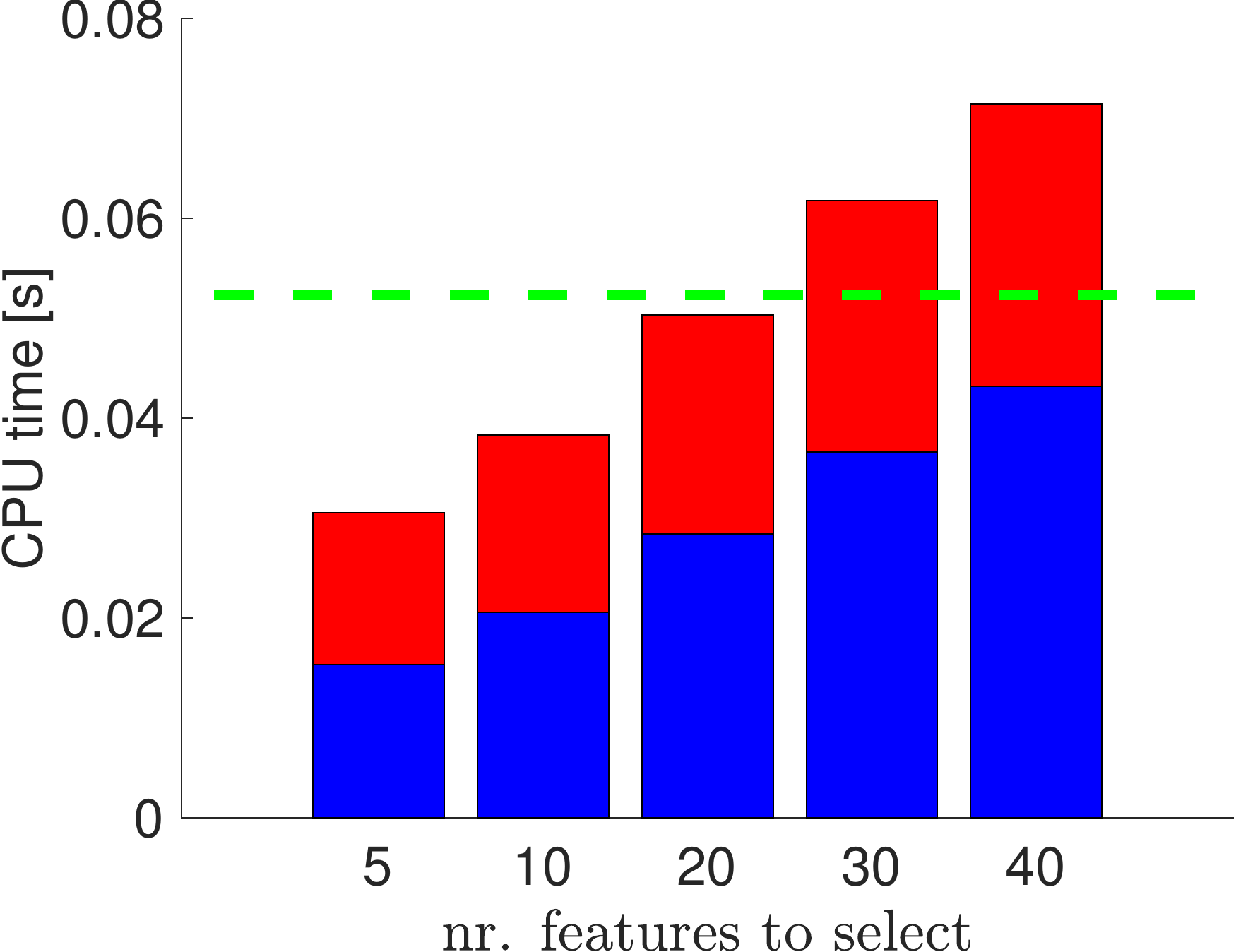} \\
\hspace{2mm}{\smaller V12_med}
\end{minipage}
& 
\begin{minipage}{\mpw}%
\centering%
\includegraphics[width=\mww\columnwidth]{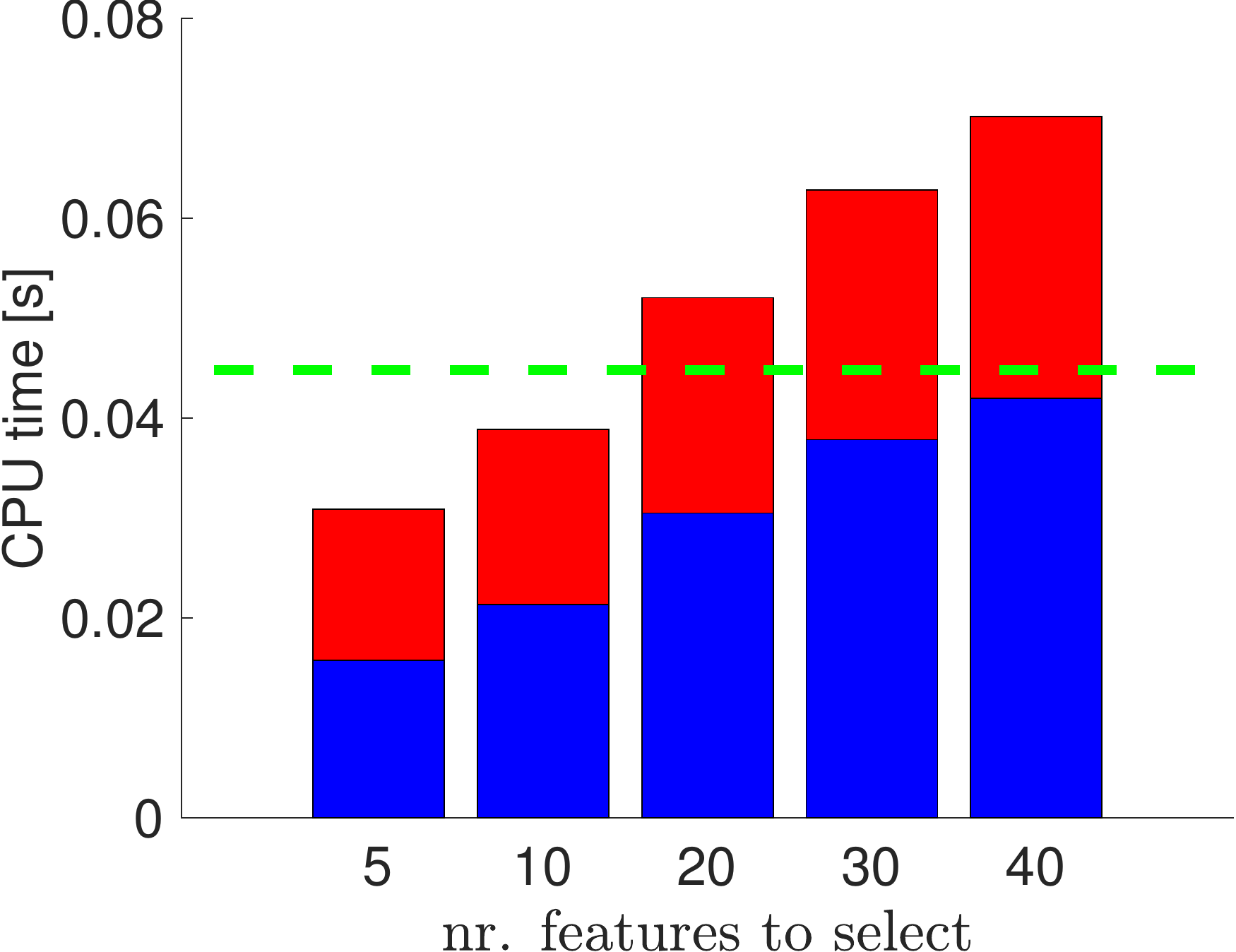} \\
\hspace{2mm}{\smaller V13_hard}
\end{minipage}
&
\begin{minipage}{\mpw}%
\centering%
\includegraphics[width=\mww\columnwidth]{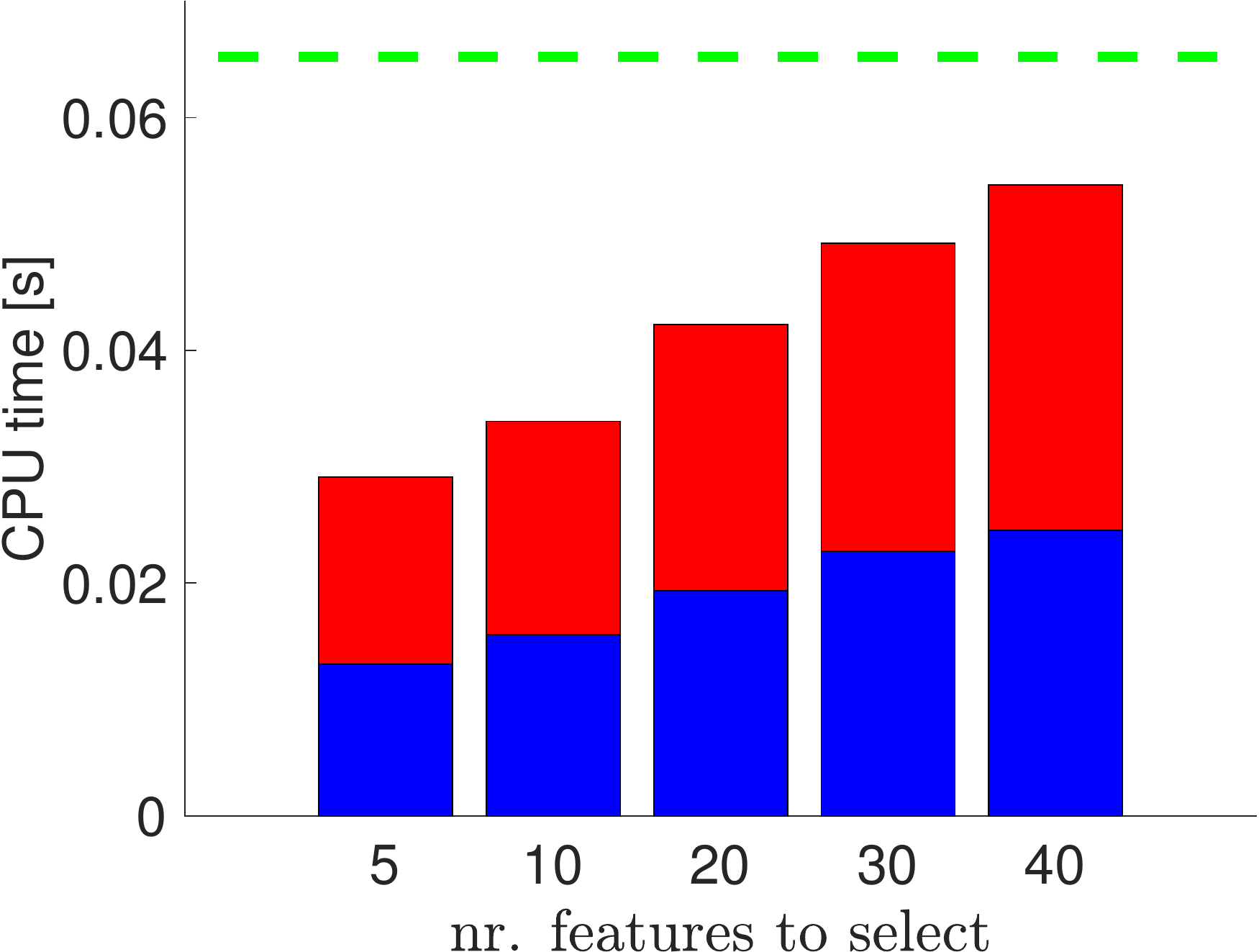} \\
\hspace{2mm}{\smaller V21_easy}
\end{minipage}
&
\begin{minipage}{\mpw}%
\centering%
\includegraphics[width=\mww\columnwidth]{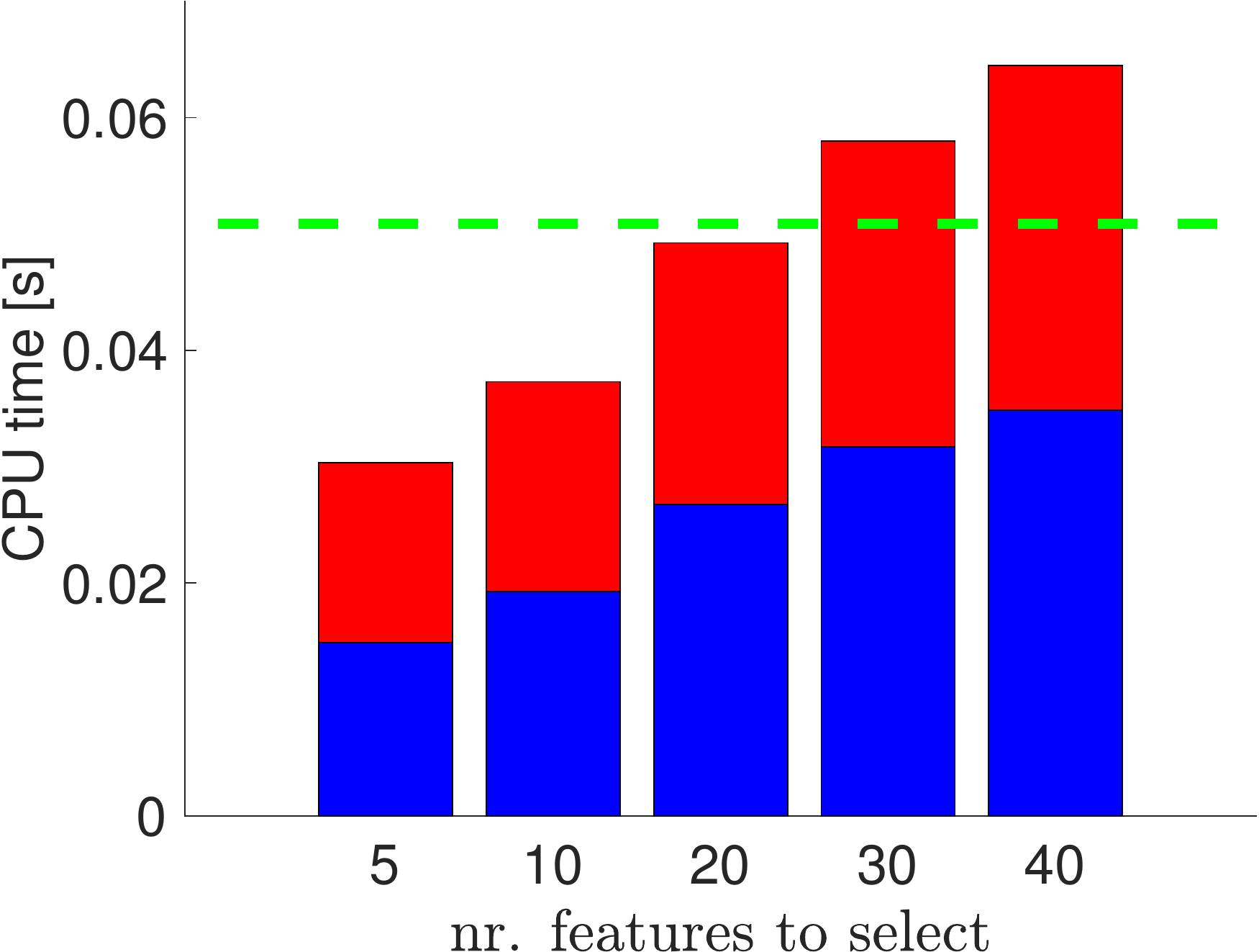} \\
\hspace{2mm}{\smaller V22_med}
\end{minipage}
&
\begin{minipage}{\mpw}%
\centering%
\includegraphics[width=\mww\columnwidth]{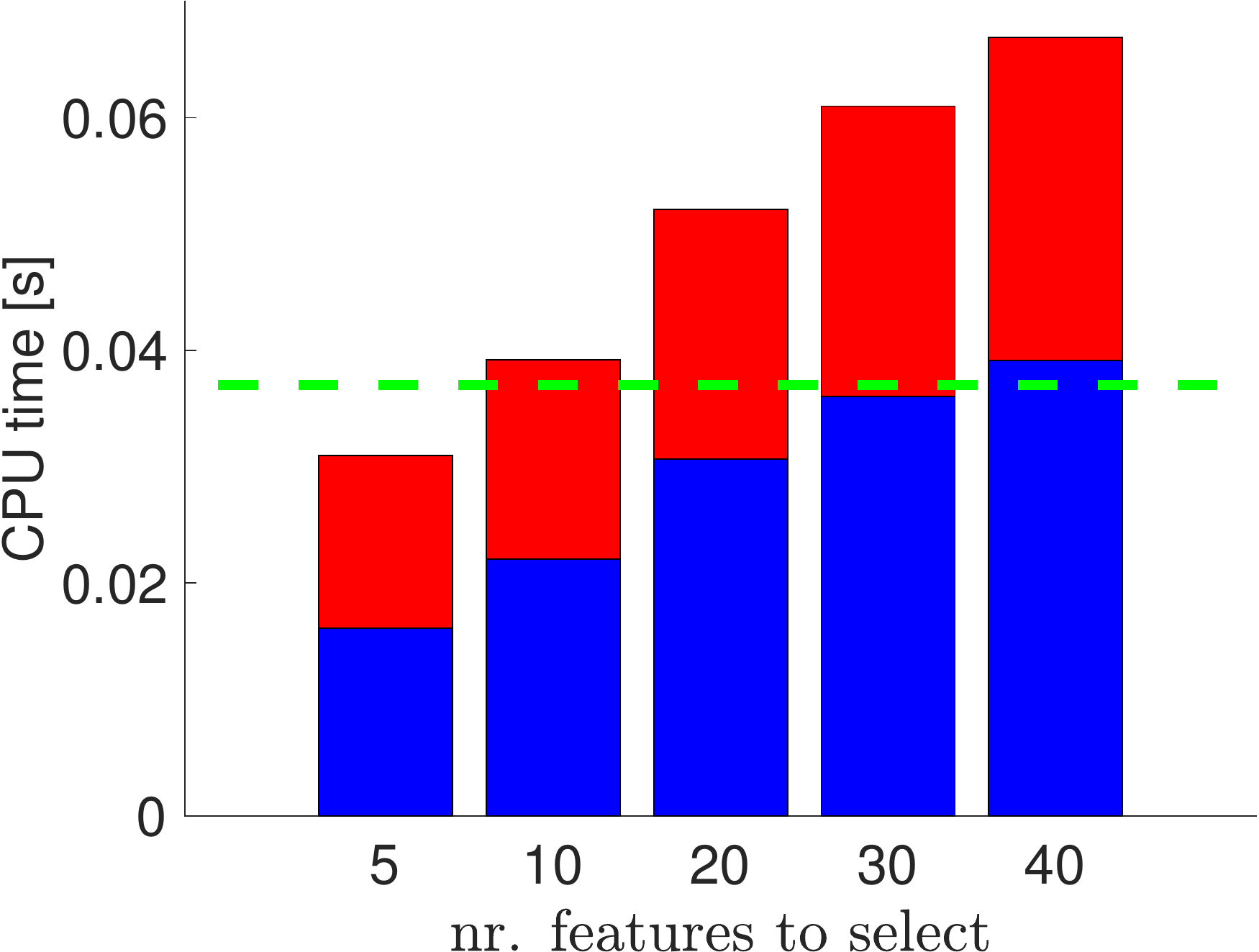} \\
\hspace{2mm}{\smaller V23_hard}
\end{minipage}
\end{tabular}
\end{minipage}%
\caption{\label{fig:logDetDetails-timing}
\edited{CPU time breakdown for the \logDet selector for increasing number of selected features and for each of the EuRoC datasets.
The blue portion of each bar reports the time spent on feature selection (\prettyref{alg:greedy}), while the red portion corresponds 
to the time spent on factor graph optimization in iSAM2; the sum of these two times corresponds to the overall back-end 
time. The back-end time is compared against the CPU time required by \noselection, shown as a dashed green line.}
}
\end{figure*}


\renewcommand{\mpw}{2.7cm}
\renewcommand{\mww}{1.12}

\begin{figure*}[h]
\begin{minipage}{\textwidth}
\hspace{-0.5cm}
\begin{tabular}{cccccc}%
\begin{minipage}{\mpw}%
\centering
\includegraphics[width=0.9\columnwidth]{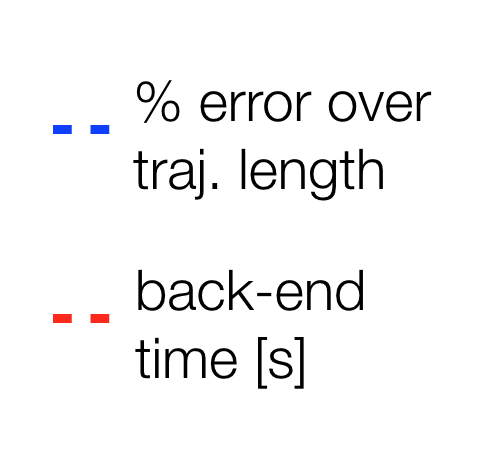} \\
\end{minipage}
&
\begin{minipage}{\mpw}%
\centering
\includegraphics[width=\mww\columnwidth, trim=0.000001mm 0.0mm 0.000003mm 0.04mm,clip]{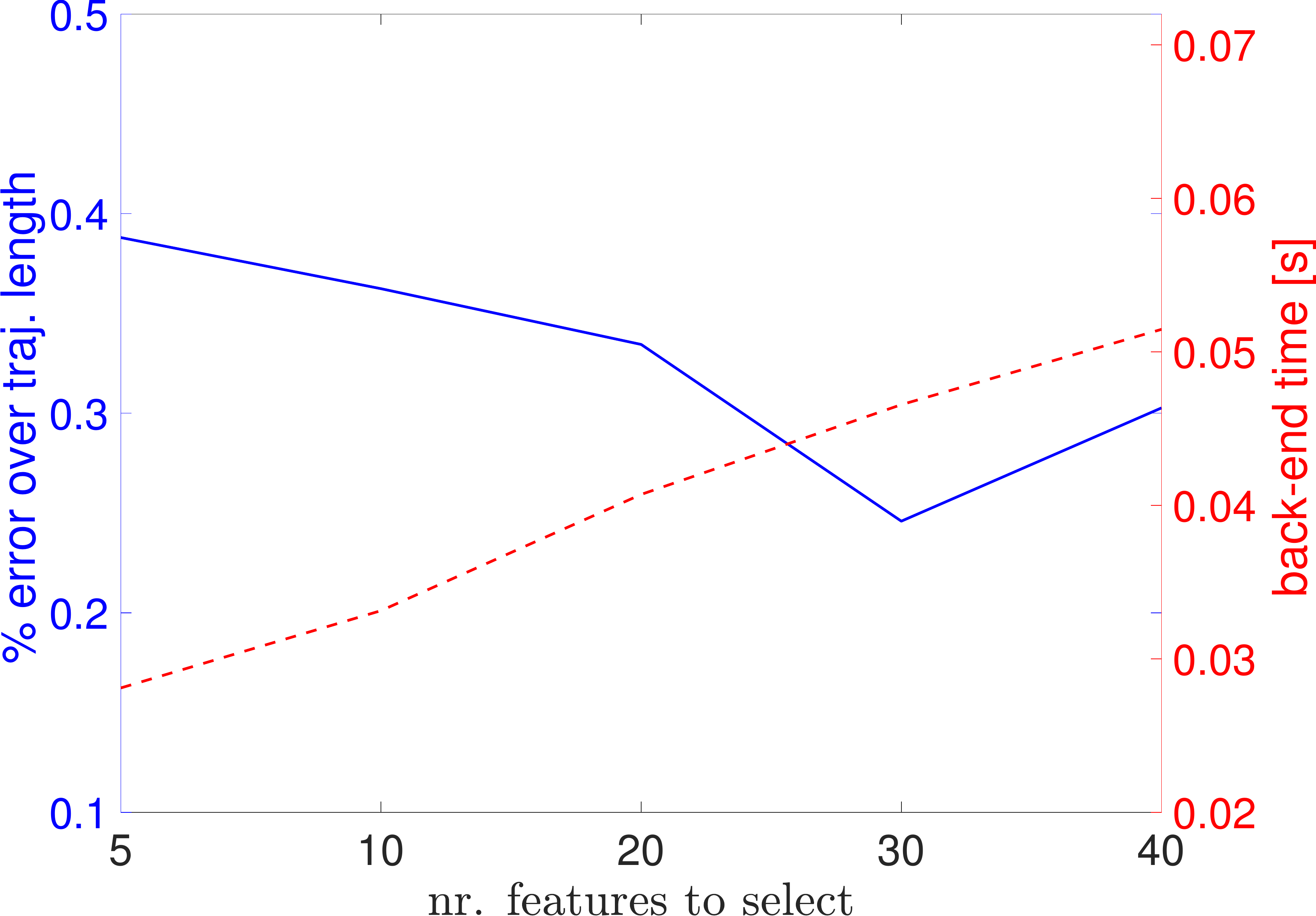} \\
\hspace{2mm}{\smaller MH_01_easy} 
\end{minipage}
& 
\begin{minipage}{\mpw}%
\centering%
\includegraphics[width=\mww\columnwidth, trim=0.000001mm 0.0mm 0.000003mm 0.04mm,clip]{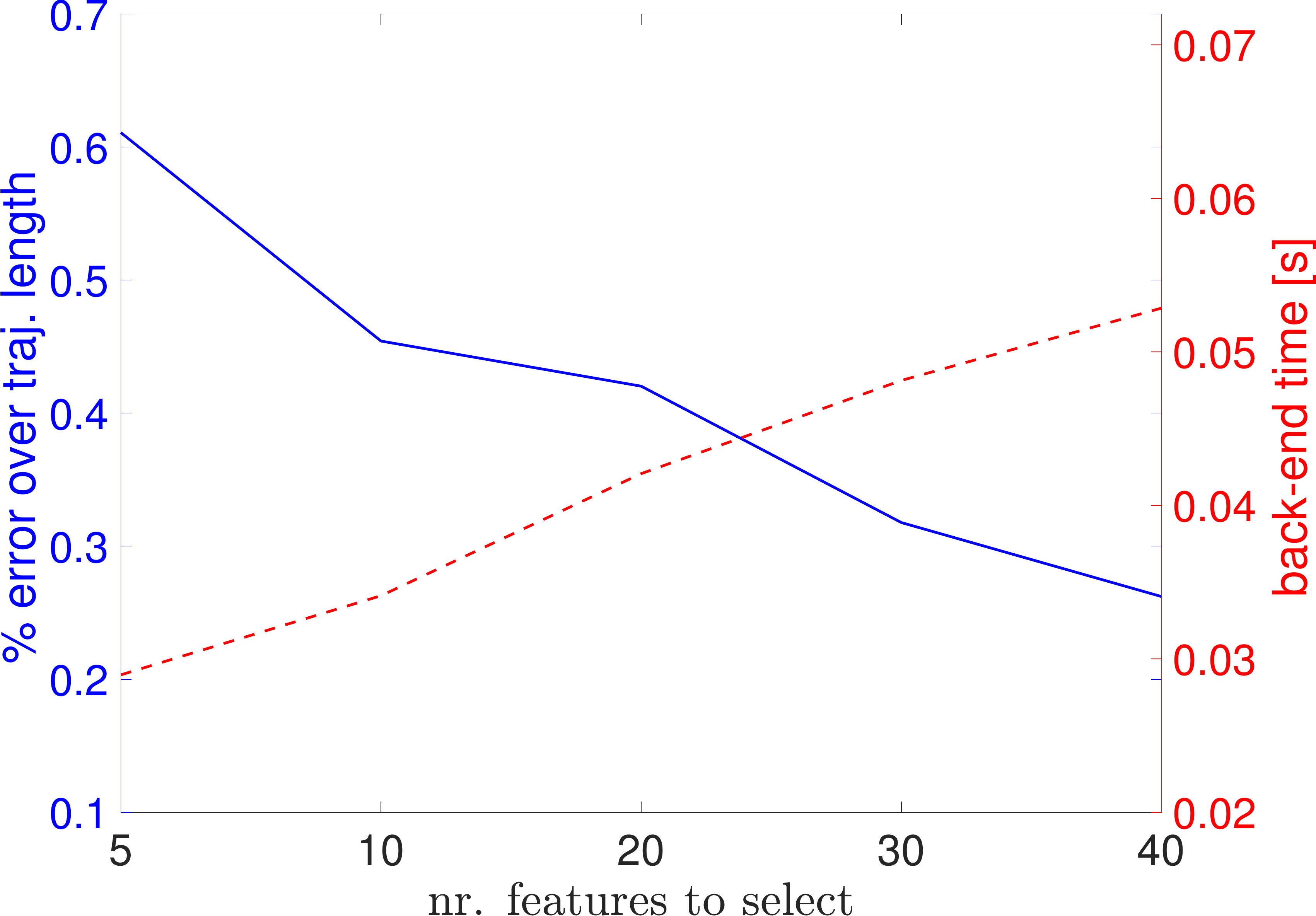} \\
\hspace{2mm}{\smaller MH_02_easy}
\end{minipage}
&
\begin{minipage}{\mpw}%
\centering%
\includegraphics[width=\mww\columnwidth, trim=0.000001mm 0.0mm 0.000003mm 0.04mm,clip]{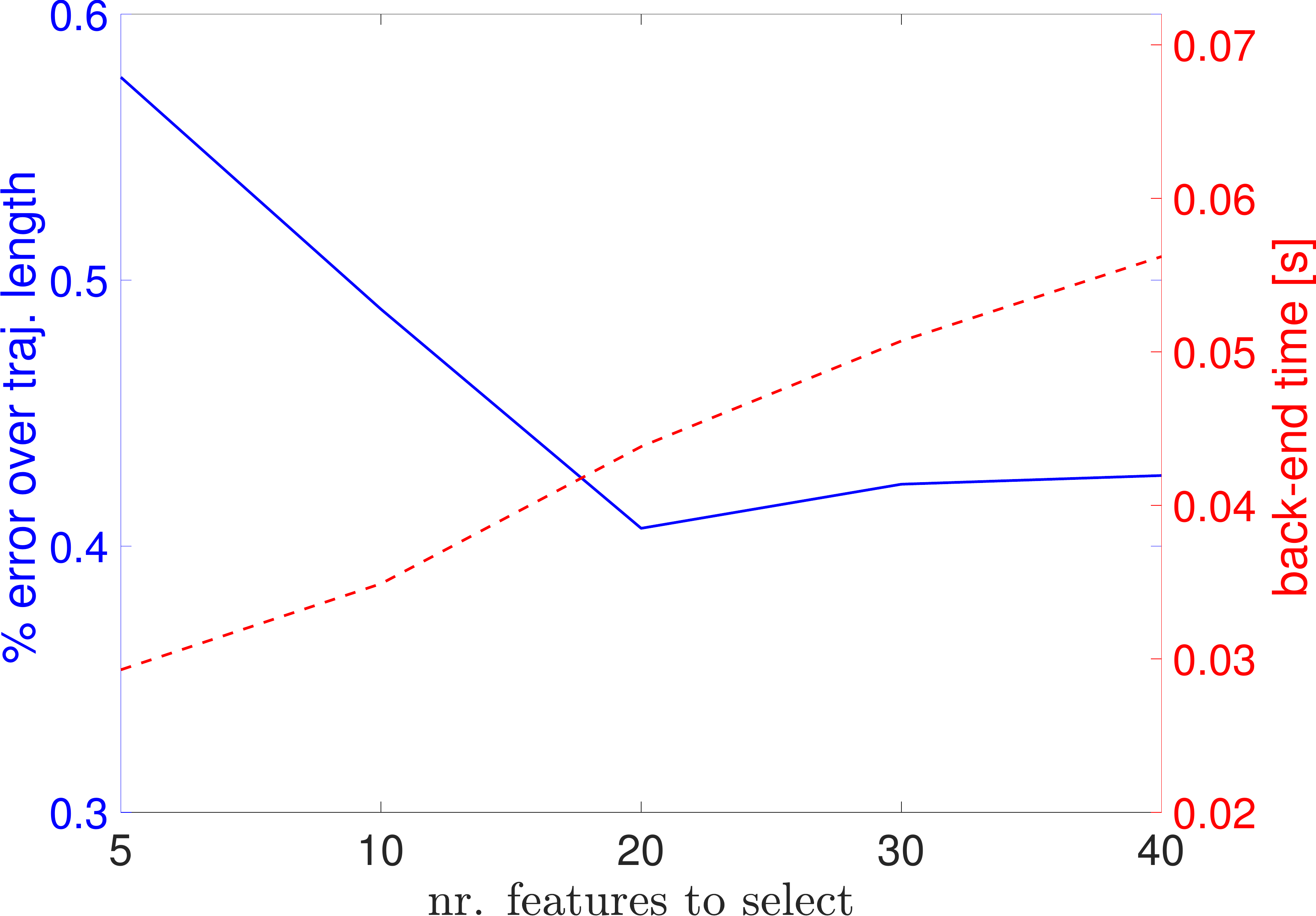} \\
\hspace{2mm}{\smaller MH_03_med}
\end{minipage}
&
\begin{minipage}{\mpw}%
\centering%
\includegraphics[width=\mww\columnwidth, trim=0.000001mm 0.0mm 0.000003mm 0.04mm,clip]{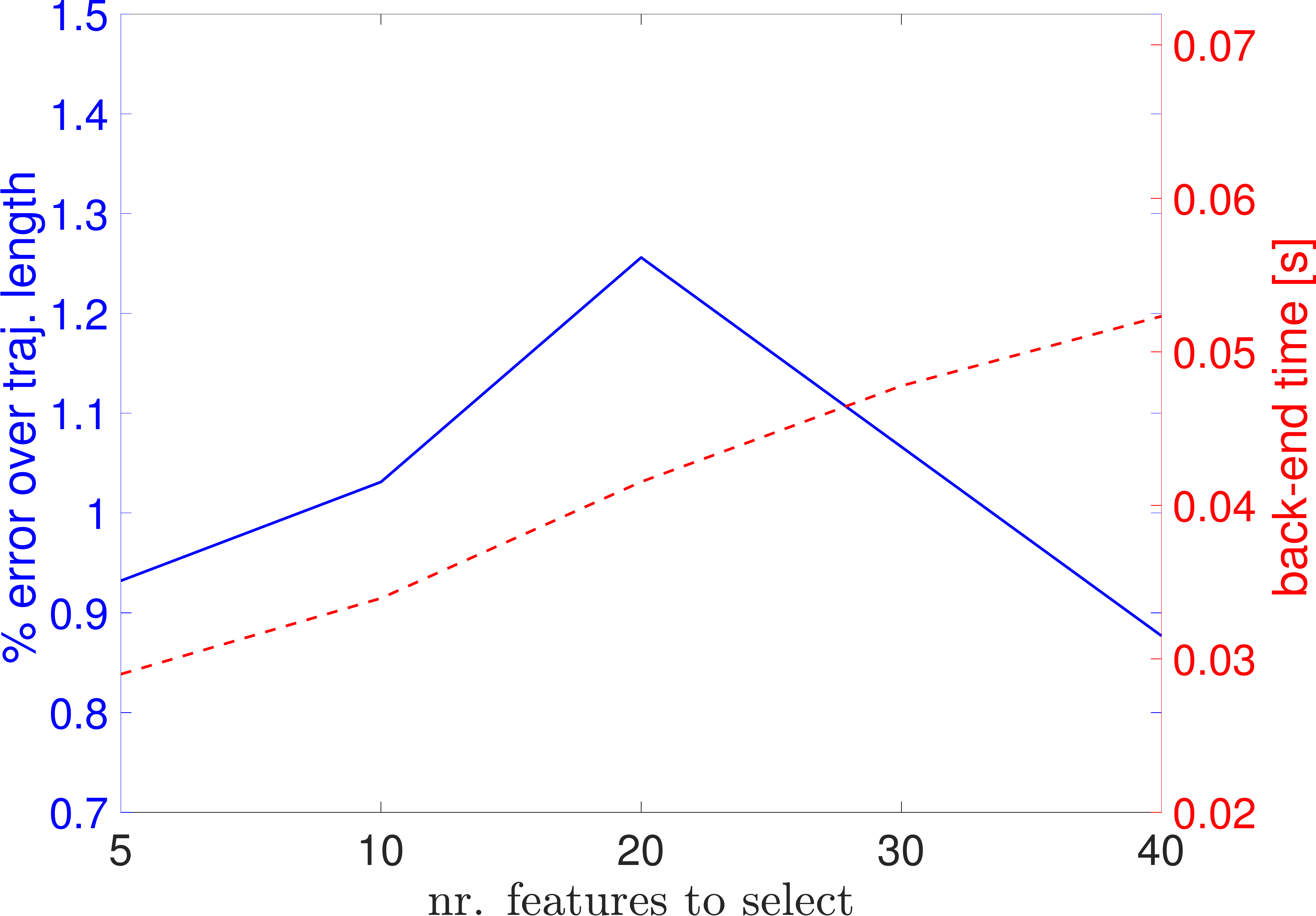} \\
\hspace{2mm}{\smaller MH_04_hard}
\end{minipage}
&
\begin{minipage}{\mpw}%
\centering%
\includegraphics[width=\mww\columnwidth, trim=0.000001mm 0.0mm 0.000003mm 0.04mm,clip]{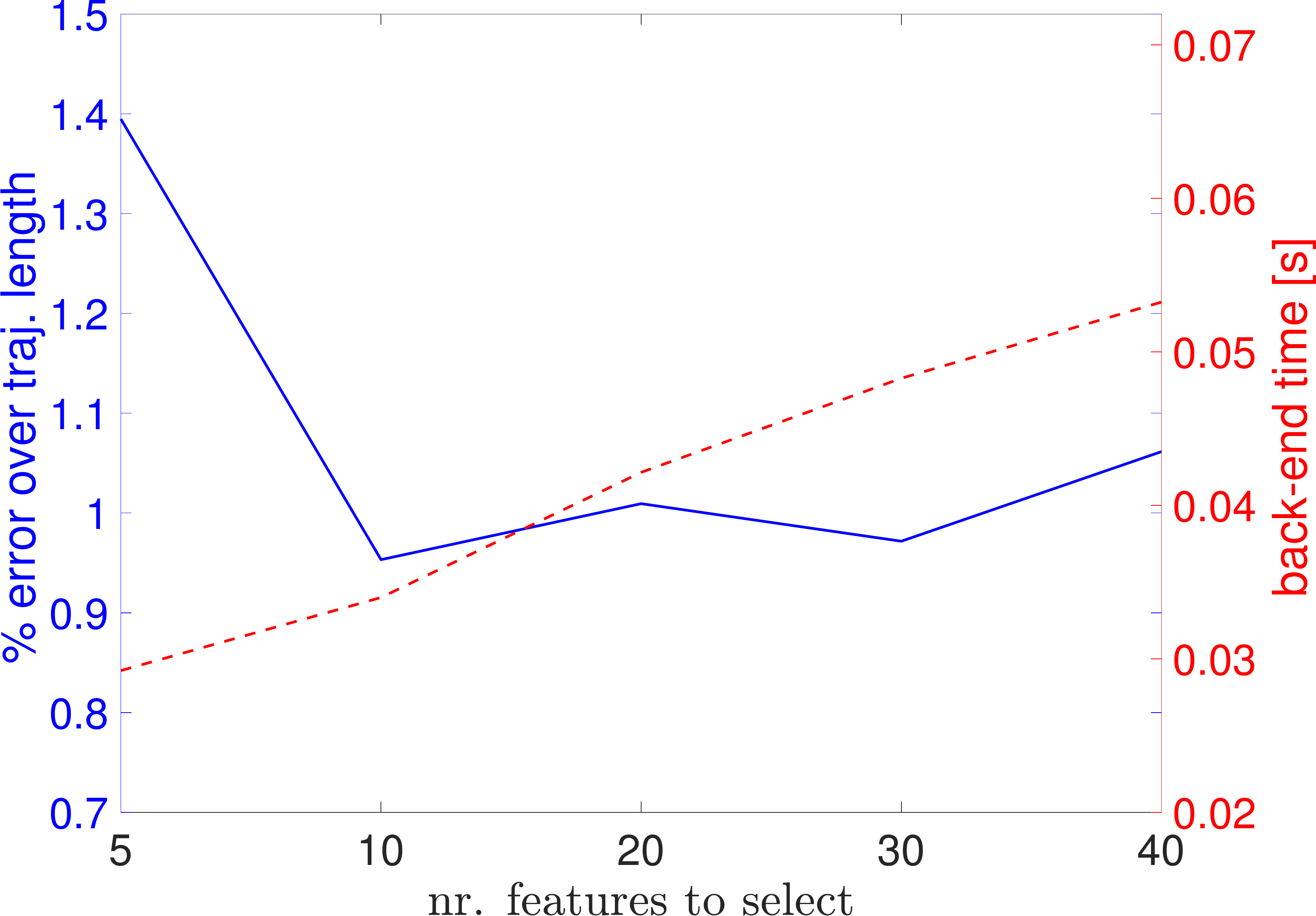} \\
\hspace{2mm}{\smaller MH_05_hard}
\end{minipage}
\\
\begin{minipage}{\mpw}%
\centering
\includegraphics[width=\mww\columnwidth, trim=0.000001mm 0.0mm 0.000003mm 0.04mm,clip]{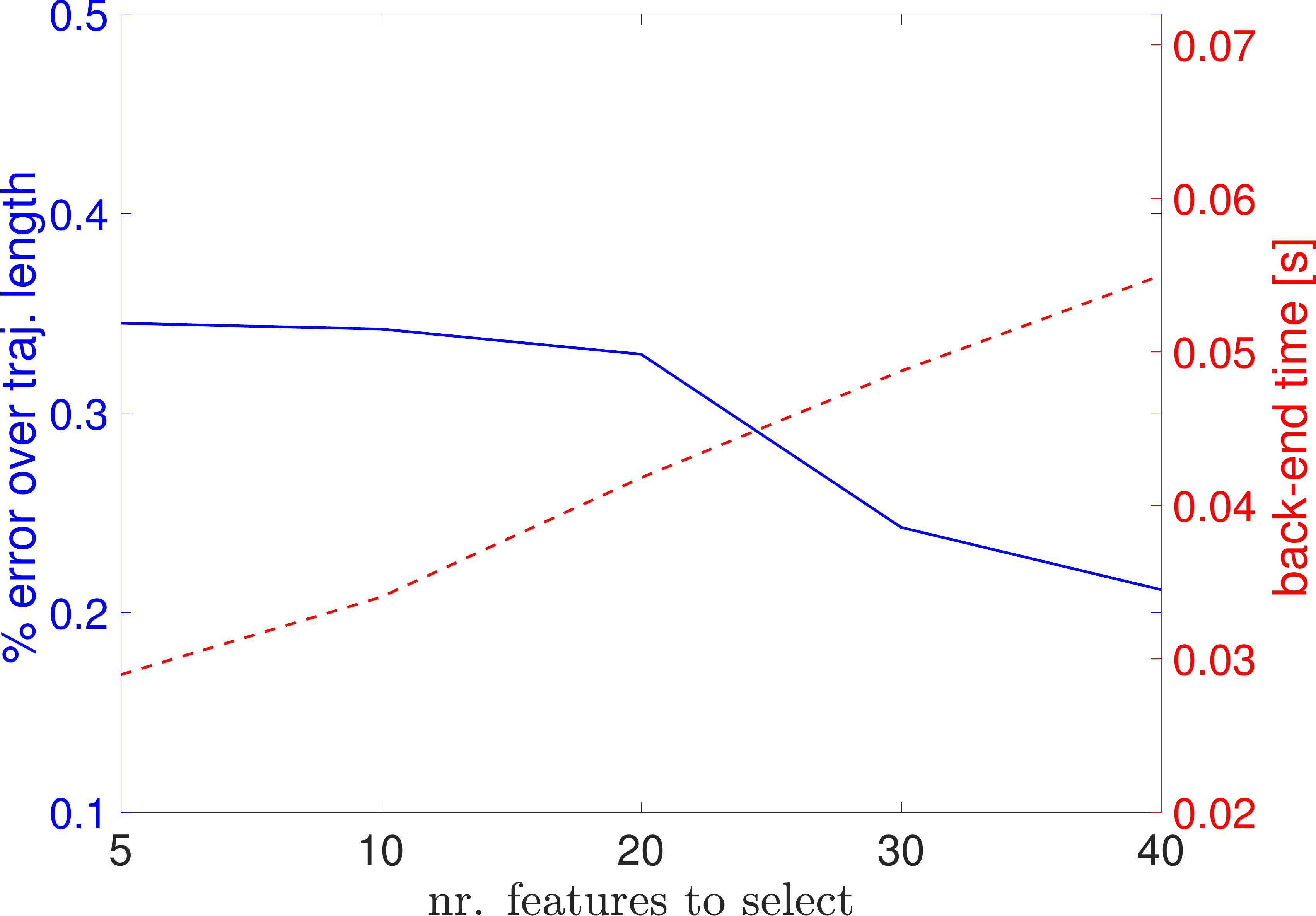} \\
\hspace{2mm}{\smaller V11_easy}
\end{minipage}
&
\begin{minipage}{\mpw}%
\centering
\includegraphics[width=\mww\columnwidth, trim=0.000001mm 0.0mm 0.000003mm 0.04mm,clip]{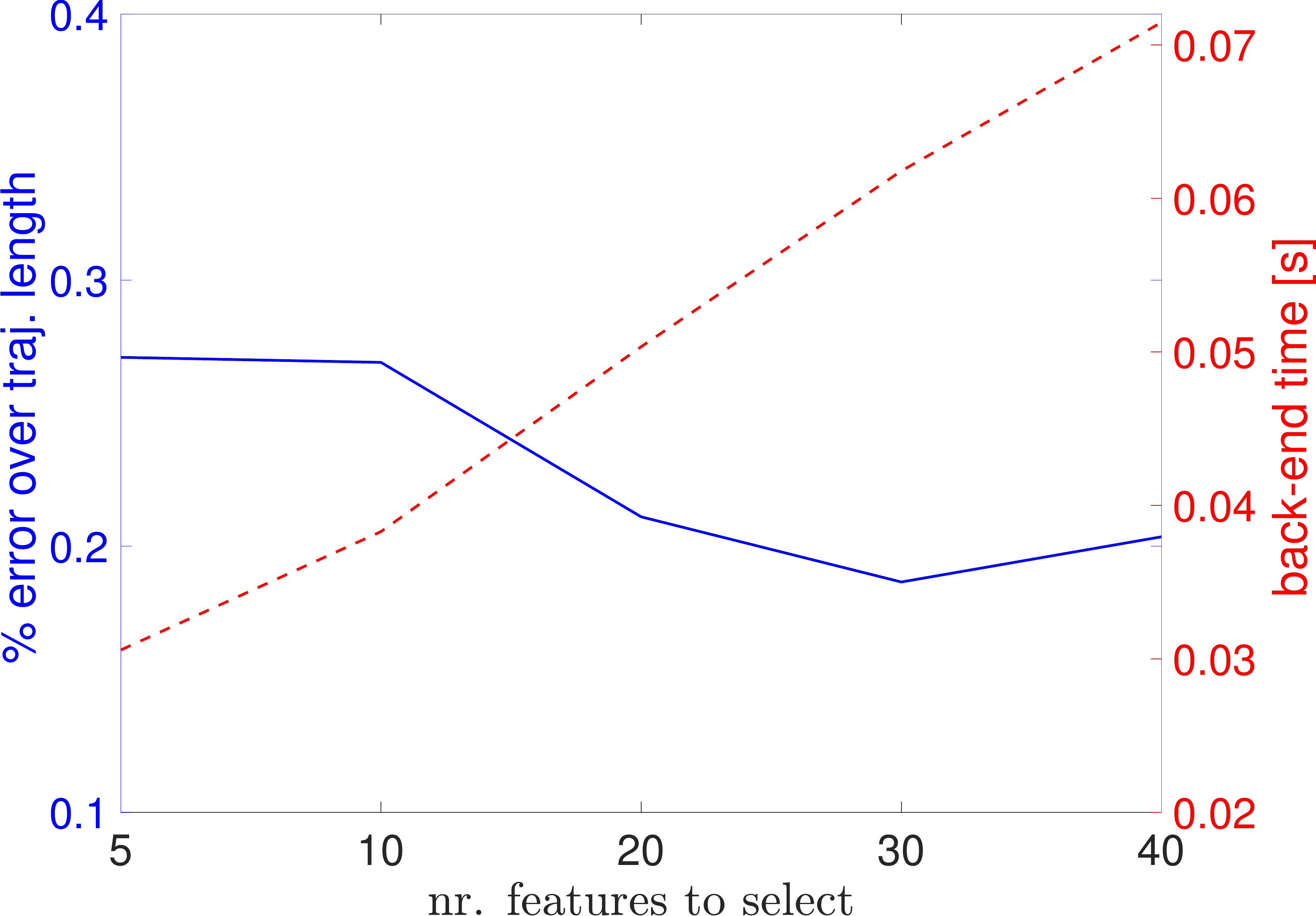} \\
\hspace{2mm}{\smaller V12_med}
\end{minipage}
& 
\begin{minipage}{\mpw}%
\centering%
\includegraphics[width=\mww\columnwidth, trim=0.000001mm 0.0mm 0.000003mm 0.04mm,clip]{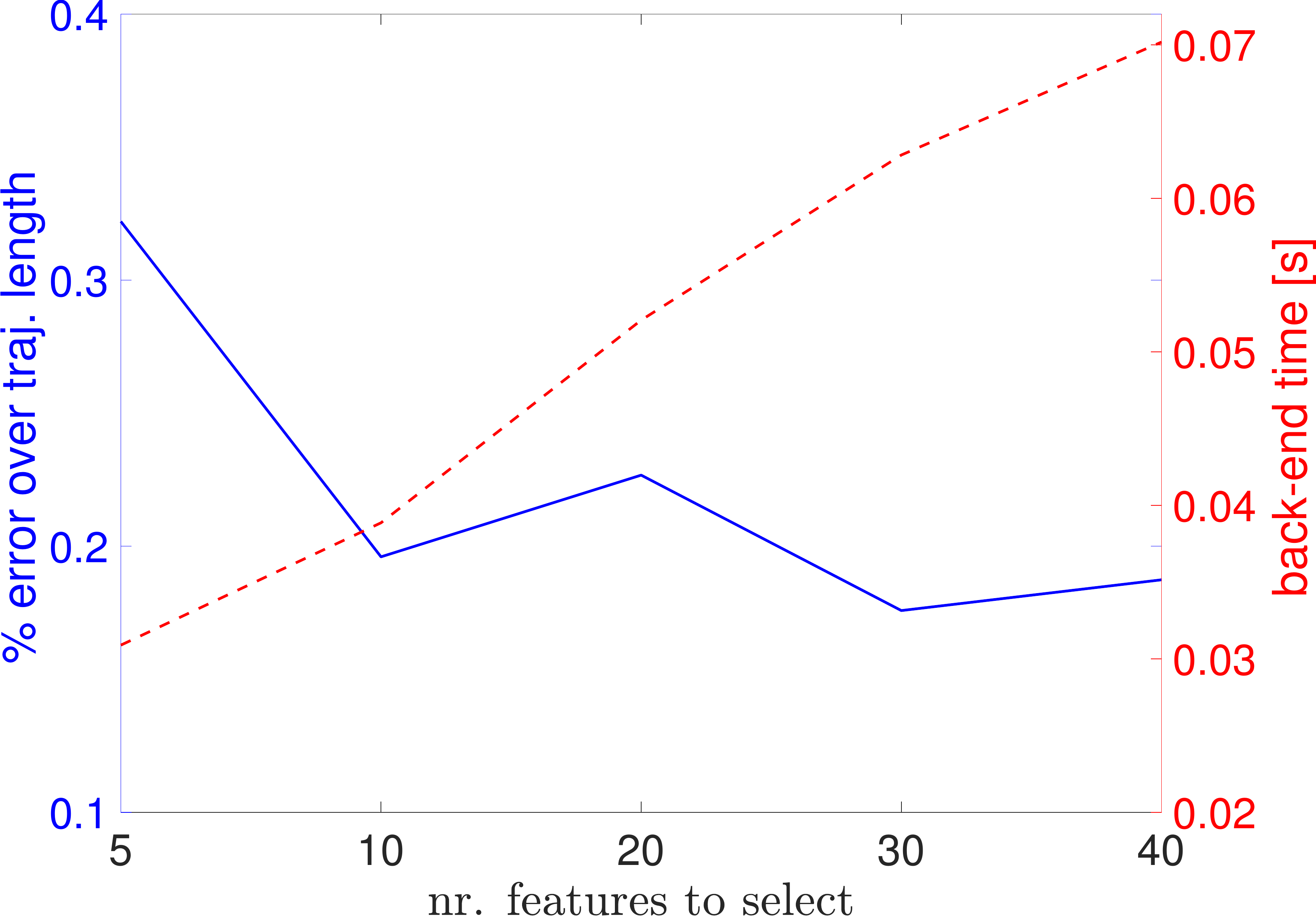} \\
\hspace{2mm}{\smaller V13_hard}
\end{minipage}
&
\begin{minipage}{\mpw}%
\centering%
\includegraphics[width=\mww\columnwidth, trim=0.000001mm 0.0mm 0.000003mm 0.04mm,clip]{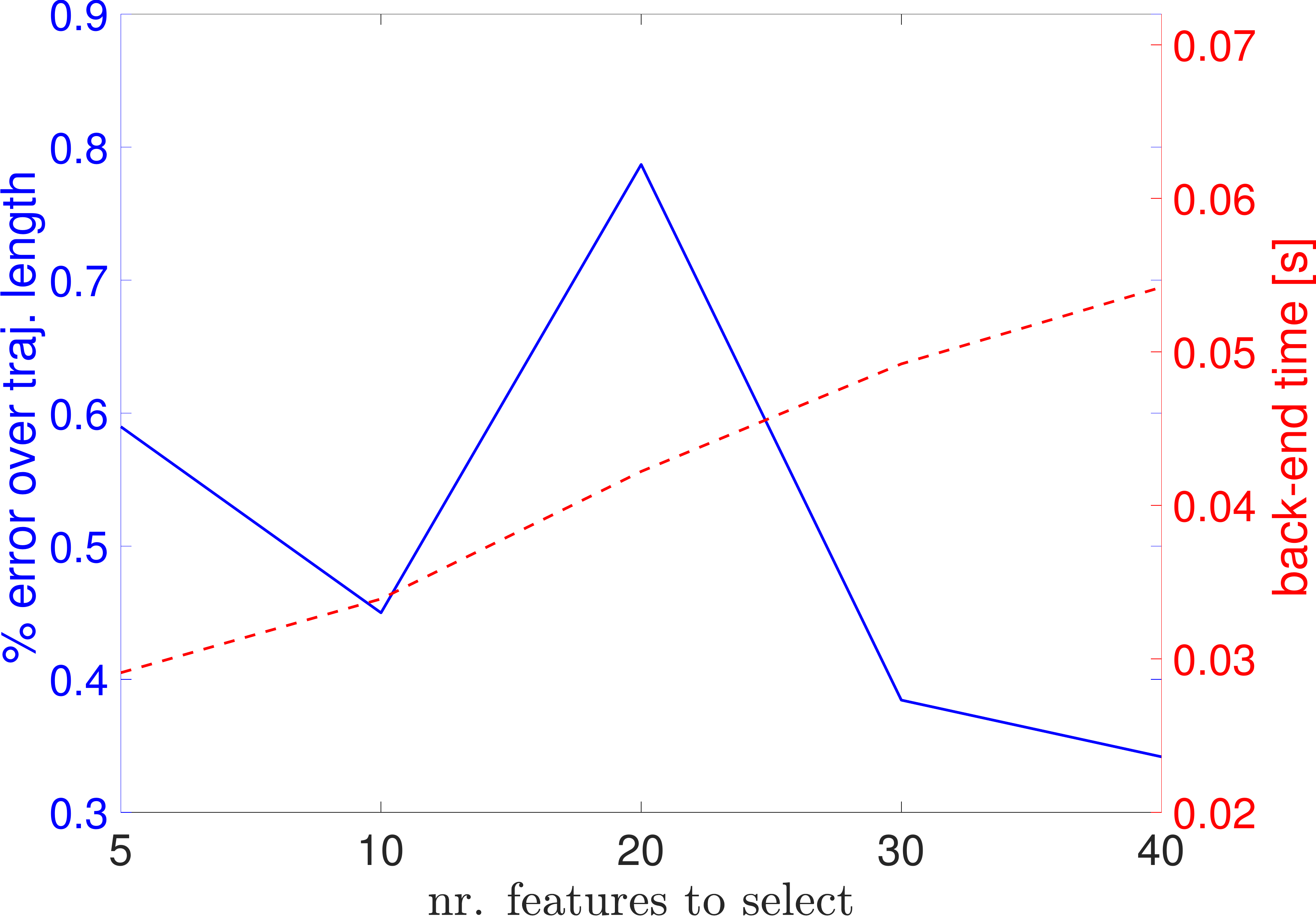} \\
\hspace{2mm}{\smaller V21_easy}
\end{minipage}
&
\begin{minipage}{\mpw}%
\centering%
\includegraphics[width=\mww\columnwidth, trim=0.000001mm 0.0mm 0.000003mm 0.04mm,clip]{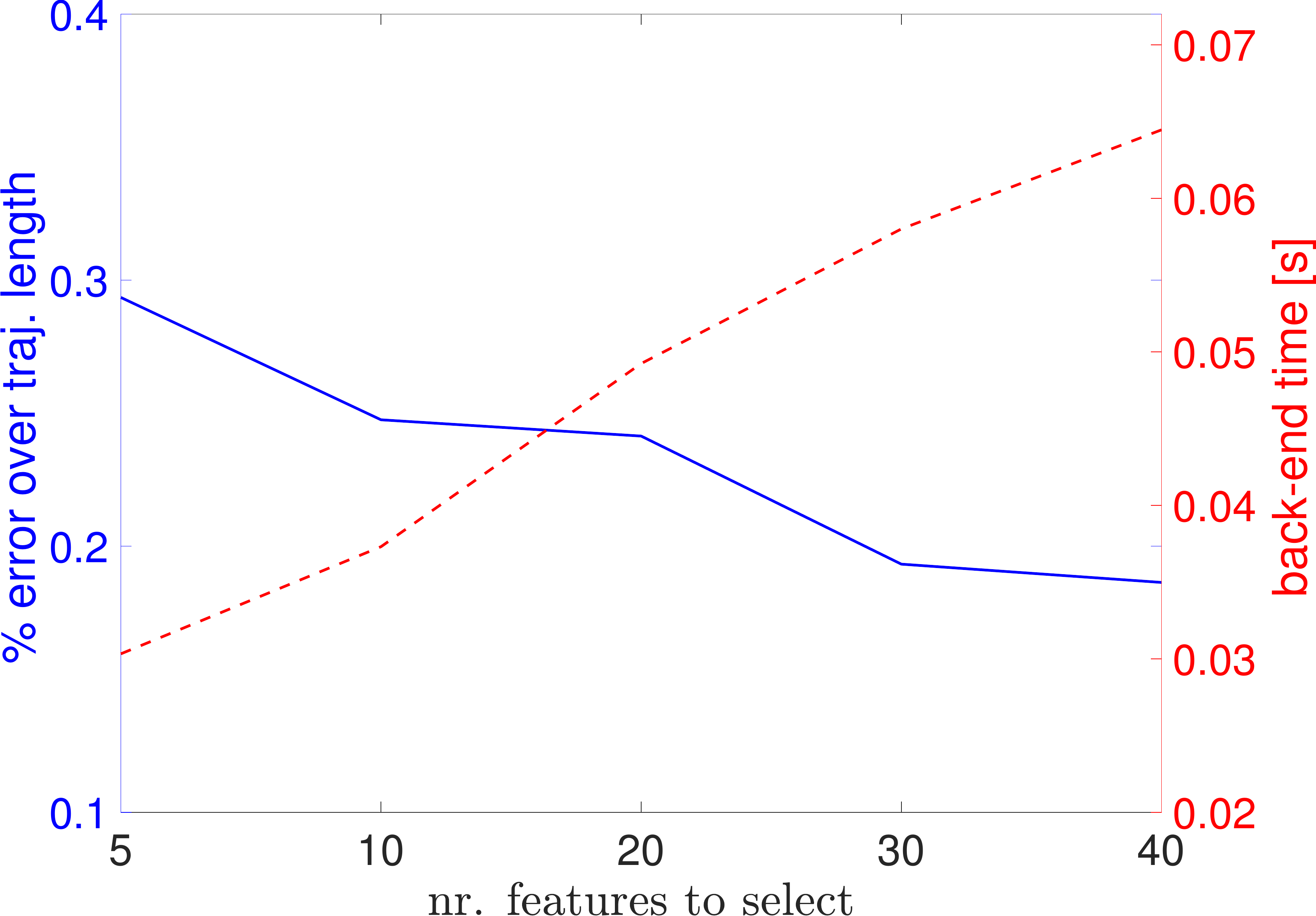} \\
\hspace{2mm}{\smaller V22_med}
\end{minipage}
&
\begin{minipage}{\mpw}%
\centering%
\includegraphics[width=\mww\columnwidth, trim=0.000001mm 0.0mm 0.000003mm 0.04mm,clip]{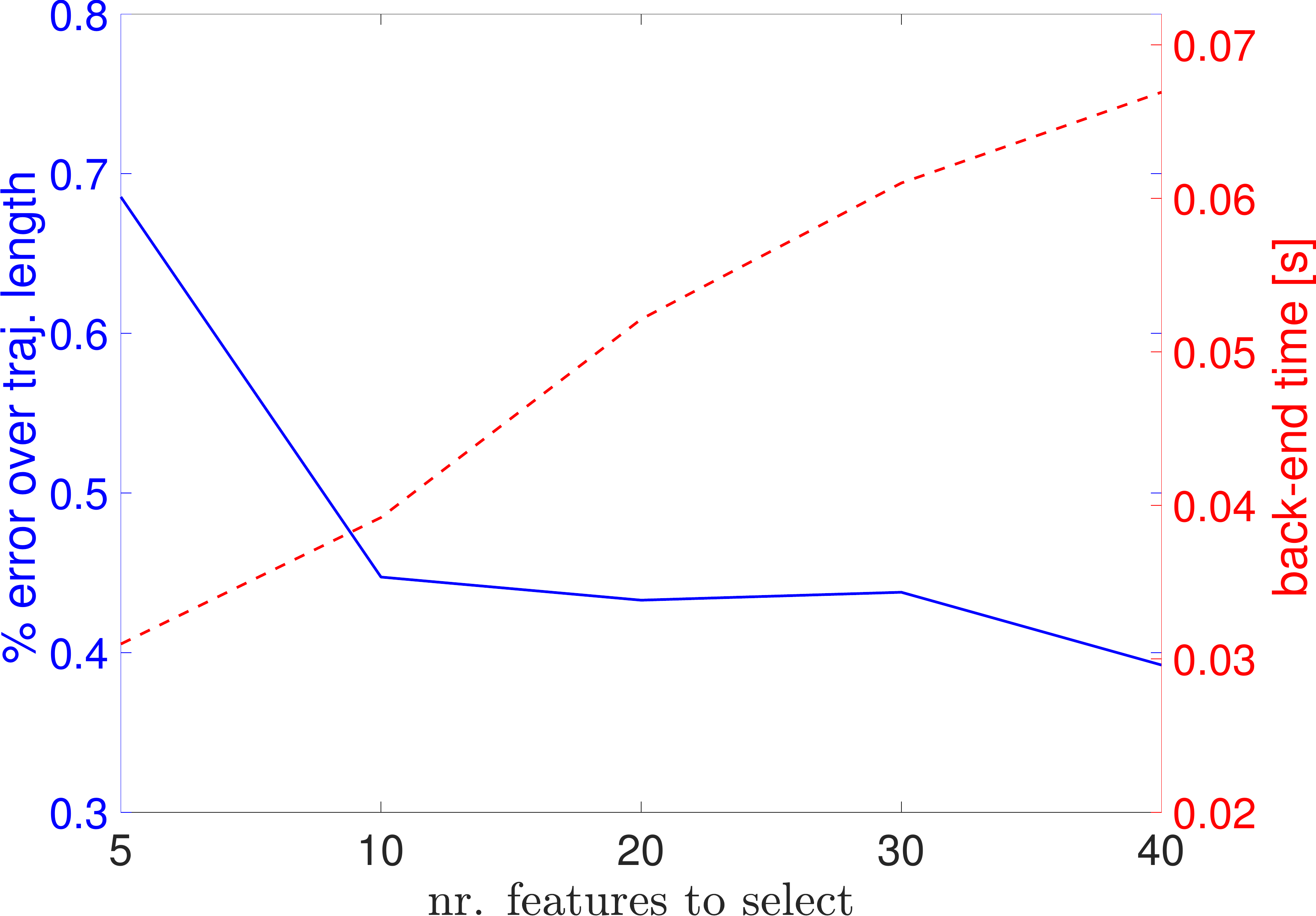} \\
\hspace{2mm}{\smaller V23_hard}
\end{minipage}
\end{tabular}
\end{minipage}%
\caption{\label{fig:logDetDetails-tradeoff}
\edited{Performance/computation trade-off for the \logDet selector for increasing number of selected features and for each of the EuRoC datasets.
The back-end time, shown as a dashed red line, corresponds to the sum of the feature selection time and the iSAM2 estimation time. 
The figure also reports the absolute translation error as a percentage of the 
trajectory traveled, shown as a solid blue line.  
Results are averaged over 5 runs performed on each dataset.}
}
\end{figure*}

\prettyref{fig:logDetDetails-timing} provides a more detailed breakdown of the back-end time
for increasing number of selected features and for each of the EuRoC datasets. 
The figure focuses on the \logDet approach, which currently has been already shown to be remarkably faster than the \minEig approach. 
The blue portion of each bar reports the time spent on feature selection, while the red portion corresponds 
to the time spent on factor graph optimization in iSAM2; the sum of these two times corresponds to the overall back-end 
time reported in \prettyref{fig:CPUtime}. 
The figure compares the timing results for the proposed approach versus \noselection for increasing number of selected features 
(in all cases, \noselection retains all the tracked features). 
The figure provides few useful insights.
First, the computational advantage in using the proposed attention mechanism is more noticeable when the number of selected features is small. In particular, when selecting 5-10 features we 
typically have a large computational saving by using the proposed approach. However, when the number of features approaches the overall number of 
available measurements (100 in our tests), the computational advantage can be negligible.
This is due to the 
fact that our current implementation of the feature selection is relatively slow and indeed its computational cost 
is often comparable with the cost of running iSAM2. In \edited{Section~\ref{sec:conclusion}} we discuss extensions that 
have the potential of making the selection time negligible.
The second insight is that the computational advantage of the proposed approach is more evident in easy datasets, while in some of the difficult datasets 
(e.g., V12_med, V13_hard, V22_med, V23_hard) the computational advantage becomes marginal or inexistent.
This mismatch results from the fact that in the easy datasets the front-end is able to track many features (typically all the features that we set as upper bound) which implies that the \noselection approach requires more time to perform estimation. In the more difficult scenarios, instead, the front-end typically tracks less features hence resulting in faster estimation (and degraded performance). 
Note that while in the difficult scenarios the computational advantage of 
our approach may be limited, the accuracy boost resulting from our approach still suggests its use over \quality.

\prettyref{fig:logDetDetails-tradeoff} provides the computation/performance trade-off for increasing number of features and for each 
of the EuRoC datasets, using the \logDet selector.
The back-end time, shown as a dashed red line, corresponds to the sum of the feature selection time and the iSAM2 estimation time. As expected the 
time increases with the number of selected features.
The figure also reports the absolute translation error as a percentage of the 
trajectory traveled, shown as a solid blue  line. The error typically decreases when increasing the number of selected features. 
Results are averaged over 5 runs performed on each dataset.
The trade-off plots in 
\prettyref{fig:logDetDetails-tradeoff} can be used to decide the number of features to use to attain a desired level of accuracy or 
given an upper bound on the time that can be spent performing estimation at the back-end.
}

\section{Conclusion and Future Work}
\label{sec:conclusion}

This work provides an attention mechanism for visual-inertial navigation.
This mechanism takes the form of a feature selector, which retains the most informative 
visual features detected by the VIN front-end (pre-attentive process) and feeds them to 
the estimation back-end. We proposed two algorithms for feature selection. 
Both algorithms enjoy four desirable qualities: they are predictive in nature, 
in that they are aware of the motion of the robot in the immediate future;
they are task-driven, since they select a set of features that minimize the VIN estimation error;
they are greedy, hence efficient and easy to implement; they come with performance guarantees 
that bound their sub-optimality. We demonstrated our feature selection extensively on both 
realistic Monte Carlo simulations and real-world data collected by a micro aerial vehicle. 
The experiments suggest that the feature selection seriously impacts VIN performance; 
the use of the proposed techniques reduces the estimation error in easy datasets, 
and enables accurate estimation in difficult datasets in which standard approaches 
would fail on a limited budget of visual features. 
This work opens many avenues for future investigation.

\myparagraph{Computational improvements}
The first avenue for future work consists in reducing the computational time of feature selection. 
Two main ideas can make the feature selection time  negligible. 
The first stems from the observation that the greedy algorithm is trivially parallelizable: 
the marginal gain of each feature can be computed independently; leveraging this fact alone 
would result in large computational savings. 
The second idea is to use sparse matrix manipulation to compute the determinant and the 
smallest eigenvalues; our current implementation uses dense matrices. 
\editedTwo{Another interesting avenue  consists in including a learning mechanism to improve 
feature selection. 
A learning-based method may allow capturing more complex sensory-motor dynamics 
 and may improve the feature selection over time, potentially adjusting to changing environments 
 and time-varying robot dynamics.
}

\myparagraph{Task-driven perception}
A second avenue for future work consists in extending our attention framework. 
We plan to explore two paths. First, while~\eqref{eq:featureSelection} 
minimizes the localization uncertainty subject to a feature budget, one may 
also consider a ``dual'' problem in which one minimizes the number of features 
to be used, while satisfying a desired localization performance. 
From the technical standpoint, this alternative formulation can be tackled 
in a similar manner and in both cases greedy algorithms have sub-optimality guarantees.
This alternative formulation would provide a grounded answer to the 
question: \emph{how much visual information is needed to navigate at a desired accuracy?}
The second avenue consists in extending our attention framework to other tasks: for instance, 
\emph{how many visual features does the robot need to sense in order to avoid crashing into nearby obstacles?}
We believe  these are necessary steps towards a task-driven perception 
theory, that can enable autonomy on robots with tight resource constraints. 

%

\section*{Acknowledgments}
The authors gratefully acknowledge Christian Forster for the useful discussions 
and for providing the VIN simulator, and 
\edited{ the anonymous reviewers for providing invaluable comments to improve the quality 
of this manuscript.}
\appendix

\newcommand{\sumIndTwo}{{\color{black}\sum_{\ind = \k}^{\j-2}}} 
\newcommand{\sumInd}{{\color{black}\sum_{\ind = \k}^{\j-1}}} 

\subsection{Linear Imu Model}
\label{sec:linearImuModelAppendix}

In this appendix we provide explicit expressions for the matrices and vectors appearing in 
the IMU model~\eqref{eq:imuModel}.

Given the velocity $\vel{\k}$ and the position $\pos{\k}$ of the robot at time $k$, 
we can get $\vel{\k+1}$ and $\vel{\k+1}$ by simple Euler integration 
using the acceleration $\acc{\k}$:
\bea
 \vel{\k+1} &=& \textstyle  \vel{\k} + \acc{\k} \Deltak \\
  \edited{\pos{\k+1}} &=& \textstyle  \pos{\k} + \vel{\k} \Deltak + \half \acc{\k} \Deltak^2
\eea
%
%
By induction, the velocity and position at time $\j > \k$ are:
\bea
 \vel{\j} &=& \textstyle  \vel{\k} + \sum_{\ind = \k}^{\j-1} \acc{\ind} \Deltak \nonumber \\
  \pos{\j} &=& \textstyle \pos{\k} + \sum_{\ind = \k}^{\j-1} \vel{\ind} \Deltak +\half\sum_{\ind = \k}^{\j-1} \acc{\ind} \Deltak^2 \nonumber \\
  &&  \grayText{(substituting $\vel{\ind}$)}\nonumber \\
  &=& \textstyle \pos{\k} + \sum_{\ind = \k}^{\j-1} ( \vel{\k} + \sum_{\indd = \k}^{\ind-1} \acc{\indd} \Deltak ) \Deltak +\half\sum_{\ind = \k}^{\j-1} \acc{\ind} \Deltak^2 
  \nonumber \\
  &&  \grayText{(moving $\vel{\k}$ outside the sum)}\nonumber \\
  &=& \textstyle \pos{\k} + (\editedTwo{\j\!-\!k\!}) \vel{\k} \Deltak +  \sum_{\ind = \k}^{\j-1} \sum_{\indd = \k}^{\ind-1} \acc{\indd} \Deltak^2 \nonumber \\
  && \textstyle  +\half\sum_{\ind = \k}^{\j-1} \acc{\ind} \Deltak^2 
  \nonumber \\
  &&  \grayText{(\edited{developing the double sum for $\ind = \k,\ldots,\j-1$}}\nonumber \\
  &&  \grayText{\edited{and noting that the summand $i=k$ vanishes})}\nonumber \\
  &=& \textstyle \pos{\k} + (\editedTwo{\j\!-\!k\!}) \vel{\k} \Deltak  \nonumber \\
  &&+  \underbrace{\acc{\k} \Deltak^2}_{\edited{\ind=\k+1}}   \label{eq:muy1}
  +  \underbrace{(\acc{\k}+\acc{\k+1}) \Deltak^2}_{\edited{\ind=\k+2}}  \label{eq:muy2} \\
  && +  \underbrace{(\acc{\k}+\acc{\k+1}+\acc{\k+2}) \Deltak^2}_{\edited{\ind=\k+3}} + \ldots \label{eq:muy3} \\
   &&+ \underbrace{(\acc{\k}+\acc{\k+1}+\ldots+\acc{\j-2}) \Deltak^2)}_{\edited{\ind=\j-1}} \label{eq:muy4} \\
  && \textstyle +\half\sum_{\ind = \k}^{\j-1} \acc{\ind} \Deltak^2 
  \eea
  \edited{Now we note that in the expanded sum in lines~\eqref{eq:muy1}-\eqref{eq:muy4} the term $\acc{\k}$ appears $\j-\k-1$ times, 
  the term $\acc{\k+1}$ appears $\j-\k-2$ times, and so on, till the term  $\acc{\j-2}$ which appears just once.  
  Therefore, we can write lines~\eqref{eq:muy1}-\eqref{eq:muy4} 
  succinctly as a single sum, leading to:}
  \bea
 \pos{\j} &\!\!=\!\!& \textstyle  \pos{\k} + (\editedTwo{\j\!-\!k\!}) \vel{\k} \Deltak +  \sumIndTwo (\j\!-\!\ind\!-\!1) \acc{\ind} \Deltak^2  
 \textstyle +\half\sum_{\ind = \k}^{\j-1} \acc{\ind} \Deltak^2 
  \nonumber \\
    &&  \grayText{(putting last two terms together)}\nonumber \\
&\!\!=\!\!& \textstyle \pos{\k} + (\editedTwo{\j\!-\!k\!}) \vel{\k} \Deltak + \half \acc{\j-1} \Deltak^2  
  \textstyle  +  \sumIndTwo (\j\!-\!\ind\!-\!\half) \acc{\ind} \Deltak^2 
  \nonumber \\
    &&  \grayText{(simplifying)}\nonumber \\
&\!\!=\!\!&\textstyle \pos{\k} + (\editedTwo{\j\!-\!k\!}) \vel{\k} \Deltak + \sumInd (\j\!-\!\ind\!-\!\half) \acc{\ind} \Deltak^2 
  \nonumber
\eea
Defining $\Deltaij \doteq (\editedTwo{\j\!-\!k\!})  \Deltak$ and
substituting $\acc{\k}$ from~\eqref{eq:accModel}:
\smaller
\bea
\label{eq:app_integration}
 \vel{\j} &=& \textstyle  \vel{\k} + \sum_{\ind = \k}^{\j-1} \left(  \Rot{\ind} ( \measAcc{\ind}\!-\! \biasAcc{\k} \!-\! \noiseAcc{\ind}) + \gravity \right)  \Deltak
  \nonumber \\
   &=& \textstyle  \vel{\k} + \gravity \Deltaij - ( \sum_{\ind = \k}^{\j-1} \Rot{\ind} \Deltak) \biasAcc{\k}  + \sum_{\ind = \k}^{\j-1} \edited{\Rot{\ind}} \measAcc{\ind}  \Deltak
    \nonumber \\
   && \textstyle- \sum_{\ind = \k}^{\j-1} \edited{\Rot{\ind}} \noiseAcc{\ind} \Deltak
  \nonumber \\
  \pos{\j} &=& \textstyle \pos{\k} + \vel{\k} \Deltaij +  \sumInd (\j\!-\!\ind\!-\!\half) \left(  \Rot{\ind} ( \measAcc{\ind}\!-\! \biasAcc{\k} \!-\! \noiseAcc{\ind}) + \gravity \right)  \Deltak^2 
  \nonumber \\
  &=& \textstyle \pos{\k} + \vel{\k} \Deltaij +  \sumInd (\j\!-\!\ind\!-\!\half)  \gravity   \Deltak^2 
  \nonumber \\
  && \textstyle
   -  (\sumInd (\j\!-\!\ind\!-\!\half)  \Rot{\ind} \Deltak^2)  \biasAcc{\k} 
  +  \sumInd (\j\!-\!\ind\!-\!\half)  \Rot{\ind} ( \measAcc{\ind}\!-\! \noiseAcc{\ind})  \Deltak^2 
  \nonumber \\
\eea

\normalsize
Let us now define the following vectors:
\bea
\meas^\V_{\k\j} &\doteq& \textstyle\gravity \Deltaij  + \sum_{\ind = \k}^{\j-1} \edited{\Rot{\ind}} \measAcc{\ind}  \Deltak 
\nonumber \\
\noise^\V_{\k\j} &\doteq& \textstyle \sum_{\ind = \k}^{\j-1} \edited{\Rot{\ind}} \noiseAcc{\ind} \Deltak 
\nonumber \\
\meas^\P_{\k\j} &\doteq& \textstyle  \sumInd (\j\!-\!\ind\!-\!\half)  \gravity   \Deltak^2    
\nonumber \\
&& \textstyle  +  \sumInd (\j\!-\!\ind\!-\!\half)  \Rot{\ind} \measAcc{\ind} \Deltak^2 
\nonumber \\
\noise^\P_{\k\j} &\doteq& \textstyle  + \sumInd (\j\!-\!\ind\!-\!\half)  \Rot{\ind} \noiseAcc{\ind}  \Deltak^2 
\eea
Using this notation we rewrite eq.~\eqref{eq:app_integration} (putting position first) and adding the 
random walk random model on the bias: 
\bea
\label{eq:app_integration2}
  \pos{\j} \!\!&=&\!\! \textstyle \pos{\k} + \vel{\k} \Deltaij - \left( \sumInd (\j\!-\!\ind\!-\!\half)  \Rot{\ind} \Deltak^2) \right) \biasAcc{\k}
  \textstyle  + \meas^\P_{\k\j} - \noise^\P_{\k\j}
  \nonumber \\
 \vel{\j} \!\!&=&\!\! \textstyle  \vel{\k} - ( \sum_{\ind = \k}^{\j-1} \Rot{\ind} \Deltak) \biasAcc{\k}  + \meas^\V_{\k\j} - \noise^\V_{\k\j}
  \nonumber \\
 \biasAcc{\j} \!\!&=&\!\! \textstyle \biasAcc{\k} - \noise^\B_{\k\j}
\eea
In order to write~\eqref{eq:app_integration2} in compact matrix form, we define:
\bea
\MN_{\k\j} &\doteq& \textstyle\sumInd (\j\!-\!\ind\!-\!\half)  \Rot{\ind} \Deltak^2 \\ 
\MM_{\k\j} &\doteq& \textstyle \sum_{\ind = \k}^{\j-1} \Rot{\ind} \Deltak 
\eea
which allows rewriting~\eqref{eq:app_integration2} succinctly as:
\bea
\label{eq:measModel1_app}
\meas^\P_{\k\j}  &=&   \textstyle \pos{\j} - \pos{\k} - \vel{\k} \Deltaij + \MN_{\k\j}  \biasAcc{\k} + \noise^\P_{\k\j} 
  \nonumber \\
  \meas^\V_{\k\j} &=&   \textstyle \vel{\j} -  \vel{\k} +\MM_{\k\j} \biasAcc{\k} + \noise^\V_{\k\j} 
  \nonumber \\
  \meas^\B_{\k\j} &=& \textstyle \biasAcc{\j}  - \biasAcc{\k} + \noise^\B_{\k\j}
\eea
where \edited{$\meas^\B_{\k\j} = \zero_3$} is the expected change in the bias.

Let us now define the following matrices and vectors:

\small
\bea
\label{eq:matsDefs}
\MA_{{\k\j}} \!\!&=&\!\!
\left[
\begin{array}{c | c | ccc | c | c | c }
                   &          &   -\eye_3  & -\eye_3 \Deltaij & \MN_{\k\j} &          &                      &          \\
\zero_{9 \times 9} &  \ldots  &   \zero    & -\eye_3          & \MM_{\k\j} &  \eye_9  &  \zero_{9 \times 9}  &   \ldots  \\
                   &          &   \zero    & \zero            &   -\eye_3  &          &                      & 
\end{array}
\right]
\nonumber \\
\measImu{\k\j} \!\!&=&\!\!
\left[
\begin{array}{c}
\meas^\P_{\k\j} \\ \meas^\V_{\k\j} \\ \meas^\B_{\k\j}
\end{array}
\right]
\qquad \qquad
\noiseImu{\k\j} =
\left[
\begin{array}{c}
\noise^\P_{\k\j} \\ \noise^\V_{\k\j} \\ \noise^\B_{\k\j}
\end{array}
\right]
\eea

\normalsize
\noindent
Using~\eqref{eq:matsDefs}, we finally rewrite 
our model~\eqref{eq:measModel1_app}  as:
\bea
\label{eq:measModelLin_app}
\measImu{\k\j} = \MA_{{\k\j}} \vxx_{\k:\k+\hor} + \noiseImu{\k\j}
\eea
To fully characterize the linear measurement model~\eqref{eq:measModelLin_app} 
we only have to compute the covariance of the noise
$\noiseImu{\k\j}$, which is given by:
\bea
\cov(\noiseImu{\k\j}) = 
\left[
\begin{array}{cc}
\sigmaImu^2 \MC \MC\tran    & \zero_{6 \times 3} \\ 
 \zero_{3 \times 6} &  \cov(\noise^\B_{\k\j})
\end{array}
\right]
\eea
where $\MC$ includes the coefficient matrices of the noise in ~\eqref{eq:app_integration2}:
\smaller
\bea
\MC  = 
\left[
\begin{array}{cccccccc}
(\j\!-\!\k\!-\!\frac{1}{2})  
\Rot{k} \Deltak^2 &    (\j\!-\!\k\!-\!\frac{3}{2}) \Rot{k+1} \Deltak^2     & \ldots &  \half \Rot{j-1} \Deltak^2
\\
\Rot{k} \Deltak         & \Rot{k+1} \Deltak & \ldots &  \Rot{j-1} \Deltak
\end{array}
\right] \nonumber
\eea

\normalsize
Using the fact that any rotation matrix satisfies $\Rot{k}\tran \Rot{k} = \eye_3$, the matrix $\MC \MC\tran$ 
can be computed simply as:
\smaller
\bea
\MC \MC\tran = 
\left[
\begin{array}{cc}
 \left( \sumInd (\j\!-\!\ind\!-\!\half)^2 \right) \Deltak^4 \eye_3 &            
 \left( \sumInd (\j\!-\!\ind\!-\!\half) \right) \Deltak^3 \eye_3
         \\
  \left( \sumInd (\j\!-\!\ind\!-\!\half) \right) \Deltak^3 \eye_3
                      & (\j\!-\!\k\!-\!1) \Deltak^2 \eye_3 
\end{array}
\right]. \nonumber
\eea

\normalsize


\subsection{Proof of~\prettyref{prop:longerFeatureTracks}}
\label{app:proof:prop:longerFeatureTracks}

The information matrix of the joint state~\eqref{eq:infoMatrixPoint} is additive in the measurements, 
hence the information matrices of the joint states $[\vxx_{\k:\k+\hor} \; \p_{\l_1}]$ 
and $[\vxx_{\k:\k+\hor} \; \p_{\l_2}]$ given the predicted visual measurements 
to landmarks $\l_1$ and $\l_2$ can be respectively written as:
\bea
\MOmega^{(\l_1)}_{\k:\hor} = \sum_{\tau = k}^{k_1} \MOmega^{(\l_1)}_{\tau}
\qquad 
\MOmega^{(\l_2)}_{\k:\hor} = \sum_{\tau = k}^{k_2} \MOmega^{(\l_2)}_{\tau}
\eea
where $\MOmega^{(\l_1)}_{\tau}$ (resp. $\MOmega^{(\l_2)}_{\tau}$) is the contribution to the information matrix corresponding to the measurement of landmark 1 (resp. 2) at time $\tau$. Since the proposition assumes that the future measurements are identical, it follows $\sum_{\tau = k}^{k_1} \MOmega^{(\l_1)}_{\tau} = \sum_{\tau = k}^{k_1} \MOmega^{(\l_2)}_{\tau}$; this, combined with the fact that $k_2 > k_1$ implies:
\bea
\MOmega^{(\l_2)}_{\k:\hor} = \MOmega^{(\l_1)}_{\k:\hor} + \sum_{\tau = k_1 + 1}^{k_2} \MOmega^{(\l_2)}_{\tau} \succeq  \MOmega^{(\l_1)}_{\k:\hor} 
\eea
Now we only have to prove that the Schur complement preserves the ordering $\MOmega^{(\l_2)}_{\k:\hor} \succeq  \MOmega^{(\l_1)}_{\k:\hor}$, 
since $\Delta_{\l_1}$ and $\Delta_{\l_2}$ are simply the Schur complements of $\MOmega^{(\l_1)}_{\k:\hor}$ and $\MOmega^{(\l_2)}_{\k:\hor}$, respectively.
For this purpose, we first  observe that
\bea
\label{eq:invertInequality}
\MOmega^{(\l_2)}_{\k:\hor} \succeq  \MOmega^{(\l_1)}_{\k:\hor} \Longrightarrow 
\left( \MOmega^{(\l_2)}_{\k:\hor} \right)\inv \preceq  \left( \MOmega^{(\l_1)}_{\k:\hor} \right)\inv
\eea
Moreover, we make explicit the block structure of the two matrices as follows: 
\bea
\label{eq:blocks}
\MOmega^{(\l_1)}_{\k:\hor} \doteq \matTwo{\MA_1 & \MB_1 \\ \MB_1\tran & \MC_1}  \qquad
\MOmega^{(\l_2)}_{\k:\hor} \doteq \matTwo{\MA_2 & \MB_2 \\ \MB_2\tran & \MC_2} 
\eea
where the upper-left blocks ($\MA_1$ and $\MA_2$) correspond to entries of the information matrix associated to the states $\vxx_{\k:\k+\hor}$ 
and the bottom-right 
blocks ($\MC_1$ and $\MC_2$) correspond to the landmark states we want to marginalize.

Now we note that using standard block inversion for the block matrix $\MOmega^{(\l_1)}_{\k:\hor}$ we obtain:
\bea
\label{eq:blockInverse}
\left( \MOmega^{(\l_1)}_{\k:\hor} \right)\inv = \matTwo{ (\MA_1  - \MB_1 \editedTwo{\MC_1\inv} \MB_1\tran)\inv  & \star \\ \star & \star}
\eea
where we denoted with ``$\star$'' blocks which are irrelevant for the following derivation.
Combining the inequality~\eqref{eq:invertInequality} with the block inverse~\eqref{eq:blockInverse} we get:
 
 \vspace{-0.3cm}
 \smaller
 \bea \label{eq:block11}
 \editedTwo{
 \matTwo{ (\MA_2  - \MB_2 \editedTwo{\MC_2\inv} \MB_2\tran)\inv  & \star \\ \star & \star}
\preceq 
\matTwo{ (\MA_1  - \MB_1 \editedTwo{\MC_1\inv} \MB_1\tran)\inv  & \star \\ \star & \star} 
}
\eea

\normalsize
\noindent
\editedTwo{Since diagonal blocks of positive semidefinite matrices are also semidefinite, eq.~\eqref{eq:block11}
 implies $ (\MA_2  - \MB_2 \MC_2\inv \MB_2\tran)\inv \preceq (\MA_1  - \MB_1 \MC_1\inv \MB_1\tran)\inv$} 
hence:
 \bea
 \label{eq:SchurPreservesOrder}
 \editedTwo{
 (\MA_2  - \MB_2 \MC_2\inv \MB_2\tran) \succeq
 (\MA_1  - \MB_1 \MC_1\inv \MB_1\tran).
 }
 \eea 
Comparing the block structure in~\eqref{eq:blocks} with the description in eqs.~\eqref{eq:infoMatrixPoint}-\eqref{eq:infoMatrixVision}, 
we realize that $(\MA_1  - \MB_1 \editedTwo{\MC_1\inv} \MB_1\tran)  = \Delta_{\l_1}$ and $(\MA_2  - \MB_2 \editedTwo{\MC_2\inv} \MB_2\tran)  = \Delta_{\l_2}$ 
hence~\eqref{eq:SchurPreservesOrder} implies $\Delta_{\l_1} \preceq \Delta_{\l_2}$ concluding the proof.

\subsection{Proof of Corollary~\ref{prop:boundEigMin}}
\label{app:proof:prop:boundEigMin}

The proof relies on the inequality~\eqref{eq:ipsen} for $i$ chosen to be the smallest eigenvalue.
From the Weyl inequality~\cite{Ipsen09siam-eigenvaluePerturbation}, it follows  $\lambda_j(\MM+\Delta) \geq \lambdaMin(\MM)$, 
for all $j$. Using this fact, it follows that the minimum in~\eqref{eq:ipsen} is attained by $\lambdaMin(\MM+\Delta)$.
Therefore, the inequality~\eqref{eq:ipsen}  becomes:
 \bea
 \label{eq:eigBound1}
|\lambdaMin(\MM) - \lambdaMin(\MM+\Delta)| \leq \|\Delta \vv_\min \| 
\eea
From the positive definiteness of $\MM$ and $\Delta$ (which implies $\lambdaMin(\MM) \geq 0$ and $\lambdaMin(\MM+\Delta) \geq 0$),
and from the Weyl inequality, it follows  $|\lambdaMin(\MM) - \lambdaMin(\MM+\Delta)| = \lambdaMin(\MM+\Delta) - \lambdaMin(\MM)$, 
which substituted in~\eqref{eq:eigBound1} leads to~\eqref{eq:boundEigMin}. 

\subsection{Proof of~\prettyref{prop:nonVanishingRatio}}
\label{app:proof:prop:nonVanishingRatio}

In order to show that the submodularity ratio~\eqref{eq:subRatio} does not vanish, we show that 
its numerator is bounded away from zero. 
To do so, we consider a single summand in~\eqref{eq:subRatio}:
\bea
\label{eq:num1}
(a) \doteq f(\setL \cup \{e\}) - f(\setL)  = 
\lambdaMin(\MOmega_\setL + \Delta_e) - \lambdaMin(\MOmega_\setL)
\eea
where $\MOmega_\setL \doteq \barMOmega_{\k:\k+\hor} + \sum_{\l \in \setL} \Delta_\l$.
Our task is to prove that~\eqref{eq:num1} is different from zero.
To do so, we substitute the eigenvalue with its definition through the Rayleigh quotient:
\beal
(a) = \min_{\|\vmu\|=1} \vmu\tran (\MOmega_\setL + \Delta_e) \vmu - 
\min_{\|\vnu\|=1}\vnu\tran \edited{(\MOmega_\setL)} \vnu \nonumber \\
\grayText{(calling $\vmubar$ the minimizer of the first summand)} \nonumber \\
= \vmubar\tran (\MOmega_\setL + \Delta_e) \vmubar - 
\min_{\|\vnu\|=1}\vnu\tran (\MOmega_\setL) \vnu \nonumber \\
\grayText{(\edited{since $\vnu = \vmubar$ is suboptimal for the second summand})} \nonumber \\
\geq \vmubar\tran (\MOmega_\setL + \Delta_e) \vmubar - 
\vmubar\tran  (\MOmega_\setL) \vmubar \nonumber \\
\grayText{(simplifying and substituting the expression of $\Delta_e$ from~\eqref{eq:infoMatrixVision})} \nonumber \\
= \vmubar\tran \Delta_e \vmubar = 
\vmubar\tran  \MF_e\tran (\eye - \ME_e (\ME_e\tran \ME_e)\inv  \ME_e\tran) \MF_e \vmubar
\nonumber \\
\grayText{(defining the idempotent matrix $\MQ_e \doteq (\eye - \ME_e (\ME_e\tran \ME_e)\inv  \ME_e\tran)$)} \nonumber \\
= \vmubar\tran  \MF_e\tran \MQ_e\MF_e \vmubar  = 
\vmubar\tran  \MF_e\tran \MQ_e \MQ_e \MF_e \vmubar  =
 \| \MQ_e \MF_e \vmubar \|^2  \nonumber 
 \label{eq:nonVanishingProof}
\eeal
Now we write $\ME_e$ in terms of its $3\times 3$ blocks: 
\bea
\label{eq:Femu}
\ME_e = 
\vect{
 \ME_{e0}
\\ 
\ME_{e1} 
\\
\vdots 
\\
\ME_{e\hor}
}
\eea
Moreover,  we recall that $\MF_e$ has the following block structure:
\bea
\label{eq:Femu2}
\MF_e = 
\left[
\begin{array}{ccc|ccc|ccc|c}
 - \ME_{e0}  & \myZ & \myZ       & \myZ & \myZ & \myZ     &\myZ & \myZ & \myZ       & \cdots \\
\myZ & \myZ & \myZ               & \editedTwo{- \ME_{e1}}  & \myZ  & \myZ      & \myZ & \myZ & \myZ   & \cdots  \\ 
\myZ & \myZ & \myZ               & \myZ & \myZ & \myZ      & \editedTwo{- \ME_{e2}} & \myZ &   \myZ   & \cdots \\ 
\vdots & \vdots & \vdots         & \vdots & \vdots & \vdots   & \vdots & \vdots & \vdots & \ddots \
\end{array}
\right] \nonumber
\eea
where we noted that the nonzero blocks in $\MF_e$ are the same (up-to-sign) as the ones in $\ME_e$ 
(\emph{c.f.} the coefficient matrices in~\eqref{eq:visionModel2}).
It follows that $\MF_e \vmubar$ can be written explicitly as:
\bea
\label{eq:Femu3}
\MF_e \vmubar = 
\vect{
-\ME_{e0} \vmubar_0 
\\ 
- \ME_{e1}  \vmubar_1
\\
\vdots 
\\
-\ME_{e\hor} \vmubar_\hor
}
\eea
Now we observe that $\MQ_e$ is an orthogonal projector onto the null space of 
$\ME_e$, and the null space of $\MQ_e$ is spanned by the columns of $\ME_e$. 
Therefore, any vector $\vv$ that falls in the null space of $\MQ_e$ can be 
written as a linear combination of the columns of $\ME_e$:
\bea
\MQ_e \vv = \zero \Leftrightarrow  \vv =  \ME_e \vw
\eea
with $\vw \in \Real{3}$. By comparison with~\eqref{eq:Femu}, we note that  
$\MF_e \vmubar$ can be written as $\ME_e \vw$ if and only if $\vmubar_1 = \vmubar_2 = \ldots = \vmubar_\hor$.
Therefore, if $\vmubar_i \neq \vmubar_j$ for some $i, j$, then 
the vector $\MF_e \vmubar$ cannot be in the null space of $\MQ_e$, and the lower bound~\eqref{eq:nonVanishingProof} 
must be greater than zero, concluding the proof.

\bibliographystyle{IEEEtran}
\bibliography{refs,myRefs}

\end{document}